\documentclass[10pt]{article} 
\usepackage[preprint]{tmlr}

\usepackage{hyperref}
\usepackage{url}
\usepackage{microtype}
\usepackage{graphicx}
\usepackage{subcaption}
\usepackage{booktabs} 
\usepackage{multirow}
\usepackage{adjustbox}
\usepackage{amsmath}
\usepackage{amssymb}
\usepackage{mathtools}
\usepackage{amsthm}

\title{ConquerNet: Convolution-Smoothed Quantile ReLU Neural Networks with Minimax Guarantees}

\author{\name Tianpai Luo \email ltp21@mails.tsinghua.edu.cn \\
      \addr Department of Statistics and Data Science\\
            Tsinghua University\\
            Beijing, 100084, China
      \AND
      \name Fangwei Wu \email wfw19@mails.tsinghua.edu.cn \\
      \addr Department of Statistics and Data Science\\
            Tsinghua University\\
            Beijing, 100084, China
      \AND
      \name Weichi Wu \email wuweichi@tsinghua.edu.cn\\
      \addr Department of Statistics and Data Science\\
            Tsinghua University\\
            Beijing, 100084, China}

\newcommand{\mf}{\mathbf}
\newcommand{\mr}{\mathrm}
\newcommand{\mb}{\mathbb}

\newtheorem{theorem}{Theorem}[section]

\newtheorem{lemma}[theorem]{Lemma}

\newtheorem{definition}{Definition}[section]

\newtheorem{remark}{Remark}[section]

\def\month{MM}  
\def\year{YYYY} 
\def\openreview{\url{https://openreview.net/forum?id=XXXX}} 

\begin{document}

\maketitle

\begin{abstract}
Quantile regression is a fundamental tool for distributional learning but poses significant optimization challenges for deep models due to the non-smoothness of the pinball loss. We propose ConquerNet, a class of \textbf{con}volution-smoothed \textbf{qu}antil\textbf{e} \textbf{R}eLU neural \textbf{net}works, which yield smooth objectives while preserving the underlying quantile structure. We establish general nonasymptotic risk bounds for ConquerNet under mild conditions, providing minimax guarantees over Besov function classes. In numerical studies, we demonstrate that the proposed approach outperforms standard quantile neural networks at multiple quantile levels, showing improved estimation accuracy and training efficiency across the board, with particularly pronounced advantages at high and low quantiles.
\end{abstract}

\section{Introduction}
\label{sec:introduction}
Quantile regression is a widely considered statistical tool for modeling heterogeneous effects and capturing the distributional structure of responses beyond the conditional mean. In many fields such as quantitative finance, survival analysis, and econometrics \citep{baur2019quantile,horowitz1998bootstrap,chernozhukov2005iv}, quantile regression is used to understand the tail behaviors and provide robust outcomes faced with skewed or heavy-tailed data. 
Formally, given quantile level $\tau\in(0,1)$ and i.i.d. samples $\{(\mf x_i,y_i)\}_{i=1}^n$ from the random vector $(X,Y)$ where $X\in\mb R^d, Y\in\mb R$, the conditional $\tau$-quantile function is defined as 
\begin{equation}
f_\tau^*(\mf x_i)= F^{-1}_{y_i|\mf x_i}(\tau)=\inf\{y: \mr P(y_i\leq y|\mf x_i)\geq \tau \}.\label{eq:def f_tau}
\end{equation}
The standard approach estimates the conditional quantile function $f_\tau^*$ by minimizing the empirical quantile loss
\begin{equation}
    \hat{f}_\tau = \mathop{\arg\min}_f\sum_{i=1}^n \rho_\tau(y_i-f(\mf x_i)),\label{eq:quantile loss}
\end{equation}
where $\rho_\tau(u) = \max\{\tau u,(\tau-1)u\}$. We refer to \citet{koenker2017handbook} for an overview of quantile regression models.
Although the estimator in \eqref{eq:quantile loss} is theoretically well-founded, its loss function is non-differentiable. In models with a large parameter space, optimization becomes challenging so that efficient training is difficult. The neural network is a class of such models: even with a fixed input dimension $d$, the number of parameters can be very large. The flexibility of neural networks enables approximation of complex functions and, in theory, achieving minimax-optimal performance over Besov spaces \citep{Suzuki2018AdaptivityOD}. 
The success of neural networks depends heavily on generalization. Gradient-based optimizers such as stochastic gradient descent (SGD) often generalize well \citep{wu2022alignment,dziugaite2017computing}, but sharp minima \citep{hochreiter1997flat} and non-smooth loss surfaces \citep{huang2020understanding, foret2021sharpnessaware} can degrade performance, especially with large-batch training \citep{keskar2016large}. 
This is because the high sensitivity to parameter changes caused by the sharp minima reduces the stability of optimization and model performance, and thus 
weakens generalization.
Unfortunately, the quantile loss $\rho_{\tau}(u)$ is non-differentiable at $u=0$ forming a sharp minima, and its gradient jumps abruptly from $\tau$ to $\tau-1$ nearby. 

To address the difficulties brought by the sharp minima and non-smooth loss functions, smoothing techniques have been developed, as seen in \citet{berrada2018smooth}, which utilizes a smoothed loss function for classification tasks. For the quantile loss function, the convolution-type smoothed quantile loss \citep{fernandes2021smoothing} is particularly attractive, as it preserves both convexity and differentiability while retaining the robustness of quantile regression. Existing applications of convolution-type smoothing have been largely confined to linear models, and it remains unclear whether its advantages carry over to more flexible function classes. 
In this paper, we propose ConquerNet, a novel framework that integrates convolution-type smoothing with neural networks. We demonstrate that this architecture mitigates optimization difficulties, preserves statistical guarantees, and leverages the expressive capacity of deep learning models.

\subsection{Related work}
Our work builds upon smoothed quantile regression, neural networks, and minimax analysis in Besov spaces. For smoothed quantile regression, \cite{horowitz1998bootstrap} introduced a kernel-based smoothing approach that alleviates non-differentiability but sacrifices convexity, leading to challenging optimization problems. More recently, \cite{fernandes2021smoothing} proposed the convolution smoothing method, 
which preserves both convexity and differentiability while gaining smoothness. The convolution-type smoothing framework has since been widely applied in quantile regression, and subsequent works have established its strong statistical guarantees, including minimax optimality and asymptotic as well as nonasymptotic properties; see \cite{kaplan2017smoothed}, \cite{tan2022high}, and \cite{he2023smoothed}. Apart from parametric models, \cite{hu2025estimation} proposed a local linear convolution-type smoothing estimator for time-varying coefficient models. However, the convolution-type smoothing estimator has primarily been developed in linear models, with limited exploration in nonlinear nonparametric settings. To the best of our knowledge, nonlinear extensions of the convolution-type smoothing estimator remain underdeveloped, particularly in the context of modern neural networks. This motivates our focus, since \cite{Suzuki2018AdaptivityOD} showed that deep neural networks can achieve minimax rate in Besov spaces where classical linear estimators such as kernel ridge regression, Nadaraya–Watson, or sieve methods are suboptimal.

Alongside these developments, a growing body of work has applied neural networks to quantile regression. Quantile networks have found applications in credit portfolio analysis, transportation, and survival analysis; see \cite{feng2010robust}, \cite{ rodrigues2020beyond}, and \cite{pearce2022censored}. On the theoretical side, statistical guarantees for quantile networks have been investigated, such as error bounds and minimax optimality; see \cite{padilla2022quantile}, \cite{shen2024nonparametric}, and \cite{shen2025deep}. Especially for minimax rates, \cite{padilla2022quantile} established that ReLU networks achieve near-minimax rates for quantile regression in Besov spaces. More recent work has also considered settings with covariate shift and noncrossing constraints \citep{feng2024deep, shen2025deep}, but these results remain restricted to H\"older classes. Together, this literature suggests that bridging smoothed quantile regression with the expressive power of neural networks is a promising and unexplored direction.

\subsection{Contributions}
In this paper, we demonstrate that the convolution-type smoothing quantile regression technique can be effectively integrated with neural networks to advance quantile regression. {Our study contributes to distributional learning through smoothed quantile objectives, which naturally relate to quantile-based modeling and theoretical guarantees—areas of growing interest within the deep learning community \citep{Suzuki2018AdaptivityOD,sun2022conditional, nishimura2024minimax, kelen2025distributionfree}.} The main contributions are as follows:
\begin{itemize}
    \item[(i)] On the theoretical side, we first prove that the proposed ConquerNet estimator attains the minimax convergence rate over Besov spaces, up to a logarithmic factor. In addition, we establish general nonasymptotic error bounds that apply without assuming specific smoothness conditions on the target function. These results demonstrate that the combination of convolution-type smoothing and neural networks retains the desirable statistical guarantees of nonparametric quantile estimation while leveraging the expressive capacity of deep learning models.
    \item[(ii)] On the methodological and empirical side, we propose the ConquerNet architecture, which extends the convolution-type smoothing framework from linear nonparametric models to deep neural networks, thereby enabling its application in highly nonlinear nonparametric settings. Simulation studies are conducted to assess the performance of the proposed method, and the results show consistent improvements over existing quantile networks in terms of both estimation accuracy and computational efficiency. Together with the theoretical findings, these empirical results highlight the effectiveness of applying convolution-type smoothing to modern neural network architectures.
\end{itemize}

The rest of the article is organized as follows. 
We introduce the convolution-type smoothing framework with ReLU networks in Section \ref{sec:preliminaries}. In Section \ref{sec:theory}, we present the minimax rate for our estimation in Besov spaces and further develop general upper bounds. Simulation studies and real data analysis are conducted in Section \ref{sec:empirical study}. 
Finally, Section~\ref{sec:discussion} concludes with a discussion of potential extensions and future research directions.
All proofs and additional simulation results are given in the Appendix.

\section{Preliminaries}
\label{sec:preliminaries}
Before starting our main result, we specify some notations regarding the ReLU neural networks with layer parameter $L$, node parameter $W$, sparsity constraint $S$, and norm constraint $B$. In specific, we define
\begin{align}
\mathcal{I}(L, W, S, B):= & \left\{\left(A^{(L)} \sigma(\cdot)+b^{(L)}\right) \circ \cdots \circ\left(A^{(1)} x+b^{(1)}\right): A^{(1)} \in \mathbb{R}^{W \times d}, b^{(1)} \in \mathbb{R}^d,\right.\notag\\
&\left. A^{(L)} \in \mathbb{R}^{1 \times W}, b^{(L)} \in \mathbb{R},
A^{(l)} \in \mathbb{R}^{W \times W}, b^{(l)} \in \mathbb{R}^W(1<l<L),\right. \notag\\
& \left.\sum_{l=1}^L\left(\left\|A^{(l)}\right\|_0+\left\|b^{(l)}\right\|_0\right) \leq S, \max_l  \left(\left\|A^{(l)}\right\|_{\infty}\bigvee\left\|b^{(l)}\right\|_{\infty}\right) \leq B\right\} ,\label{def:sparse dnn}
\end{align}
as the class of sparse networks with ReLU activation $\sigma(x)=\max\{x,0\}$, where $\circ$ denotes the composition of functions, $\| A\|_0$ denotes the number of non-zero elements of the matrix $A$, and $\|A\|_\infty$ denotes the maximum of the absolute values of the elements in matrix $A$. {In our paper, we also define $\infty$-norm for function $f(\cdot)$ on the compact domain $\mathcal{X}$, $\|f\|_\infty=\sup_{\mf x\in\mathcal{X}}|f(\mf x)|$.} The sparse network \eqref{def:sparse dnn} has been widely investigated, by for exmaple \cite{Suzuki2018AdaptivityOD}, \cite{schmidt2020nonparametric}, and \cite{padilla2022quantile}. Based on $\mathcal{I}(L,W,S,B)$, our ConquerNet $\hat{f}_h$ is obtained from the ReLU networks by minimizing the convolution-type smoothed quantile loss for $\tau\in(0,1)$, i.e.,
\begin{equation}\label{eq:estimator besov}
  \begin{aligned}
    &\hat{f}_h:=\mathop{\arg\min}\limits_{f\in\mathcal{I}(L,W,S,B),\|f\|_\infty\leq F }\sum_{i=1}^n \ell_h\left(y_i-f(\mf x_i)\right),\\
    &\ell_h(u):=\int_{-\infty}^{\infty} \rho_\tau(v) K_h(v-u) \mathrm{d} v,
  \end{aligned}
\end{equation}
where $K_h(x)=K(x/h)/h$ with bandwidth $h>0$ and $F>0$ is a sufficiently large constant providing technical convenience. The kernel function $K(u)$ is required to be a bounded and nonnegative density function such that $\int uK(u)\mr du=0$, $\int K(u)\mr du=1$, and $\int u^2 K(u)\mr du=\sigma_K^2$ where $\sigma_K^2>0$ is a constant. $K(u)$ can be chosen from commonly used kernel functions including: uniform kernel $K(u)=(1 / 2) \mathbf{1}(|u| \leq 1)$; Gaussian kernel $K(u)=(2 \pi)^{-1 / 2} e^{-u^2 / 2}$; Epanechnikov kernel $K(u)=(3 / 4)\left(1-u^2\right) \mathbf{1}(|u| \leq 1)$, etc. 

For a given neural network  $f\in\mathcal{I}(L,W,S,B),\|f\|_{\infty}\leq F$, we define the $\|\cdot\|_\infty$-projection of $f_\tau^*$ onto $\mathcal{I}(L,W,S,B)$ as
$$
f_n:= \mathop{\arg\min}\limits_{f\in\mathcal{I}(L,W,S,B),\|f\|_\infty\leq F }\|f-f_\tau^*\|_\infty,
$$
where $f_\tau^*$ is assumed to be a function that belongs to Besov spaces defined below. {Note that the architectural parameters $(L,W,S,B)$ are usually chosen as functions of the sample size $n$ (see Theorem~\ref{thm:minimax rate}), and therefore the projection $f_n$ inherits this dependence through the network class $\mathcal{I}(L,W,S,B)$.}

\begin{definition}[Besov space]
\label{def:besov}
For a function $f \in L^p(\mathcal{X})$ and $p \in(0, \infty]$, denote the $r$-modulus of continuity as
$$
w_{r, p}(f, t)=\sup _{\|u\|_2 \leq t}\left\|R_u^r(f)\right\|_p
$$
where
$$
R_u^r(f)= \begin{cases}\sum_{j=0}^r \frac{r!}{j!(r-j)!}(-1)^{r-j} f(x+u j) &, \text { if } x \in \mathcal{X}, x+r u \in \mathcal{X} \\ 0 &, \text { otherwise.}\end{cases}
$$
For $q\in(0,\infty]$ and $\alpha>0, r=\lfloor\alpha\rfloor+1$, we define the Besov space $B_{p, q}^\alpha(\mathcal{X})$ as
$$
B_{p, q}^\alpha(\mathcal{X})=\left\{f \in L^p(\mathcal{X}):\|f\|_{B_{p, q}^\alpha(\mathcal{X})}<\infty\right\}
$$
where $\|f\|_{B_{p, q}^\alpha(\mathcal{X})}=\|f\|_p+|f|_{B_{p, q}^\alpha(\mathcal{X})}$ and
$$
|f|_{B_{p, q}^\alpha(\mathcal{X})}= \begin{cases}\left(\int_0^{\infty}\left(t^{-\alpha} w_{r, p}(f, t)\right)^q t^{-1} \mathrm{d} t\right)^{\frac{1}{q}} & \text { if } q<\infty, \\ \sup _{t>0} t^{-\alpha} w_{r, p}(f, t) & \text { if } q=\infty.\end{cases}
$$
\end{definition}

Throughout this paper, we write $\lceil x\rceil$ for the smallest integer greater than or equal to $x$. For any two positive real sequences $a_n$ and $b_n$, we write $a_n \asymp b_n$ if there exist constants $0<c<C<\infty$ such that $c \leq \liminf _{n \rightarrow \infty} a_n / b_n \leq$ $\limsup _{n \rightarrow \infty} a_n / b_n \leq C$. We write $a_n \lesssim$ $b_n\left(a_n \gtrsim b_n\right)$ if there exists constant $C>0$ such that $a_n \leq C b_n\left(C a_n \geq\right.$ $\left.b_n\right)$ for all $n$. 

 We introduce the performance metric in risk and empirical loss norms. For bounded functions $f$ and $g$, we define 
$$
\Delta_n^2(f,g):=\frac{1}{n}\sum_{i=1}^n D^2(f(\mf x_i)-g(\mf x_i)),
$$
where $D^2(t)=\min \left\{|t|, t^2\right\}$. Furthermore, we define $\Delta^2(f, g):=\mb{E}\left(D^2(f(X)-g(X))\right)$, $\Delta(f, g):=\sqrt{\Delta^2(f, g)}$, 
$\|f-g\|_{\ell_2}:=\sqrt{\mathbb{E}\left((f(X)-g(X))^2\right)}$, and
$$
\|f-g\|_n^2:=\frac{1}{n} \sum_{i=1}^n\left(f\left(\mf x_i\right)-g\left(\mf x_i\right)\right)^2.
$$

\section{Theory}
\label{sec:theory}
In this section, we provide statistical guarantees for our ConquerNet. 
Firstly, we evaluate how well our methodology can estimate the functions in Besov spaces. 
Our main result in Theorem~\ref{thm:minimax rate} shows that our ConquerNet achieves the minimax error rate with only an additional logarithmic factor.
Secondly, we develop a general risk bound on our estimator with respect to an arbitrary architecture of the neural networks, assuming the true quantile function $f^*_\tau(\cdot)$ in a space more general than the Besov spaces, which imposes no smoothness conditions. {This general risk bound accommodates a broad class of network architectures and modeling objectives, 
indicating that our theoretical development extends beyond the minimax analysis and offers an extensible foundation for future methodological advances.}

\subsection{Minimax rate}

In this subsection, we derive the convergence rate for our ConquerNet when the quantile function belongs to Besov spaces. The Besov space is a very general function class which plays an important role in fields like statistical learning \citep{donoho1998minimax,gine2021mathematical,padilla2022quantile} and approximation analysis \citep{temlyakov1993approximation,Suzuki2018AdaptivityOD}.
As defined in Definition~\ref{def:besov}, Besov spaces unify and extend many classical smoothness spaces. In particular, the Sobolev space $W^{\alpha,p}(\mathcal{X})$ coincides with the Besov space $B^{\alpha}_{p,p}(\mathcal X)$ and the H\"older space $C^\alpha(\mathcal{X})$ corresponds to $B^{\alpha}_{\infty,\infty}(\mathcal{X})$. Thus, Besov space provides a more general framework that strictly contains both Sobolev and H\"older spaces that are commonly assumed for smoothness conditions in statistical guarantee analysis \citep{farrell2021deep,montanelli2021deep,schmidt2020nonparametric}. 

Based on the function space and network class specification, we impose the following assumptions on the data generation settings.
\paragraph{Assumption 1.}We assume that $\left\{\left(\mf x_i, y_i\right)\right\}_{i=1}^n$ are i.i.d. samples from $(X, Y)$. Write $F_{y_i \mid \mf x_i}$ is cumulative distribution function of $y_i$ conditioning on $\mf x_i$ for $i=1, \ldots, n$.

\paragraph{Assumption 2.} There exists a constant $\kappa>0$ such that for $\delta \in \mathbb{R}$ satisfying $|\delta| \leq \kappa$ we have that, a.s.,
\begin{equation}
\left|F_{y_i \mid \mf x_i}\left(f_\tau^*\left(\mf x_i\right)+\delta\right)-F_{y_i \mid \mf  x_i}\left(f_\tau^*\left(\mf x_i\right)\right)\right| \geq \underline{p}\left|\delta\right|\label{eq:assump1 lip lower}    
\end{equation}
for some constant $\underline{p}>0$. Denote $p_{y_i \mid \mf x_i}(\cdot)$ as the probability density function of $y_i$ conditioning on $\mf x_i$ uniformly for all $i$. We also require that $p_{y_i|\mf x_i}(\cdot)$ is continuously differentiable and the derivative $p^\prime_{y_i\mid \mf x_i}(\cdot)$ satisfies almost surely that 
\begin{equation}
|p^\prime_{y_i \mid \mf x_i}(f_\tau^*(\mf x_i)+\delta)-p^\prime_{y_i \mid \mf x_i}(f_\tau^*(\mf x_i))| \leq l_0|\delta|,  \label{eq:assump1 lip upper}
\end{equation}
for some constant $l_0>0$ uniformly for all $i$.

\paragraph{Assumption 3.}  {Assume that $X$ has a bounded probability density function $g_X(\cdot)\leq c_2,c_2>0$ with support in $[-H,H]^d$.}

\paragraph{Assumption 4.} The quantile function satifies $f_\tau^* \in B_{p, q}^s\left([-H,H]^d\right),\left\|f_\tau^*\right\|_{\infty} \leq F$, where for $0<p, q \leq \infty$, and $0<s<\infty$ we have $s \geq d / p$. Furthermore, there exists $m \in \mathbb{N}$ such that $0<s<\min \{m, m-1+1 / p\}$. Here, $B_{p, q}^s\left([-H,H]^d\right)$ is a Besov space in $[-H,H]^d$ as in Definition \ref{def:besov}.

{\begin{remark}
Assumption 2 ensures the density around the target quantile is bounded away from zero.
When the density around the target quantile is very small, the quantile becomes ill-conditioned, leading to inflated variance and unstable optimization.
\begin{enumerate}
    \item \emph{Inflated variance.}
    Standard quantile regression theory (e.g. \cite{koenker2006quantile}) shows that the 
    asymptotic variance involves an inverse-density factor 
    $1/p_{Y|X}(q_\tau(x))$. Hence, near-zero densities amplify the 
    estimation error for any quantile estimator. In our neural network 
    analysis, the constant appearing in \eqref{eq:f-f_n delta} of Lemma~\ref{lem:f-f_n} which bounds 
    $\Delta^2(f_n,f)$, is proportional to the lower density bound 
    $1/\underline{p}$ where $\underline{p}$ is imposed in Assumption~2. This reflects the same 
    phenomenon: extremely small density around the target quantile leads to 
    an unfavorable constant in the convergence rate.
    \item \emph{Optimization instability.}
    When the density is very small, the derivative of the quantile loss is 
    nearly constant, resulting in weak curvature and slow convergence. 
    Smoothing via the convolution kernel can mitigate this effect to some 
    extent, but it cannot fully eliminate the ill-posedness caused by a 
    vanishing density.
\end{enumerate}
\end{remark}}

Condition \eqref{eq:assump1 lip lower} in Assumption 2 ensures local identifiability, which is a necessary condition that the quantile is estimatable.
Similar conditions are widely imposed in quantile regression literature \citep{pollard1991asymptotics, Chernozhukov2011quantileEP,padilla2022quantile}. 
Meanwhile, condition \eqref{eq:assump1 lip upper} requires the conditional density $p_{Y|X}(t)$ to be smooth in a neighborhood of the quantile. Such Lipschitz-type conditions are common in quantile regression \citep{koenker2006quantile,zhou2010Non,he2023smoothed}.

Assumptions 3 and 4 are mild and commonly assumed in both quantile regression \citep{Wu2017quantile,he2023smoothed} and deep learning studies \citep{padilla2022quantile,Suzuki2018AdaptivityOD}. 
{Assumption 3 now only requires that $g_X(\cdot)$ is bounded on $[-H,H]^d$. 
The previous global lower bound $c_1$, as assumed in \cite{padilla2022quantile}, 
is unnecessary for our results because regions with 
$g_X(\mf x)=0$ do not contribute to the $\ell_2$ norm. This simplification does not affect identifiability or our main results, as the required local density condition is already guaranteed by Assumption~2.}
It is worth pointing out that Assumption 4 requires that the quantile function $f_\tau^*$ belongs to a Besov space, which, as discussed, provides a more general smoothness framework than commonly assumed Sobolev or Hölder spaces, and is flexible enough to allow even discontinuous functions.

\begin{theorem}
    \label{thm:minimax rate}
Suppose that Assumptions 1-4 hold. Given $N \asymp n^{\frac{d}{2 s+d}}$, {let $\epsilon\asymp N^{-s / d}+\log^{-1} N$} with $v= sp/d-1$, suppose $h^2 \lesssim n^{-\frac{s}{2s+d}}$ and the class $\mathcal{I}(L, W, S, B)$ satisfies
\begin{align}
& L=3+2\left\lceil\log _2\left(\frac{3^{\max \{d, m\}}}{\epsilon c_{d, m}}\right)+5\right\rceil\left\lceil\log _2 \max \{d, m\}\right\rceil, \notag\\
& W=W_0 N ,S=(L-1) W_0^2 N+N, B=O\left(N^{1/v+1/d}\right),\label{eq:layer condition}
\end{align}
for a constant $c_{d, m}>0$ that depends on $d$ and $m$.
{Then there exists a constant $C>0$ such that
$$
\mr P\left(\max \left\{\left\|\hat{f}_h-f_\tau^*\right\|_{\ell_2}^2,\left\|\hat{f}_h-f_\tau^*\right\|_n^2\right\} \leq C(\log n)^3 \max\{\delta n^{-1},n^{\frac{-2s}{2s+d}}\} \right)\geq 1-e^{-\delta}\log n.
$$
}
\end{theorem}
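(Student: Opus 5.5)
The proof adapts the quantile-network argument of \citet{padilla2022quantile} to the smoothed loss $\ell_h$. It combines three ingredients: an approximation bound for $f_n$, a localized empirical-process bound for the excess smoothed risk, and a control of the bias that smoothing introduces.

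\emph{Approximation.} By the Besov approximation result of \citet{Suzuki2018AdaptivityOD}, the architecture in \eqref{eq:layer condition} gives $\|f_n-f_\tau^*\|_\infty\lesssim N^{-s/d}\asymp n^{-s/(2s+d)}$, up to logarithmic factors. Hence $\Delta^2(f_n,f_\tau^*)$ and $\|f_n-f_\tau^*\|_{\ell_2}^2$ are both at most $n^{-2s/(2s+d)}$. The covering entropy of $\mathcal{I}(L,W,S,B)$ in sup norm is $\log\mathcal N(\eta)\lesssim S L\log(B W L/\eta)\lesssim N(\log n)^2$, because $L\asymp\log n$ and $\log B\lesssim\log n$.

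\emph{Smoothing bias and curvature.} Write $\ell_h=\rho_\tau*K_h$ and define the population smoothed risk $Q_h(f)=\mb E\,\ell_h(Y-f(X))$. Then $\ell_h'(u)=\tau-\int_{-\infty}^{-u}K_h(v)\,\mathrm dv$, which equals $\tau-\bar K_h(-u)$ with $\bar K_h$ the CDF of $K_h$. Conditional on $X=\mf x$, the derivative of $Q_h$ in $f$ is $\mb P(Y-\varepsilon_h\le f(\mf x)\mid\mf x)-\tau$ with $\varepsilon_h\sim K_h$. Since $K$ is symmetric in first moment and Assumption 2 gives $p'$ Lipschitz near $f^*_\tau$, a second-order Taylor expansion gives
\[
\bigl|\mb P(Y-\varepsilon_h\le f_\tau^*(\mf x)\mid \mf x)-\tau\bigr|\lesssim h^2 .
\]
So the smoothed minimizer is within $O(h^2)$ of $f_\tau^*$ pointwise. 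Combining this with the lower bound \eqref{eq:assump1 lip lower}, I will aim for
\[
Q_h(f)-Q_h(f_\tau^*)\;\ge\; c\,\Delta^2(f,f_\tau^*)-C h^2\,\mb E|f-f_\tau^*|(X).
\]
The quadratic regime of $D^2$ comes from the density lower bound for $|\delta|\le\kappa$, and the linear regime from monotonicity. The bias term is absorbed using $h^2\lesssim n^{-s/(2s+d)}$ and Cauchy--Schwarz: $h^2\|f-f^*_\tau\|_{\ell_2}\le \tfrac{c}{2}\|f-f^*_\tau\|^2+Ch^4$, with $h^4\lesssim n^{-2s/(2s+d)}$. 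Since $\|f\|_\infty\le F$, $\Delta^2$ and $\|\cdot\|_{\ell_2}^2$ are equivalent up to constants.

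\emph{Basic inequality and localization.} From the optimality of $\hat f_h$ over the class containing $f_n$,
\[
Q_h(\hat f_h)-Q_h(f_n)\le (Q_h-\hat Q_h)(\hat f_h)-(Q_h-\hat Q_h)(f_n),
\]
where $\hat Q_h$ is the empirical smoothed risk. Because $\ell_h$ is $1$-Lipschitz, the increments $\ell_h(y-f)-\ell_h(y-f_n)$ are bounded by $|f-f_n|\le 2F$. Their variance is controlled by $\|f-f_n\|_{\ell_2}^2$. I will apply Talagrand/Bousquet's inequality with a peeling argument over shells $\{\Delta(f,f_n)\in(2^{j}r,2^{j+1}r]\}$, using $O(\log n)$ shells. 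This is the source of the $\log n$ multiplying $e^{-\delta}$ in the probability. The local Rademacher complexity is handled by Dudley's integral with the entropy above, giving a modulus $\phi(r)\lesssim r\sqrt{N(\log n)^3/n}$. The fixed point is then $r^2\asymp N(\log n)^3/n\asymp(\log n)^3n^{-2s/(2s+d)}$, and the deviation term contributes $\delta/n$. Combining this with the curvature bound and the approximation error bounds $\|\hat f_h-f_\tau^*\|_{\ell_2}^2$.

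\emph{Empirical norm.} The bound on $\|\hat f_h-f^*_\tau\|_n^2$ follows from a second uniform concentration of $\|f-g\|_n^2$ around $\|f-g\|_{\ell_2}^2$ over the same class, using the same entropy and peeling, at the same rate.

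\textbf{Main obstacle.} I expect the hardest step to be the curvature inequality for the smoothed risk. The smoothed loss $\ell_h$ is not minimized at $f^*_\tau$. The lower bound must hold uniformly, including for $|f-f^*_\tau|>\kappa$ where only a linear growth is available. The $h^2$ bias must also be shown not to spoil the localization. I would first prove the pointwise version via the conditional first-derivative representation above, and then integrate over $X$.
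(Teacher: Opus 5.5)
Your plan follows the paper's proof closely. It uses Suzuki's approximation for $f_n$ and a Knight-identity curvature bound in which $\int uK(u)\,\mathrm du=0$ produces an $O(h^2)$ bias. It then combines the basic inequality $\hat M_n(\hat f_h)\le 0$ against $f_n$ with local Rademacher concentration and contraction through the $1$-Lipschitz $\ell_h$. Dudley's integral with the sparse-network entropy and peeling over $l\asymp\log n$ shells give the $e^{-\delta}\log n$ failure probability. A separate transfer from $\|\cdot\|_{\ell_2}$ to $\|\cdot\|_n$ (Lemma~\ref{lem:radius rademacher}) finishes the argument.

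The one real difference is where the curvature inequality is centered. Lemma~\ref{lem:f-f_n} centers it at $f_n$ and pays for $f_n\neq f^*_\tau$ with a cross term $(\|f_n-f^*_\tau\|_\infty+h^2)\Delta(f,f_n)\sqrt F$. That term is linear in the localization radius, reappears as $N^{-s/d}r_0F^{3/2}$ in Lemma~\ref{lem:f-f_n l2 high prob}, and is absorbed in the fixed point $\bar r$. You center at $f^*_\tau$ and remove the $h^2$ bias by Young's inequality, which is equivalent and a bit cleaner.

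Your centering, however, leaves you with $Q_h(f_n)-Q_h(f^*_\tau)$ to add back to the basic inequality, and you do not say how to bound it. The bounds on $\Delta^2(f_n,f^*_\tau)$ and $\|f_n-f^*_\tau\|_{\ell_2}^2$ you list do not control it by themselves. Lipschitzness of $\ell_h$ only gives $\mb E|f_n-f^*_\tau|(X)\lesssim n^{-s/(2s+d)}$, which would halve the exponent. You need the second-order upper expansion $Q_h(f_n)-Q_h(f^*_\tau)\lesssim h^2\,\mb E|f_n-f^*_\tau|(X)+\bar p\,\|f_n-f^*_\tau\|_{\ell_2}^2\lesssim N^{-2s/d}$. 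This follows from your first-derivative representation together with a uniform upper bound $\bar p$ on the conditional density. That bound is not stated in Assumption 2, but the paper relies on it too: it is hidden in the constant $C$ multiplying $\mb E\{|f-f_n|(|f^*_\tau-f_n|+h^2)\}$ in the proof of Lemma~\ref{lem:f-f_n}. Once this step is written out, your route goes through.

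A minor bookkeeping point: since $S\asymp LN$, the entropy is $SL\log(BWL/\eta)\asymp N(\log n)^3$, not $N(\log n)^2$. Your modulus $r\sqrt{N(\log n)^3/n}$ already uses the correct count, so the final $(\log n)^3$ factor is unaffected.
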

{\begin{remark}
The expression for $L$ in \eqref{eq:layer condition} is not meant to prescribe a 
single fixed value. Since the quantity $\epsilon$ may be chosen within 
the asymptotic range $\epsilon \asymp N^{-s/d} + (\log N)^{-1}$ with 
$N \asymp n^{d/(2s+d)}$, the resulting depth $L$ can vary accordingly. 
Thus, \eqref{eq:layer condition} should be interpreted as describing an 
admissible range of depths that ensures the minimax rate, rather than requiring 
$L$ to take a specific exact value.
\end{remark}}
{\begin{remark}
   The constant $C$ in the upper bound depends on the regularity assumptions on the conditional density. In particular, $C$ is proportional to $c_2$ in Assumption 3 and inversely proportional to the lower bound $\underline{p}$ in Assumption 2.
\end{remark}}
Note that the rate $n^{-\frac{2s}{2s+d}}$ is known to be minimax optimal for function estimation in Besov spaces \citep{kerkyacharian1992density,donoho1998minimax,Suzuki2018AdaptivityOD}. {Theorem \ref{thm:minimax rate} explicitly yields the minimax rate $O_p(n^{-\frac{2s}{2s+d}}\log^3 n)$ when $\delta\asymp\log n$ in Besov space up to constants and logarithmic factors.}
Our estimator attains this minimax rate up to a logarithmic factor $\log^3 n$, implying that no estimator can substantially improve upon the performance of our ConquerNet beyond this logarithmic term. 
The minimax rate also clarifies the role of the bandwidth parameter $h$. Specifically, the condition $h^2\lesssim n^{-\frac{s}{2s+d}}$ ensures that the smoothing bias remains negligible, so that the estimator still targets the true quantile function. Conversely, when $h$ is extremely small, the convolution-type smoothed loss function $\ell_h(y_i-f(\mf x_i))$ effectively reduces to the original quantile loss $\rho_\tau(y_i-f(\mf x_i))$; see Figure~\ref{fig:loss diff}. 
Smaller $h$ leads to a sharper minimum and a smaller smoothing region, which makes the gradients change dramatically in this region. The excessive sharpness induced by very small $h$ will increase sensitivity to parameter changes, which reduces optimization stability and leads to poor generalization.
\begin{figure}[htpb]
    \centering
    \includegraphics[width=0.8\linewidth]{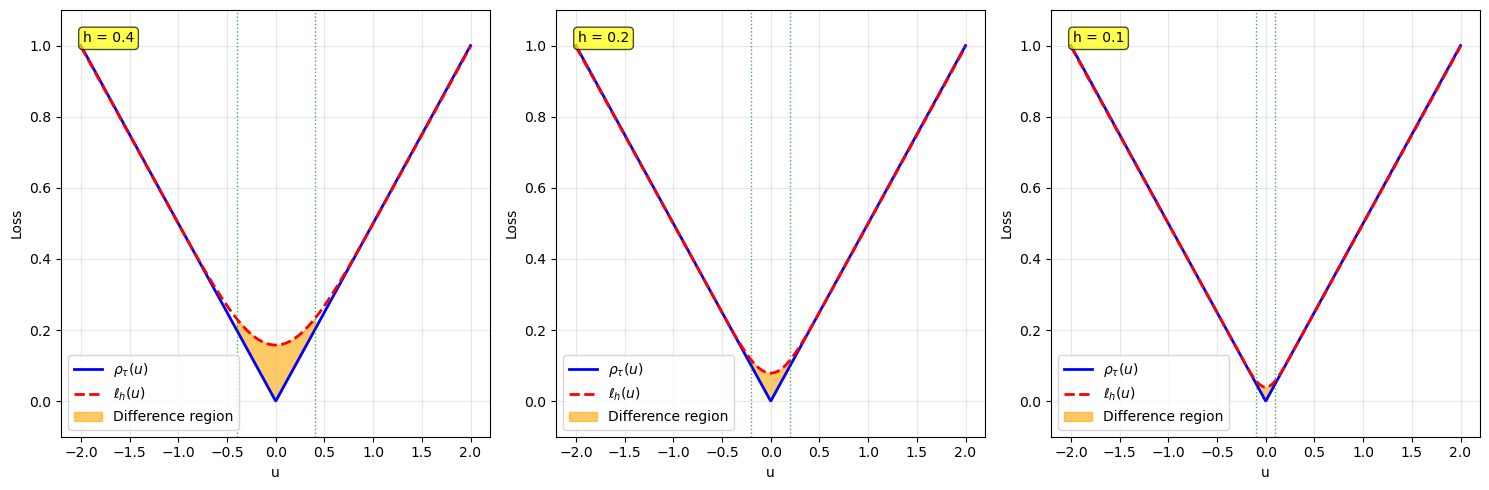}
    \caption{Relationship between loss functions $\rho_\tau(u)$ (solid lines) and $\ell_h(u)$ (dashed lines) for $h=0.4,0.2,0.1$, with $\tau=0.5$. Smaller $h$ yields smaller difference region between $\rho_\tau(u)$ and $\ell_h(u)$.}
    \label{fig:loss diff}
\end{figure}

\subsection{General upper bound}
While the minimax analysis in the previous subsection relies on structural assumptions on the neural networks and target function, it is also of interest to study performance guarantees in a more general setting without such restrictions. To this end, we provide a general upper bound for our ConquerNet estimator without imposing constraints on the width, depth, magnitude of parameters, and sparsity of the network class, or on the smoothness assumptions (such as belonging to a Besov space) on the target quantile function $f_\tau^*$.

Consider a general neural network function with ReLU activation $f\in \mathcal F(P,U,L)$ denoted as
\begin{equation}
   \mathcal F(P,U,L):=\{ f: \mathbb{R}^{p_0} \rightarrow \mathbb{R}^{p_{L+1}}, \quad \mathbf{x} \mapsto f(\mathbf{x})=W_L \sigma_{\mathbf{v}_L}\circ W_{L-1} \sigma_{\mathbf{v}_{L-1}}\circ \cdots\circ W_1 \sigma_{\mathbf{v}_1}\circ W_0 \mathbf{x} \},\label{eq:def MLP}
\end{equation}
where $W_i$ is a $p_{i+1} \times p_i$ weight matrix, $\sigma_{\mf v_i}$ is a shifted activation function with shifting vector $\mathbf{v}_i =(v_{i,1},...,v_{i,p_i})^\top\in \mathbb{R}^{p_i}$, i.e.,
$$
\sigma_{\mf v_i}\left(\begin{array}{c}
a_1 \\
\vdots \\
a_{p_i}
\end{array}\right)=\left(\begin{array}{c}
\sigma\left(a_1-v_{i,1}\right) \\
\vdots \\
\sigma\left(a_{p_i}-v_{i,p_i}\right)
\end{array}\right) .
$$

Notice that the number of parameters is $P=\sum_{l=0}^L(p_{l+1}p_l+p_{l+1})$ with $p_0=d,p_{L+1}=1$, the number of nodes is $U=\sum_{l=1}^Lp_l$, and the number of layers is $L$. For $f\in \mathcal{F}(P,U,L)$, we redefine its ConquerNet estimator
\begin{equation}
    \hat{f}_h:=\mathop{\arg\min}\limits_{f\in\mathcal{F}(P,U,L),\|f\|_\infty\leq F }\sum_{i=1}^n \ell_h\left(y_i-f(\mf x_i)\right),\label{eq:estimator general}
\end{equation}
and the approximation error is defined as
\begin{equation}
    err_1:=\mathbb{E}\left[\frac{1}{n} \sum_{i=1}^n \ell_h\left(y_i-f_n\left(\mf x_i\right)\right)-\frac{1}{n} \sum_{i=1}^n \ell_h\left(y_i-f_\tau^*\left(\mf x_i\right)\right)\right]\label{eq:approx error}
\end{equation}
where
\begin{equation}
f_n := \underset{f \in \mathcal{F}(P, U, L),\|f\|_{\infty} \leq F}{\arg \min } \mathbb{E}\left[\sum_{i=1}^n \ell_h\left(y_i-f\left(\mf x_i\right)\right)\right].\label{eq:f_n} 
\end{equation}
\begin{theorem}
\label{thm:general upper bound}
Suppose that Assumptions 1-2 hold and $n \geq C L P \log (U)$ for a sufficiently large $C>0$. Then with $c_1>0$ a constant, $\hat{f}_h$ defined in \eqref{eq:estimator general} satisfies
\begin{equation}
\mathbb{E}\left[\Delta_n^2\left(\hat{f}_h,f_\tau^*\right) \right]\leq c_1\left[ F\left(\frac{L P \log U \cdot \log n}{n}\right)^{1 / 2}+h^2\mb E\|\hat{f}_h- f_\tau^*\|_{n,1} +err_1\right],\label{eq:general bound delta}
\end{equation}
where $\|f-g\|_{n,1}=\frac{1}{n}\sum_{i=1}^n|f(\mf x_i)-g(\mf x_i)|$.
Furthermore, if $h^2=o\left( \sqrt{\mb E\|\hat{f}_h-f_\tau^* \|_n^2}\right)$, it also holds that
\begin{equation}
    \mathbb{E}\left[\left\|\hat{f}_h-f_\tau^*\right\|_n^2 \right] \leq 2c_1\max\{1,F\}\left[ F\left(\frac{L P \log U \cdot \log n}{n}\right)^{1 / 2}+err_1\right].\label{eq:general bound l2}
\end{equation}
\end{theorem}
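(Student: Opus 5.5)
The plan is the standard excess-risk route used for quantile networks (as in \cite{padilla2022quantile}), modified to carry the smoothing bias explicitly. Write $M_h(f):=\frac1n\sum_i \ell_h(y_i-f(\mf x_i))$ and $M(f):=\frac1n\sum_i\rho_\tau(y_i-f(\mf x_i))$.

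\textbf{Step 1: decompose the excess risk.} By definition of $\hat f_h$ we have $M_h(\hat f_h)\le M_h(f_n)$. Hence
\begin{align*}
\mb E[M_h(\hat f_h)-M_h(f_\tau^*)]\le{}& \mb E[(M_h-\mb E M_h)(f_\tau^*)-(M_h-\mb E M_h)(\hat f_h)]\\
&+err_1+\mb E[\mb E M_h(\hat f_h)-\mb E M_h(f_\tau^*)],
\end{align*}
where the last term is arranged so that the left side becomes a population quantity. Concretely, I would bound the population excess smoothed risk $\bar M_h(\hat f_h)-\bar M_h(f_\tau^*)$, with $\bar M_h$ the conditional expectation given the $\mf x_i$. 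This quantity is at most $err_1$ plus the centered empirical process
\[
\nu_n(f)=\tfrac1n\sum_i\{[\ell_h(y_i-f(\mf x_i))-\ell_h(y_i-f_\tau^*(\mf x_i))]-\mb E[\cdot\mid\mf x_i]\}
\]
evaluated at $f=\hat f_h$.

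\textbf{Step 2: lower curvature with a smoothing bias term.} Since $\ell_h=\rho_\tau*K_h$, the conditional expectation equals $\mb E_{\varepsilon\sim K_h}\mb E[\rho_\tau(y-\varepsilon-f)\mid\mf x]$, so $\ell_h'(u)=\int(\tau-\mathbf 1(u+\varepsilon... \le 0))K_h$ type identities give
\[
\frac{d}{dt}\mb E[\ell_h(y-t)\mid\mf x]=\int \{F_{y|\mf x}(t+hv)-\tau\}K(v)\,dv .
\]
Integrating from $f_\tau^*(\mf x)$ to $f(\mf x)$, I would Taylor expand $F_{y|\mf x}(t+hv)$ around $t$. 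The first-order term vanishes because $\int vK=0$. The second-order term is bounded by $\tfrac12 h^2\sigma_K^2\sup|p'|$ using \eqref{eq:assump1 lip upper} (local Lipschitz, plus boundedness near the quantile, with truncation to $\|f\|_\infty\le F$). This gives
\[
\mb E[\ell_h(y-f)-\ell_h(y-f^*_\tau)\mid \mf x]\ge \mb E[\rho_\tau(y-f)-\rho_\tau(y-f_\tau^*)\mid\mf x]-Ch^2|f-f_\tau^*|.
\]
Knight's identity together with \eqref{eq:assump1 lip lower} then gives the familiar lower bound $c\,\underline p\,D^2(f(\mf x)-f_\tau^*(\mf x))$: quadratic for $|\delta|\le\kappa$, linear beyond it, using monotonicity of $F$. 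Averaging over $i$ yields $c\Delta_n^2(f,f_\tau^*)-Ch^2\|f-f^*_\tau\|_{n,1}$. This is where the $h^2\mb E\|\hat f_h-f^*_\tau\|_{n,1}$ term in \eqref{eq:general bound delta} comes from.

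\textbf{Step 3: the empirical process (main obstacle).} I need $\mb E\sup_{f}|\nu_n(f)|\lesssim F\sqrt{LP\log U\log n/n}$ over $\{f\in\mathcal F(P,U,L):\|f\|_\infty\le F\}$. $\ell_h$ is $\max\{\tau,1-\tau\}$-Lipschitz, since its derivative lies in $[\tau-1,\tau]$. So symmetrization and the Ledoux--Talagrand contraction reduce this to the Rademacher complexity of the network class, with envelope $2F$. Dudley's integral, using pseudo-dimension bounds for ReLU networks of order $LP\log U$ (Bartlett et al.), then gives the bound. The condition $n\ge CLP\log U$ keeps the covering-number bound valid and the $\log n$ factor controlled. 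A delicate point is that $\hat f_h$ is data-dependent and the curvature is only conditional on the $\mf x_i$. I would handle this by conditioning on the design and taking the supremum, which avoids any peeling argument and yields exactly the square-root (non-localized) rate. Combining Steps 1--3 gives \eqref{eq:general bound delta}.

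\textbf{Step 4: from $\Delta_n$ to $\|\cdot\|_n$.} Since $|\hat f_h-f^*_\tau|\le 2F$, we have $t^2\le \max\{1,2F\}D^2(t)$, so $\mb E\|\hat f_h-f_\tau^*\|_n^2\le \max\{1,F\}'\,\mb E\Delta_n^2$. By Cauchy--Schwarz, $\|\cdot\|_{n,1}\le\|\cdot\|_n$. Under $h^2=o(\sqrt{\mb E\|\hat f_h-f^*_\tau\|_n^2})$, the bias term $h^2\sqrt{\mb E\|\cdot\|_n^2}$ is $o(\mb E\|\cdot\|_n^2)$. It can therefore be absorbed into the left side for $n$ large, at the cost of the factor 2, giving \eqref{eq:general bound l2}.
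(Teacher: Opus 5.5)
Your proposal is correct and follows essentially the paper's argument. The ingredients match: the basic inequality against $f_n$ yields $err_1$ plus a supremum of a centered empirical process (your first display in Step 1 is garbled, but the version you state \emph{concretely}, centered at $f_\tau^*$, is right). Knight's identity with Assumption 2 gives the curvature bound $c\Delta_n^2$ minus an $h^2\|\hat f_h-f_\tau^*\|_{n,1}$ smoothing bias. The process is controlled by symmetrization, contraction, Dudley's integral and the $O(LP\log U)$ pseudo-dimension bound, and the $h^2$ term is absorbed via Cauchy--Schwarz/Jensen.

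The only difference is cosmetic. You extract the $O(h^2|f-f_\tau^*|)$ bias by first comparing the smoothed and unsmoothed conditional risks through a Taylor expansion of $F_{y\mid\mf x}$. The paper instead applies Knight's identity to the shifted check loss inside the convolution and bounds its linear and integral parts separately. Both routes rely on the same implicit global control of $p'_{y\mid\mf x}$ and on $\|f_\tau^*\|_\infty\le F$.
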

{\begin{remark}
    Our general risk bound in Theorem 3.2 is quite flexible and does not rely on a very rigid architecture, but rather on general capacity measures (e.g., network class size, sparsity, norm bounds) and approximation properties. Therefore, in principle, it can accommodate more complex architectures, including residual (skip-connected) MLPs, as long as we can control the relevant complexity measures. However, to the best of our knowledge, there is no existing minimax-rate analysis for quantile regression specifically with residual-based (ResNet/skip-connection) networks. The most closely related work is the minimax-rate analysis for deep ReLU networks under quantile loss by \cite{padilla2022quantile}, which considers plain feedforward (non-skip) ReLU networks. Given the lack of minimax theory for residual architectures in quantile regression, we focused on the simpler network class to establish minimax optimality for our proposed ConquerNet method, and leave for the possible extension to MLPs with skip connections as a promising future work.
\end{remark}}

Theorem \ref{thm:general upper bound} drops out Assumptions 3-4 in Theorem \ref{thm:minimax rate}
and yields a general error bound that depends on parameters $P,U,L$, the approximation error $err_1$, and the sample size $n$. As long as $h^2=o(\sqrt{\mathbb{E}\|\hat{f}_h-f_\tau^*\|_n^2})$, the risk bound in \eqref{eq:general bound l2} remains essentially unaffected, implying that taking $h$ sufficiently small does not deteriorate the estimator’s performance. However, from an optimization perspective, a very small $h$ causes the smoothed loss $\ell_h(\cdot)$ to behave almost identically to the original quantile loss $\rho_\tau(\cdot)$; see Figure \ref{fig:loss diff}, which raises sharp minima concerns discussed in Section~\ref{sec:introduction}.

\section{Empirical study}
\label{sec:empirical study}
In this section, we empirically evaluate the proposed ConquerNet through comprehensive simulation studies and a real data application.
We use the convolution-type smoothed quantile losses defined in \eqref{eq:estimator besov} by three different kernel functions: Gaussian, uniform, and Epanechnikov kernels. We compare them against the baseline quantile ReLU network in \citet{padilla2022quantile} under different sample sizes and quantile levels. The simulation results show that our ConquerNet perform better in MSE with less training time, especially for extreme quantile levels. We also discuss the impact of different choices of the bandwidth $h$ and propose a data-driven rule for bandwidth selection, which shows the stability of our ConquerNet. We apply the ConquerNet to the BMI (body mass index) dataset and get better performance compared to the baseline model. In addition, we explore the loss landscape plots of the two networks, which provide an intuitive way to understand the advantage of the ConquerNet during optimization. Due to the page limit, we only present part of our experimental results in this section. See Appendix~\ref{sec:additional experiments} for extra experiments, including the tables of MAE (mean absolute error), joint estimation of multiple quantile levels under non-crossing constraints, residual-based networks, and the plot of MSEs by sample sizes.

We have made efforts to ensure reproducibility of our results. The experimental setup, including simulation designs, model configurations, and hyperparameter choices, is described in this section and Appendix \ref{sec:additional experiments}. Random seeds are set to ensure reproducibility of the experiment. Upon publication, we will release the full implementation, including training scripts and simulation setups, in a GitHub repository. An anonymous version of the GitHub repository is available during the review, see \url{https://anonymous.4open.science/r/conquernn-F625/}.

\subsection{Scenario settings}
\label{sec:scenario settings}
Under the comparable number of parameters, we consider two networks with different shapes. One has 5 hidden layers of 70 nodes each,  denoted by Model A, and the other one has 10 hidden layers of 50 nodes each, denoted by Model B. We consider both smooth and piecewise continuous quantile functions with heavy-tailed noises. Specifically, the data for the simulation are generated by the following mechanism,
$$
y_i = g(\mf{x}_i)+\varepsilon_i, \quad i=1,\ldots,n,
$$
where $g(\cdot):[0,1]^d\rightarrow \mathbb{R}$, $\{\mf{x}_i\}_{i=1}^n$ are independently sampled from uniform distribution on $[0,1]^d$ and $d$ is the dimension of $\mf{x}_i$. We consider the following 3 scenarios of data generation:

\textbf{Scenario 1 (S1).} We set $d=2$, $\mf{z}\in[0,1]^2$, $g(\mf{z}) = \cos(2\pi z_1^2) + \sin\left(\sqrt{z_1^2 + 2z_2} + 2\right)$, and $\varepsilon_i = \|\mf{x}_i-(1,0)^\top\|t_i/2$, where $t_i\overset{\text{iid}}{\sim}t(2)$ for $i=1,\ldots,n$ and $t(2)$ is the t-distribution with 2 degrees of freedom.

\textbf{Scenario 2 (S2).} We set $d=5$, $\mf{z}\in[0,1]^5$, $g(\mf{z}) = \sqrt{z_1 + 2z_2 + z_3 + 2z_4 + z_5}$, and $\varepsilon_i = \sqrt{\mf{x}_i^\top \mf{\eta}}v_i$ for $i=1,\ldots,n$, where ${\eta}=(1/2,0,1/2,0,1/2)^\top$ and $v_i\overset{\text{iid}}{\sim}t(3)$, $t(3)$ is the t-distribution with 3 degrees of freedom.

\textbf{Scenario 3 (S3).} We set $d=5$, $z\in[0,1]^5$, $g(\mf{z}) = g_2\circ g_1(\mf{z})$, where $g_1(\mf{z}) = (z_1 + 3z_2, \cos(2\pi(z_3 + z_4)), z_2 + \sqrt{z_3} + 2z_5)^\top$ and
$$
    g_2(\mf{z})= 
        \begin{cases} 
        z_1 + \sqrt{z_2^2 + z_3} & \text{if } z_2 < 0, \\
        \sqrt{z_1 + z_2} + 0.5z_3 & \text{otherwise},
        \end{cases}
$$
with $\varepsilon_i\overset{\text{iid}}{\sim} \text{Laplace}(0,2)$ for $i=1,\ldots,n$.

\subsection{Experiment details}
Our training data set is denoted by $\{(\mf{x}_i,y_i)\}_{i=1}^n$ and the test data set is denoted by $\{\tilde{\mf{x}}_i\}_{i=1}^n$.
For a fixed quantile $\tau\in(0,1)$, we train the baseline network and ConquerNet from \eqref{eq:estimator besov}, and get the trained quantile estimate $\hat{f}_h$. By the data generating mechanism described above, we can calculate the true quantile $f_{\tau}^*(\tilde{\mf{x}}_i) = g(\tilde{\mf{x}}_i)+F_{\tilde{\varepsilon}_i|\tilde{\mf{x}}_i}^{-1}(\tau)$. Then the MSE of quantile estimator is obtained by $\sum_{i=1}^T (f_{\tau}^*(\tilde{\mf{x}}_i)-\hat{f}_h(\tilde{\mf{x}}_i))^2/T$, which is evaluated by using $\{\tilde{\mf x}_i\}_{i=1}^{T}$ of size $T=10000$. We set training sample sizes $n\in\{1000,5000,10000\}$ and quantile levels $\tau \in \{0.05,0.25,0.5,0.75,0.95\}$. 
For each experiment setting, we run the experiment 50 times independently and get the MSE results 50 times. 
Table~\ref{tab:mse} shows the averaged MSE results over 50 trials under different experiment settings. 
{The bold fonts represent that the results of the ConquerNet are better than those of the baseline models. In addition, to make the representation clearer, we box the baseline result if it outperforms our ConquerNet.}

\begin{table}[htpb]
\centering
\caption{Mean squared error (MSE) performances for scenario 1-3, model A and B  under different sample sizes, quantile levels, and smoothing kernels. The MSEs are averaged over 50 independent trials.}
\label{tab:mse}
\begin{adjustbox}{width=\textwidth,center}
\begin{tabular}{@{}ccclllllllllllllll@{}}
\toprule
\multicolumn{2}{c}{\multirow{2}{*}{}}          & \multirow{2}{*}{Method} & \multicolumn{5}{c}{$n$=1000}                                                                                                                                                & \multicolumn{5}{c}{$n$=5000}                                                                                                                                                & \multicolumn{5}{c}{$n$=10000}                                                                                                                                          \\ \cmidrule(l){4-18} 
\multicolumn{2}{c}{}                           &                         & \multicolumn{1}{c}{$\tau$=0.05} & \multicolumn{1}{c}{$\tau$=0.25} & \multicolumn{1}{c}{$\tau$=0.5} & \multicolumn{1}{c}{$\tau$=0.75} & \multicolumn{1}{c|}{$\tau$=0.95}     & \multicolumn{1}{c}{$\tau$=0.05} & \multicolumn{1}{c}{$\tau$=0.25} & \multicolumn{1}{c}{$\tau$=0.5} & \multicolumn{1}{c}{$\tau$=0.75} & \multicolumn{1}{c|}{$\tau$=0.95}     & \multicolumn{1}{c}{$\tau$=0.05} & \multicolumn{1}{c}{$\tau$=0.25} & \multicolumn{1}{c}{$\tau$=0.5} & \multicolumn{1}{c}{$\tau$=0.75} & \multicolumn{1}{c}{$\tau$=0.95} \\ \midrule
\multirow{8}{*}{S1} & \multirow{4}{*}{Model A} & Baseline                & \fbox{0.3820}                          & 0.0402                          & 0.0278                         & 0.0374                          & \multicolumn{1}{l|}{0.3784}          & \fbox{0.0996}                          & 0.0120                          & 0.0086                         & 0.0108                          & \multicolumn{1}{l|}{\fbox{0.0939}}          & \fbox{0.0618}                          & 0.0067                          & 0.0047                         & 0.0063                          & 0.1366                          \\ \cmidrule(l){3-18} 
                    &                          & Gaussian                & 0.4354                          & \textbf{0.0383}                 & \textbf{0.0224}                & 0.0382                          & \multicolumn{1}{l|}{0.5035}          & 0.1124                          & \textbf{0.0087}                 & \textbf{0.0055}                & \textbf{0.0097}                 & \multicolumn{1}{l|}{0.1081}          & 0.0678                          & \textbf{0.0064}                 & \textbf{0.0034}                & \textbf{0.0055}                 & \textbf{0.0625}                 \\
                    &                          & Uniform                 & 0.4448                          & \textbf{0.0385}                 & \textbf{0.0224}                & \textbf{0.0328}                 & \multicolumn{1}{l|}{\textbf{0.3721}} & 0.1079                          & \textbf{0.0087}                 & \textbf{0.0059}                & \textbf{0.0104}                 & \multicolumn{1}{l|}{0.1312}          & 0.0789                          & \textbf{0.0058}                 & \textbf{0.0036}                & \textbf{0.0057}                 & \textbf{0.0543}                 \\
                    &                          & Epanechnikov            & 0.6733                          & \textbf{0.0390}                 & \textbf{0.0222}                & \textbf{0.0373}                 & \multicolumn{1}{l|}{0.5913}          & 0.1251                          & \textbf{0.0093}                 & \textbf{0.0055}                & \textbf{0.0095}                 & \multicolumn{1}{l|}{0.1169}          & 0.0653                          & \textbf{0.0061}                 & \textbf{0.0037}                & \textbf{0.0053}                 & \textbf{0.0665}                 \\ \cmidrule(l){3-18} 
                    & \multirow{4}{*}{Model B} & Baseline                & \fbox{0.3842}                          & \fbox{0.0527}                          & \fbox{0.0319}                         & \fbox{0.0475}                          & \multicolumn{1}{l|}{\fbox{0.4222}}          & 0.1143                          & 0.0149                          & 0.0107                         & 0.0151                          & \multicolumn{1}{l|}{0.1277}          & 0.1691                          & 0.0099                          & 0.0066                         & 0.0082                          & 0.3882                          \\ \cmidrule(l){3-18} 
                    &                          & Gaussian                & 0.4158                          & 0.0665                          & 0.0383                         & 0.0633                          & \multicolumn{1}{l|}{0.5202}          & 0.1172                          & \textbf{0.0144}                 & \textbf{0.0097}                & \textbf{0.0142}                 & \multicolumn{1}{l|}{\textbf{0.1066}} & \textbf{0.1062}                 & \textbf{0.0095}                 & \textbf{0.0055}                & 0.0086                          & \textbf{0.0726}                 \\
                    &                          & Uniform                 & 0.5652                          & 0.0612                          & 0.0378                         & 0.0614                          & \multicolumn{1}{l|}{0.5685}          & \textbf{0.1033}                 & \textbf{0.0145}                 & \textbf{0.0091}                & 0.0154                          & \multicolumn{1}{l|}{\textbf{0.1252}} & \textbf{0.0974}                 & \textbf{0.0093}                 & \textbf{0.0055}                & \textbf{0.0080}                 & \textbf{0.0736}                 \\
                    &                          & Epanechnikov            & 0.4275                          & 0.0582                          & 0.0383                         & 0.0626                          & \multicolumn{1}{l|}{0.6050}          & \textbf{0.1115}                 & \textbf{0.0146}                 & \textbf{0.0094}                & \textbf{0.0145}                 & \multicolumn{1}{l|}{\textbf{0.1162}} & \textbf{0.1294}                 & \textbf{0.0089}                 & \textbf{0.0057}                & 0.0092                          & \textbf{0.0663}                 \\ \midrule
\multirow{8}{*}{S2} & \multirow{4}{*}{Model A} & Baseline                & 0.8292                          & 0.0868                          & 0.0619                         & 0.0839                          & \multicolumn{1}{l|}{\fbox{0.7874}}          & 0.2752                          & \fbox{0.0275 }                         & 0.0222                         & 0.0308                          & \multicolumn{1}{l|}{0.2747}          & 0.1704                          & 0.0205                          & 0.0145                         & 0.0223                          & 0.1670                          \\ \cmidrule(l){3-18} 
                    &                          & Gaussian                & \textbf{0.7994}                 & \textbf{0.0711}                 & \textbf{0.0587}                & \textbf{0.0778}                 & \multicolumn{1}{l|}{1.1466}          & \textbf{0.2202}                 & 0.0276                          & \textbf{0.0169}                & \textbf{0.0273}                 & \multicolumn{1}{l|}{\textbf{0.2598}} & \textbf{0.1316}                 & \textbf{0.0176}                 & \textbf{0.0128}                & \textbf{0.0193}                 & \textbf{0.1537}                 \\
                    &                          & Uniform                 & 0.8892                          & \textbf{0.0721}                 & \textbf{0.0471}                & 0.0964                          & \multicolumn{1}{l|}{0.8566}          & \textbf{0.2308}                 & 0.0286                          & \textbf{0.0181}                & \textbf{0.0275}                 & \multicolumn{1}{l|}{\textbf{0.2586}} & \textbf{0.1457}                 & \textbf{0.0180}                 & \textbf{0.0128}                & \textbf{0.0191}                 & \textbf{0.1467}                 \\
                    &                          & Epanechnikov            & 0.9192                          & \textbf{0.0787}                 & \textbf{0.0522}                & 0.1048                          & \multicolumn{1}{l|}{0.9074}          & \textbf{0.2415}                 & 0.0284                          & \textbf{0.0177}                & \textbf{0.0265}                 & \multicolumn{1}{l|}{\textbf{0.2492}} & \textbf{0.1430}                 & \textbf{0.0162}                 & \textbf{0.0126}                & \textbf{0.0191}                 & \textbf{0.1388}                 \\ \cmidrule(l){3-18} 
                    & \multirow{4}{*}{Model B} & Baseline                & 0.4930                          & \fbox{0.0583}                          & \fbox{0.0493 }                        & 0.0840                          & \multicolumn{1}{l|}{\fbox{0.5898}}          & 0.1732                          & 0.0257                          & 0.0178                         & 0.0300                          & \multicolumn{1}{l|}{0.1966}          & 0.1323                          & 0.0202                          & 0.0129                         & 0.0220                          & 0.1358                          \\ \cmidrule(l){3-18} 
                    &                          & Gaussian                & 0.7367                          & 0.0639                          & 0.0500                         & \textbf{0.0759}                 & \multicolumn{1}{l|}{0.6273}          & 0.1800                          & \textbf{0.0246}                 & \textbf{0.0155}                & \textbf{0.0245}                 & \multicolumn{1}{l|}{0.2157}          & \textbf{0.1085}                 & \textbf{0.0160}                 & \textbf{0.0119}                & \textbf{0.0178}                 & \textbf{0.1144}                 \\
                    &                          & Uniform                 & \textbf{0.4515}                 & 0.0717                          & 0.0503                         & 0.0935                          & \multicolumn{1}{l|}{0.6034}          & \textbf{0.1706}                 & \textbf{0.0216}                 & \textbf{0.0176}                & \textbf{0.0241}                 & \multicolumn{1}{l|}{0.2239}          & \textbf{0.1172}                 & \textbf{0.0151}                 & \textbf{0.0118}                & \textbf{0.0175}                 & \textbf{0.1342}                 \\
                    &                          & Epanechnikov            & 0.5800                          & 0.0723                          & 0.0546                         & \textbf{0.0819}                 & \multicolumn{1}{l|}{0.7484}          & 0.2151                          & 0.0263                          & \textbf{0.0159}                & \textbf{0.0289}                 & \multicolumn{1}{l|}{\textbf{0.1802}} & \textbf{0.1038}                 & \textbf{0.0146}                 & \textbf{0.0107}                & \textbf{0.0173}                 & \textbf{0.1302}                 \\ \midrule
\multirow{8}{*}{S3} & \multirow{4}{*}{Model A} & Baseline                & 3.0766                          & 0.7564                          & 0.5350                         & 0.7407                          & \multicolumn{1}{l|}{\fbox{2.8969}}          & 1.2870                          & 0.3854                          & 0.2146                         & 0.3387                          & \multicolumn{1}{l|}{1.3495}          & 0.9232                          & 0.2420                          & 0.1459                         & 0.2413                          & 0.9960                          \\ \cmidrule(l){3-18} 
                    &                          & Gaussian                & \textbf{2.8841}                 & 0.7776                          & \textbf{0.4788}                & \textbf{0.7056}                 & \multicolumn{1}{l|}{3.4222}          & 1.3139                          & \textbf{0.3379}                 & \textbf{0.1993}                & \textbf{0.3269}                 & \multicolumn{1}{l|}{\textbf{1.1897}} & \textbf{0.8395}                 & \textbf{0.2040}                 & \textbf{0.1295}                & \textbf{0.2145}                 & \textbf{0.7477}                 \\
                    &                          & Uniform                 & 3.3730                          & 0.7735                          & \textbf{0.4876}                & 0.7832                          & \multicolumn{1}{l|}{3.3576}          & \textbf{1.1577}                 & \textbf{0.3449}                 & \textbf{0.1912}                & \textbf{0.3082}                 & \multicolumn{1}{l|}{\textbf{1.2392}} & \textbf{0.8790}                 & \textbf{0.2064}                 & \textbf{0.1300}                & \textbf{0.2146}                 & \textbf{0.7824}                 \\
                    &                          & Epanechnikov            & 3.4479                          & \textbf{0.7308}                 & \textbf{0.4679}                & 0.8153                          & \multicolumn{1}{l|}{3.3156}          & \textbf{1.1229}                 & \textbf{0.3615}                 & \textbf{0.1980}                & \textbf{0.3112}                 & \multicolumn{1}{l|}{\textbf{1.3312}} & \textbf{0.8230}                 & \textbf{0.2129}                 & \textbf{0.1349}                & \textbf{0.2206}                 & \textbf{0.7998}                 \\ \cmidrule(l){3-18} 
                    & \multirow{4}{*}{Model B} & Baseline                & 2.8166                          & \fbox{0.7558 }                         & 0.5175                         & 0.7665                          & \multicolumn{1}{l|}{2.2839}          & 1.0061                          & \fbox{0.3596}                          & \fbox{0.2196}                         & 0.3551                          & \multicolumn{1}{l|}{1.0990}          & 0.7786                          & \fbox{0.2306}                          & 0.1391                         & 0.2380                          & 0.7249                          \\ \cmidrule(l){3-18} 
                    &                          & Gaussian                & \textbf{2.3193}                 & 0.8405                          & \textbf{0.5142}                & \textbf{0.7543}                 & \multicolumn{1}{l|}{2.6520}          & 1.0602                          & 0.4263                          & 0.2304                         & \textbf{0.3525}                 & \multicolumn{1}{l|}{\textbf{1.0681}} & 0.8256                          & 0.2518                          & 0.1416                         & 0.2444                          & 0.7536                          \\
                    &                          & Uniform                 & \textbf{2.1946}                 & 0.8316                          & 0.5193                         & \textbf{0.7352}                 & \multicolumn{1}{l|}{2.5765}          & 1.2056                 & 0.4038                          & 0.2320                         & \textbf{0.3373}                 & \multicolumn{1}{l|}{\textbf{1.0528}} & \textbf{0.7167}                 & 0.2409                          & \textbf{0.1372}                & 0.2493                          & \textbf{0.6920}                 \\
                    &                          & Epanechnikov            & 2.9702                          & 0.9008                          & \textbf{0.4892}                & 0.8702                          & \multicolumn{1}{l|}{\textbf{2.1331}} & \textbf{1.0053}                 & 0.3750                          & 0.2297                         & \textbf{0.3473}                 & \multicolumn{1}{l|}{\textbf{1.0895}} & \textbf{0.7502}                 & 0.2454                          & \textbf{0.1390}                & \textbf{0.2315}                 & \textbf{0.6683}                 \\ \bottomrule
\end{tabular}
\end{adjustbox}
\end{table}

From Table \ref{tab:mse}, first, we can see a significant improvement in MSE as the sample size increases. A sample size of 1000 is inadequate for model training in our settings and is much smaller than the number of parameters of the networks. For sample size 10000, our ConquerNet outperform the baseline model in most scenarios and quantile levels, regardless of which kernel is used, especially for low ($\tau=0.05$) and high ($\tau=0.95$) quantiles. It is reasonable because the original quantile loss is not differentiable at the origin point, which leads to biased subgradients for parameter update. Second, Scenario 3 has discontinuity points in the function $g(\cdot)$, while $g(\cdot)$ is smooth in Scenario 1 and Scenario 2. The MSE results are consistent with Theorem \ref{thm:minimax rate} in the sense that MSE is smaller when the smoothness $s$ increases. Third, Model B has more hidden layers than Model A, while the number of parameters is close. Meanwhile, $d=2$ in Scenario 1, $d=5$ in Scenario 2 and 3. The MSE results show that when $d$ is small, shallow networks are more suitable. In contrast, for $d=5$, deep networks perform better, which confirms the layer condition for the minimax rate in \eqref{eq:layer condition} of Theorem \ref{thm:minimax rate} to some extent.

We also record the training time for each setting and for 50 trials. Due to the page limit, we present the result for $\tau=0.05$ in Figure \ref{fig:time_tau=0.05}. The rest results are shown in Figures \ref{fig:time_tau=0.25}-\ref{fig:time_tau=0.95} in Appendix \ref{sec:additional experiments}. We can conclude that our ConquerNet are generally faster than the baseline, with only 80\% of the training time consumed, and are stable enough.
\begin{figure}[htpb]
    \centering
\includegraphics[width=0.8\linewidth]{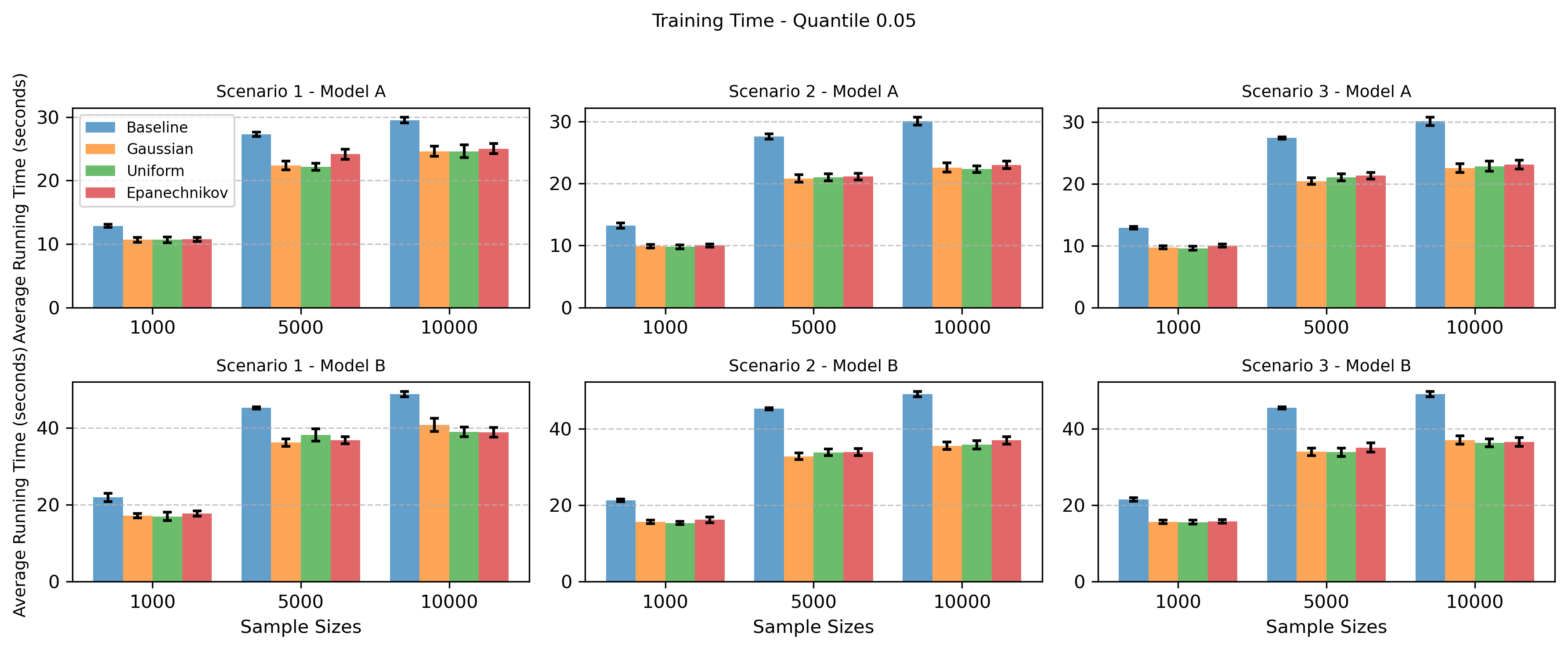}
    \caption{Bar chart with error bars with average training time over 50 trials under quantile level $\tau=0.05$. The error bars represent 95\% confidence intervals of training time for each setting.}
    \label{fig:time_tau=0.05}
\end{figure}

\subsection{Choice of bandwidth $h$}

The bandwidth $h$ should be chosen properly based on the theoretical results in Section \ref{sec:theory} and the experiments. As shown in Figure \ref{fig:loss diff}, the difference between smoothed loss $\ell_h(u)$ and quantile loss $\rho_{\tau}(u)$ grows larger with $h$ increases. 
Furthermore, as $h$ increases, the smoothed area becomes larger and the gradients become smaller, which affects the efficiency of SGD. 
On the other hand, too small $h$ reduces to the original quantile loss and increases the sharpness of the minima, which tends to result in poor generalization. Therefore, a proper choice of bandwidth is necessary. By Theorem \ref{thm:minimax rate}, we accept a bigger $h$ when the sample size $n$ is small and a smaller $h$ when $n$ is big. For example in Table~\ref{tab:mse}, we take $h=0.01/0.005/0.001$ for $n=1000/5000/10000$ in Scenario 2, Model A. We find that the performance remains outstanding with a wide range of bandwidth $h$ for $n=10000$, see Table~\ref{tab:mse with h} in Appendix~\ref{sec:additional experiments}, which shows the stability of the results for different bandwidth choices.

We also make it clear to the data-driven rules for bandwidth selection. In detail, we propose the K-fold cross-validation algorithm for the bandwidth selection. For a candidate list of bandwidths, $h\in\{0.001,0.005,0.01,0.05,0.1\}$ for example, we train the models on the training set and calculate the pinball loss on the validation set for K times. Select the bandwidth with the minimum mean validation loss. Simulation studies have been implemented for Scenario 2, and Table~\ref{tab:mse-single-nores-CV-S2-A} presents the MSE results for model A. The complete MSE results are shown in Table~\ref{tab:mse-single-nores-CV-S2} in Appendix~\ref{sec:additional experiments}. By the cross-validation algorithm, our ConquerNet still outperform the baseline models, especially for large sample sizes, which shows the stability of our method.

\begin{table}[htpb]
\centering
\caption{Mean squared error (MSE) performances for scenario 2, model A with 5-fold cross-validation under different sample sizes, quantile levels, and smoothing kernels. The MSEs are averaged over 50 independent trials.}
\label{tab:mse-single-nores-CV-S2-A}
\begin{adjustbox}{width=0.6\textwidth,center}
\begin{tabular}{@{}cccccccc@{}}
\toprule
\multirow{2}{*}{$n$}   & \multirow{2}{*}{Method} & \multicolumn{5}{c}{Scenario 2, Model A} \\ \cmidrule(l){3-7}
                       &                         & $\tau$=0.05     & $\tau$=0.25     & $\tau$=0.5      & $\tau$=0.75     & $\tau$=0.95     \\ \midrule
\multirow{4}{*}{1000}  & Baseline                & 0.9195          & \fbox{0.0714}          & 0.0586          & 0.0894          & 0.8167          \\ \cmidrule(l){2-7}
                       & Gaussian                & \textbf{0.7859} & 0.0763          & \textbf{0.0498} & \textbf{0.0890} & 0.8510          \\
                       & Uniform                 & 0.9437          & 0.0827          & \textbf{0.0569} & \textbf{0.0744} & \textbf{0.7308} \\
                       & Epanechnikov            & 1.0141          & 0.0759          & \textbf{0.0462} & 0.0919          & \textbf{0.6248} \\ \midrule
\multirow{4}{*}{5000}  & Baseline                & 0.3446          & 0.0258          & 0.0241          & 0.0335          & 0.3439          \\ \cmidrule(l){2-7}
                       & Gaussian                & \textbf{0.2358} & 0.0275          & \textbf{0.0180} & \textbf{0.0302} & \textbf{0.3064} \\
                       & Uniform                 & \textbf{0.2477} & 0.0310          & \textbf{0.0184} & \textbf{0.0268} & \textbf{0.2732} \\
                       & Epanechnikov            & \textbf{0.2556} & \textbf{0.0237} & \textbf{0.0169} & \textbf{0.0273} & \textbf{0.2657} \\ \midrule
\multirow{4}{*}{10000} & Baseline                & 0.1511          & 0.0193          & 0.0164          & 0.0229          & 0.1853          \\ \cmidrule(l){2-7}
                       & Gaussian                & 0.1577          & \textbf{0.0162} & \textbf{0.0139} & \textbf{0.0184} & \textbf{0.1401} \\
                       & Uniform                 & \textbf{0.1489} & \textbf{0.0153} & \textbf{0.0120} & \textbf{0.0210} & \textbf{0.1508} \\
                       & Epanechnikov            & \textbf{0.1431} & \textbf{0.0165} & \textbf{0.0121} & \textbf{0.0187} & \textbf{0.1852} \\ \bottomrule
\end{tabular}
\end{adjustbox}
\end{table}

\subsection{Real data analysis}
We employ the ConquerNet to analyze the BMI (body mass index) data (\url{https://www.kaggle.com/datasets/chik0di/health-and-lifestyle-dataset/}). We consider the samples without family history of lifestyle diseases, group the samples by gender, and predict BMI with age, daily steps, sleep hours, water intake and calories consumed. We get 35011 male samples and 35074 female samples, train the baseline and ConquerNet with 80\% of samples, and evaluate the pinball loss on the rest of the samples. The results are shown in Table~\ref{tab:bmi_single}. Our ConquerNet achieves better performance in most cases, showing the
good generalizability.

\begin{table}[htpb]
\centering
\caption{Pinball loss performance for BMI (body mass index) prediction.}
\label{tab:bmi_single}
\begin{adjustbox}{width=0.6\textwidth,center}
\begin{tabular}{@{}cclllll@{}}
\toprule
\multirow{2}{*}{Gender} & \multirow{2}{*}{Method} & \multicolumn{5}{c}{Quantile Level} \\ 
\cmidrule(l){3-7}
                        &                         & \multicolumn{1}{c}{$\tau$=0.05} & \multicolumn{1}{c}{$\tau$=0.25} & \multicolumn{1}{c}{$\tau$=0.5} & \multicolumn{1}{c}{$\tau$=0.75} & \multicolumn{1}{c}{$\tau$=0.95} \\ \midrule
\multirow{4}{*}{Male}   & Baseline                & 0.5231                          & 2.0638                          & 2.7387                         & 2.0534                          & 0.5190                          \\ \cmidrule(l){2-7}
                        & Gaussian                & \textbf{0.5221}                 & \textbf{2.0609}                 & \textbf{2.7384}                & \textbf{2.0513}                 & \textbf{0.5189}                 \\
                        & Uniform                 & \textbf{0.5217}                 & \textbf{2.0618}                 & 2.7403                         & \textbf{2.0521}                 & 0.5192                          \\
                        & Epanechnikov            & \textbf{0.5218}                 & \textbf{2.0636}                 & 2.7389                         & \textbf{2.0518}                 & 0.5205                          \\ \midrule
\multirow{4}{*}{Female} & Baseline                & 0.5218                          & 2.0596                          & 2.7366                         & 2.0555                          & 0.5249                          \\ \cmidrule(l){2-7}
                        & Gaussian                & \textbf{0.5202}                 & \textbf{2.0446}                 & \textbf{2.7323}                & \textbf{2.0531}                 & 0.5259                          \\
                        & Uniform                 & \textbf{0.5211}                 & \textbf{2.0497}                 & \textbf{2.7311}                & 2.0642                          & 0.5251                          \\
                        & Epanechnikov            & \textbf{0.5205}                 & \textbf{2.0454}                 & \textbf{2.7276}                & \textbf{2.0523}                 & \textbf{0.5248}                 \\ \bottomrule
\end{tabular}
\end{adjustbox}
\end{table}

\subsection{Plots of loss landscape}
{To express the benefit of the ConquerNet more intuitively, we tried to plot the loss landscape of the baseline network and the ConquerNet based on \citet{li2018visualizing}, which are shown in Figure \ref{fig:loss-landscape-S2-q0.5}. The networks have the same structures, consisting of 20 layers, each with 35 nodes. We generated two random directions and ensured the models shared the same directions. Then we used the filter-wise normalization method in \citet{li2018visualizing} and calculated the loss surfaces. 
The loss landscape plot shows that the baseline network (see the subplot on the left) has at least three local minima. In contrast, the ConquerNet in the right subplot has one minimum, which implies that the ConquerNet improves the training dynamics.
}

\begin{figure}[htpb]
    \centering
        \begin{subfigure}[b]{0.4\textwidth} 
        \centering
        \includegraphics[width=\textwidth]{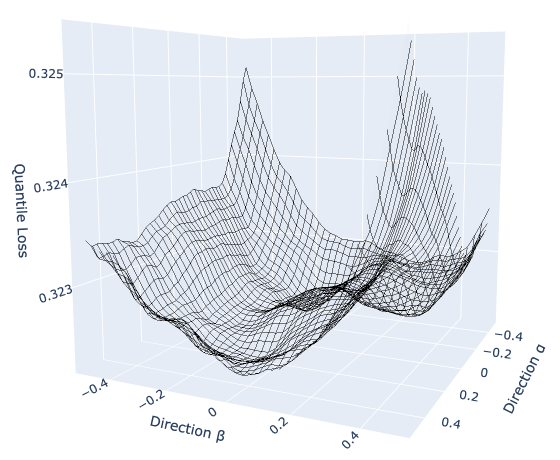} 
        \caption{Baseline}
    \end{subfigure}
    \begin{subfigure}[b]{0.4\textwidth}
        \centering
        \includegraphics[width=\textwidth]{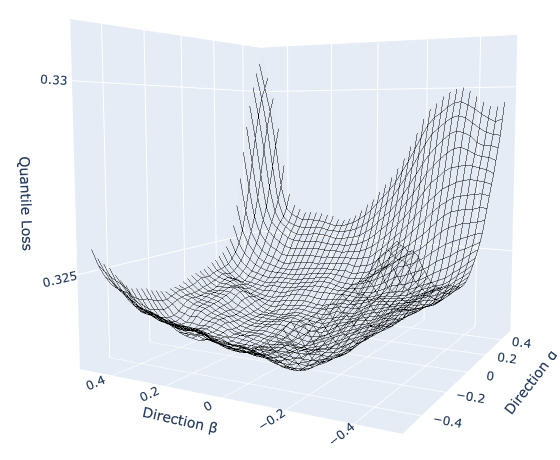}
        \caption{ConquerNet}
    \end{subfigure}

    \caption{Plot of 3D loss landscape of scenario 2, $\tau=0.5$. The subplot on the left represents the baseline model, and the right subplot represents the ConquerNet smoothed by the Gaussian kernel with $h=0.1$. Both networks consist of 20 layers, each with 35 nodes.}
    \label{fig:loss-landscape-S2-q0.5}
\end{figure}

\section{Discussion}
\label{sec:discussion}
{Our smoothing principle can potentially be extended beyond quantile regression to other distributional objectives such as CRPS and Wasserstein distances. As discussed in \cite{berrisch2023crps}, CRPS can be expressed as an integral of quantile losses over $\tau \in [0,1]$. 
Therefore, one can consider the smoothed objective
\[
\min_{f_\tau\in\mathcal{F}(P,U,L),\, \tau\in[0,1]} 
\frac{1}{n} \sum_{i=1}^n \int_0^1 \ell_h(y_i - f_\tau(x_i)) \, \mathrm{d}\tau,
\]
where $\ell_h$ is our convolution-type smoothed quantile loss. }

{Regarding the potential extension to Wasserstein-based objectives, consider the empirical Wasserstein-$p$ distance between the predictive CDF $F_f(\cdot|x_i)$ ($F_f^{-1}(\tau|x_i)=f_\tau(x_i)$) and the target CDF $G(\cdot|x_i)$
$$
 W_p(F_f, G)=\sum_{i=1}^n\left( \int_0^1 \big| f_\tau(x_i) - G^{-1}(\tau|x_i) \big|^p \, d\tau \right)^{1/p}.
$$
It is important to note that directly minimizing a Wasserstein distance is not straightforward in our conditional quantile regression, since the target $G^{-1}(\tau|x_i)$ is unknown. One alternative is to use the empirical distribution $G_n$ as the target, i.e.,
$$
 \hat W_p(F_f, G_n)=\sum_{i=1}^n\left( \int_0^1 \big| f_\tau(x_i) - G_n^{-1}(\tau|x_i) \big|^p \, d\tau \right)^{1/p},
$$
where
$$
G_n^{-1}(\tau|x_i) = \inf \Big\{ y : \frac{1}{|\mathcal{N}_i|} \sum_{j \in \mathcal{N}_i} \mathbf{1}_{\{y_j \le y\}} \ge \tau \Big\}, 
\quad \mathcal{N}_i = \{ j : x_j = x_i \}.
$$
However, this requires repeated observations $(y_j,x_j)$ at the same covariate $x_j=x_i$. 
Such a setting arises in reinforcement learning \citep{dabney2018distributional}, 
where multiple returns $y_i$ for a fixed state $x_i$ can be observed under different rollouts. In Bayesian settings, \cite{zhang2020propagation} chain quantile regression with Wasserstein-based objectives for posterior inference. 
Some existing literature provides partial connections between the quantile loss and Wasserstein objectives. 
\cite{lheritie2022cramer} discussed that optimization based on the quantile loss can be interpreted as a 1-Wasserstein projection in certain settings, 
while \cite{yang2024multiple} introduces a Wasserstein-improved objective for composite quantile regression. While a direct application is beyond the scope of our current work, these studies highlight potential directions for future research.}



\bibliography{our_main}
\bibliographystyle{tmlr}

\newpage
\appendix
\onecolumn

\section{Proofs}
\label{sec:proof}
We first state several definitions to develop our Empirical process theorems and auxiliary lemmas. Define the empirical loss function as
$$
\hat{M}_n(f)=\sum_{i=1}^n \hat M_{n,i}(f),\quad \hat M_{n,i}(f)=\frac{1}{n}(\ell_h(y_i-f(\mf x_i))-\ell_h(y_i-f_n(\mf x_i))),
$$
and we set 
$$
M_{n}(f)=\mb E\left\{\frac{1}{n}\sum_{i=1}^n\left[\ell_h(y_i-f(\mf x_i))-\ell_h(y_i-f_n(\mf x_i))\right]\right\}.
$$
For $\epsilon > 0$ and a metric $\text{dist}(\cdot, \cdot)$ on the class of functions $\mathcal{F}$, we define the covering number $\mathcal{N}(\epsilon, \mathcal{F}, \text{dist}(\cdot, \cdot))$ as the minimum number of balls of the form $\{ g : \text{dist}(g, f) \leq \epsilon \}$, with $f \in \mathcal{F}$, needed to cover $\mathcal{F}$ (see Definition 2.2 in \cite{sen2018gentle} for details). We write $\tilde{\mathcal{I}}(L,W,S,B)=\{f:f\in\mathcal{I}(L,W,S,B),\|f\|_\infty\leq F\}$.
For simplicity, we consider $[0,1]^d$ instead of $[-H,H]^d$ in Assumptions 3-4.

\subsection{Auxiliary Lemmas}

\begin{lemma}
\label{lem:f-f_n}
Suppose that $\left\|f_n-f_\tau^*\right\|_{\infty} \leq c$ for a small enough constant $c$ we have
\begin{equation}
\Delta^2(f_n,f) \leq C\left[\mathbb{E}\left(\ell_h(Y-f(X))-\ell_h\left(Y-f_n(X)\right)\right)+(\left\|f_n-f_\tau^*\right\|_{\infty} +h^2)\Delta\left(f, f_n\right) \sqrt{F}\right].\label{eq:f-f_n delta} 
\end{equation}
and
\begin{equation}
\left\|f-f_n\right\|_{\ell_2}^2 \leq C\max\{1,F\}\left[\mathbb{E}\left(\ell_h(Y-f(X))-\ell_h\left(Y-f_n(X)\right)\right)+(\left\|f_n-f_\tau^*\right\|_{\infty}+h^2)\left\|f-f_n\right\|_{\ell_2} \sqrt{F}\right],\label{eq:f-f_n l2}    
\end{equation}
for any $f \in \tilde{\mathcal{I}}(L, W, S, B)$ and for some constant $C>0$.    
\end{lemma}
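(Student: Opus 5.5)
We will reduce the smoothed excess risk to the unsmoothed quantile excess risk plus a smoothing bias, and then follow the \citet{padilla2022quantile} argument that lower-bounds the quantile excess risk by $\Delta^2$.

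\textbf{Step 1: a Knight-type representation.} Since $\ell_h=\rho_\tau*K_h$, we have $\ell_h'(u)=\tau-\bar K_h(-u)$ with $\bar K_h(t)=\int_{-\infty}^t K_h$. Conditionally on $X=\mf x$, write $a=f_n(\mf x)$ and $\delta=f(\mf x)-f_n(\mf x)$. Then
\[
\mathbb{E}\bigl[\ell_h(Y-f(\mf x))-\ell_h(Y-f_n(\mf x))\mid X=\mf x\bigr]=\int_0^{\delta}\bigl(G_h(a+t)-\tau\bigr)\,\mathrm{d}t,
\]
where $G_h(z)=\mathbb{E}[\bar K_h(z-Y)\mid X=\mf x]=\int F_{Y|X}(z-hv)K(v)\,\mathrm{d}v$ is the smoothed conditional CDF. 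Because $\int vK=0$ and $p'_{Y|X}$ is locally Lipschitz by \eqref{eq:assump1 lip upper}, a second-order Taylor expansion gives $|G_h(z)-F_{Y|X}(z)|\le C h^2$ for $z$ near $f_\tau^*(\mf x)$. Boundedness of $p'$ near the quantile follows from continuity together with compactness and $\|f\|_\infty\le F$. For $z$ far from the quantile we use the crude bound $|G_h-F|\le 1$. The conditional excess risk is therefore
\[
\int_0^\delta\bigl(F_{Y|X}(a+t)-\tau\bigr)\,\mathrm{d}t+R,\qquad |R|\le C h^2|\delta|.
\]

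\textbf{Step 2: lower bound for the unsmoothed part.} Since $F_{Y|X}(f_\tau^*)=\tau$ and $|a-f_\tau^*|\le\|f_n-f_\tau^*\|_\infty=:\eta\le c<\kappa/2$, we write
\[
\int_0^\delta\bigl(F(a+t)-\tau\bigr)\,\mathrm{d}t=\int_0^\delta\bigl(F(a+t)-F(a)\bigr)\,\mathrm{d}t+\delta\bigl(F(a)-F(f_\tau^*)\bigr).
\]
The second term is at least $-\bar p\,\eta|\delta|$, where $\bar p$ is a local upper bound on the density. For the first term, monotonicity of $F$ and \eqref{eq:assump1 lip lower} applied around $f_\tau^*$ (after absorbing the shift $\eta$) give a lower bound of $c'\min\{\delta^2,\kappa|\delta|\}\ge c''D^2(\delta)$. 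For $|t|\le\kappa/2$ the integrand is at least $\underline p|t|/2$ up to an $O(\eta)$ term. For $|\delta|>\kappa/2$ we use monotonicity to obtain linear growth $\gtrsim|\delta|$.

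\textbf{Step 3: assemble the bound.} Taking expectation over $X$ gives
\[
c''\Delta^2(f,f_n)\le \mathbb{E}\bigl(\ell_h(Y-f(X))-\ell_h(Y-f_n(X))\bigr)+C(\eta+h^2)\,\mathbb{E}|f(X)-f_n(X)|.
\]
It remains to bound $\mathbb{E}|\delta|\lesssim\sqrt F\,\Delta(f,f_n)$. Because $|\delta|\le 2F$, we have $|\delta|\le \max\{1,\sqrt{2F}\}\,D(\delta)$: when $|\delta|\le1$, $D=|\delta|$; otherwise $|\delta|=D^2\le\sqrt{2F}\,D$. Cauchy--Schwarz then gives $\mathbb{E}|\delta|\le \mathbb{E}D(\delta)\cdot\max\{1,\sqrt{2F}\}\le\sqrt{2F}\,\Delta$ for $F\ge1$, which yields \eqref{eq:f-f_n delta}. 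For \eqref{eq:f-f_n l2}, note that $\delta^2\le\max\{1,2F\}D^2(\delta)$, so $\|f-f_n\|_{\ell_2}^2\le 2\max\{1,F\}\Delta^2$. In the remainder term, use $\mathbb{E}|\delta|\le\|f-f_n\|_{\ell_2}$ directly.

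\textbf{Main obstacle.} The delicate step is the uniform $O(h^2)$ smoothing bias in Step 1. It requires the Lipschitz condition on $p'$, which only holds locally around $f_\tau^*$, while $a+t$ may range over $[-2F,2F]$. We handle this by splitting at $|t|\le\kappa/2$. Inside this region the $h^2$ bias bound applies. Outside it, the strictly positive linear growth from Step 2 dominates any $O(1)$ discrepancy, provided $h$ is small enough (or the kernel tails are controlled), so the remainder can be absorbed into the constants.
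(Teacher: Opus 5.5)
Your proposal is correct and is essentially the paper's argument. Your representation $\int_0^\delta (G_h(a+t)-\tau)\,\mathrm{d}t$ is exactly Knight's identity integrated against $K_h$, and the other ingredients match the paper's: the $O(h^2)$ smoothing bias from $\int vK=0$ and the second moment, local identifiability for the $D^2$ lower bound, $\mathbb{E}|\delta|\lesssim\sqrt{F}\,\Delta$, and $\|f-f_n\|_{\ell_2}^2\lesssim\max\{1,F\}\Delta^2$. Your explicit absorption of the shift $\|f_n-f_\tau^*\|_\infty$ is more careful than the paper, which applies Assumption~2 directly around $f_n$.

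The one loose spot, which the paper also glosses over, is the region $|t|>\kappa/2$. There, letting the linear growth of $F$ dominate the discrepancy $G_h-F$ requires that discrepancy to be uniformly small, not merely $O(1)$. Instead, apply monotonicity directly to the smoothed CDF $G_h$: for $\delta>\kappa/2$, say, $G_h(a+t)-\tau\ge G_h(a+\kappa/2)-\tau$, so the $h^2$ bias bound is needed only at the single point $a+\kappa/2$ near the quantile.
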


\begin{proof}
By Knight identity \citep{knight1998limiting}, 
\begin{align}
&\rho_\tau(Y-f(X)+s)-\rho_\tau\left(Y-f_n(X)+s\right)\notag\\
&=-\left(f(X)-f_n(X)\right)\left(\tau-\mf1\left\{Y+s \leq f_n(X)\right\}\right)+\int_0^{f(X)-f_n(X)}\left[\mf1\left\{Y+s \leq f_n(X)+z\right\}-\mf1\left\{Y+s \leq f_n(X)\right\}\right] \mr d z ,\notag\\
&= -\left(f(X)-f_n(X)\right)\left(\tau-\mf1\left\{Y+s \leq f_\tau^*(X)\right\}\right)-\left(f(X)-f_n(X)\right) \left(\mf1\left\{Y+s \leq f_\tau^*(X)\right\}-\mf1\left\{Y+s \leq f_n(X)\right\}\right)\notag\\
&+\int_0^{f(X)-f_n(X)}\left[\mf1\left\{Y+s \leq f_n(X)+z\right\}-\mf1\left\{Y+s \leq f_n(X)\right\}\right] \mr d z.\label{eq:decomp rho diff}
\end{align}
By \eqref{eq:assump1 lip lower} in Assumption 2 and mean value expansion, applying Fubini's theorem and the fact $\int sK_h(s)\mr ds=0,\int s^2K_h(s)\mr ds=\sigma_K^2 h^2$, we have for some constant $\underline{c},c_\tau>0$,
\begin{align}
&\mb E\left[\int K_h(s) \int_0^{f(X)-f_n(X)}\mf1\left\{Y +s\leq f_n(X)+z\right\}-\mf1\left\{Y+s \leq f_n(X)\right\}\mr dz\mr ds\Bigg| X\right],\notag\\
&=\mb E\left[\int K_h(s) \int_0^{f(X)-f_n(X)} F_{Y|X}(f_n(X)+z-s)-F_{Y|X}(f_n(X)-s)\mr dz\mr ds \right],
\notag\\
&\geq  \mb E \left[ \underline{p}\int_0^{f(X)-f_n(X)} \min\{z,\kappa\} \mr dz \right]-\underline{c}h^2\mb E|f(X)-f_n(X)|, \notag\\
&\geq c_\tau \left(\mb E\left[D^2(f(X) -f_n(X) )\right]-h^2\mb E|f(X)-f_n(X)|\right),  \label{eq:decomp int lower}
\end{align}
Combining \eqref{eq:key ell-rho}, \eqref{eq:decomp rho diff}, and \eqref{eq:decomp int lower}, by Fubini's theorem, we have
\begin{align}
&\mb E\left\{\ell_h(Y-f(X))-\ell_h(Y-f_n(X))\right\}\notag\\
&=\int K_h(s)   
    \mb E\left\{\rho_\tau(Y+s-f(X))-\rho_\tau\left(Y+s-f_n(X)\right)\right\}\mr ds,\notag\\
    &\geq  -C\mb E\left\{|f(X)-f_n(X)|\cdot(|f_\tau^*(X)-f_n(X)|+h^2)\right\}+c_\tau \left(\mb E\left[D^2(f(X) -f_n(X) )\right]\right),\notag\\
    &\geq -C (\|f_\tau^*-f_n\|_\infty+h^2) \sqrt{F\Delta^2(f,f_n)}+c_\tau \Delta^2(f,f_n),
\end{align}
which yields \eqref{eq:f-f_n delta}. Furthermore, note that 
$\frac{\|f-f_n\|^2_{\ell^2}}{\max\{F,1\}}\leq \Delta^2(f,f_n)\leq \|f-f_n\|^2_{\ell^2}$, then \eqref{eq:f-f_n l2} holds.
\end{proof}

\begin{lemma}
\label{lem:f-f_n empirical}
Suppose that $f_n\in\tilde{\mathcal{I}}(L,W,S,B)$ and $\left\|f_n-f_\tau^*\right\|_{\infty} \leq c$ for a sufficiently small constant $c>0$. The estimator $\hat{f}_h$ defined in \eqref{eq:estimator besov} satisfies for some constant $C>0$,
\begin{equation}
    \Delta^2\left(\hat{f}_h, f_n\right) \leq C\left[ M_n(\hat{f}_h)-\hat{M}_n(\hat{f}_h)+(\left\|f_n-f_\tau^*\right\|_{\infty}+h^2) \Delta\left(\hat{f}_h, f_n\right) \sqrt{F}\right].\label{eq:f-f_n delta empirical}
\end{equation}
Furthermore,
\begin{equation}
    \left\|\hat{f}_h-f_n\right\|_{\ell_2}^2 \leq C\max\{1,F\}\left[M_n(\hat{f}_h)-\hat{M}_n(\hat{f}_h)+(\left\|f_n-f_\tau^*\right\|_{\infty}+h^2)\left\|\hat{f}_h-f_n\right\|_{\ell_2} \sqrt{F}\right] .\label{eq:f-f_n delta l2 empirical}
\end{equation}
\end{lemma}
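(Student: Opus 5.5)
The plan is to apply Lemma~\ref{lem:f-f_n} with $f=\hat f_h$ and then use the defining optimality of $\hat f_h$ to replace the population excess risk by a difference between population and empirical excess risks. The key observation is that $M_n(f)$ is exactly the population excess smoothed loss $\mathbb{E}\left(\ell_h(Y-f(X))-\ell_h(Y-f_n(X))\right)$. This holds because the samples are i.i.d. by Assumption~1, so the average of the $n$ identically distributed terms has the same expectation as a single term. This identification requires $f$ to be treated as a fixed function, which is the interpretation we adopt for $M_n(\hat f_h)$: the expectation is taken over a fresh copy of $(X,Y)$ with $\hat f_h$ held fixed.

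\textbf{Step 1.} Since $\hat f_h\in\tilde{\mathcal I}(L,W,S,B)$ and $\|f_n-f_\tau^*\|_\infty\le c$, inequality \eqref{eq:f-f_n delta} of Lemma~\ref{lem:f-f_n} applies to $f=\hat f_h$. It gives
\begin{equation*}
\Delta^2(\hat f_h,f_n)\le C\left[M_n(\hat f_h)+(\|f_n-f_\tau^*\|_\infty+h^2)\Delta(\hat f_h,f_n)\sqrt F\right].
\end{equation*}

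\textbf{Step 2.} Because $f_n$ is itself feasible in the minimization \eqref{eq:estimator besov} (it lies in $\tilde{\mathcal I}(L,W,S,B)$) and $\hat f_h$ is the minimizer, we have
\begin{equation*}
\hat M_n(\hat f_h)=\frac1n\sum_{i=1}^n\left[\ell_h(y_i-\hat f_h(\mathbf x_i))-\ell_h(y_i-f_n(\mathbf x_i))\right]\le 0 .
\end{equation*}
Hence $M_n(\hat f_h)\le M_n(\hat f_h)-\hat M_n(\hat f_h)$. Substituting this into the bound of Step 1 yields \eqref{eq:f-f_n delta empirical}.

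\textbf{Step 3.} The $\ell_2$ version \eqref{eq:f-f_n delta l2 empirical} follows the same way, starting from \eqref{eq:f-f_n l2} in place of \eqref{eq:f-f_n delta}. Equivalently, one can start from \eqref{eq:f-f_n delta empirical} and use the sandwich $\|f-g\|_{\ell_2}^2/\max\{F,1\}\le\Delta^2(f,g)\le\|f-g\|_{\ell_2}^2$, which holds because $|f-g|\le 2F$ on the domain (up to absorbing constants).

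The main subtlety is not computational. It is making the random-function issue precise: $\hat f_h$ depends on the data, so $M_n(\hat f_h)$ must be read as $M_n(f)$ evaluated at $f=\hat f_h$, not as an expectation over the data jointly with $\hat f_h$. Lemma~\ref{lem:f-f_n} holds pointwise for every deterministic $f$ in the class, so it applies on each sample realization. Once this is stated, the argument is a direct chaining of the earlier lemma with the basic inequality from empirical risk minimization. The stochastic term $M_n-\hat M_n$ is deliberately left for later control by empirical process bounds.
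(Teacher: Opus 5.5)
Your proposal is correct and follows the paper's proof essentially verbatim: apply Lemma~\ref{lem:f-f_n} to $f=\hat f_h\in\tilde{\mathcal{I}}(L,W,S,B)$, then use the basic inequality $\hat M_n(\hat f_h)\le 0$ (valid because $f_n$ is feasible) to replace $M_n(\hat f_h)$ by $M_n(\hat f_h)-\hat M_n(\hat f_h)$, and obtain the $\ell_2$ version the same way from the $\ell_2$ bound of that lemma. Your write-up is, if anything, slightly more careful than the paper's: the paper's displayed chain drops the factor $\Delta(\hat f_h,f_n)\sqrt{F}$ on the $h^2$ term, and its sandwich uses $\max\{F,1\}$ where $|f-g|\le 2F$ actually gives $\max\{2F,1\}$.
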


\begin{proof}
Since $\hat{f}_h$ in \eqref{eq:estimator besov} satisfies $\hat{f}_h\in\tilde{\mathcal{I}}(L,W,S,B)$, then by Lemma \ref{lem:f-f_n},
\begin{align}
    \Delta^2(\hat{f}_h,f_n)&\leq  C\left[\mathbb{E}\left(\ell_h(Y-\hat{f}_h(X))-\ell_h\left(Y-f_n(X)\right)\right)+\left\|f_n-f_\tau^*\right\|_{\infty} \Delta\left(\hat{f}_h, f_n\right) \sqrt{F}+h^2\right],\notag\\
    &\leq C\left[ M_n(\hat{f}_h)-\hat{M}_n(\hat{f}_h)+\left\|f_n-f_\tau^*\right\|_{\infty} \Delta\left(\hat{f}_h, f_n\right) \sqrt{F}+h^2 \right],
\end{align}
where the last inequality is obtained by the fact $\hat{M}_n(\hat{f}_h)\leq 0$. Similarly, \eqref{eq:f-f_n delta l2 empirical} can also hold by Lemma \ref{lem:f-f_n} and the optimality of $\hat{f}_h$.  

\end{proof}

\begin{lemma}
\label{lem:radius rademacher}
Suppose that
\begin{equation}
3 \mathbb{E}\left(\sup _{f \in \tilde{\mathcal{I}}(L, W, S, B),\left\|f-f_n\right\|_{\ell_2}^2 \leq r^2} \frac{1}{n} \sum_{i=1}^n \xi_i\left(f\left(\mf x_i\right)-f_n\left(\mf x_i\right)\right)^2\right) \leq r^2,\label{eq:3rademacher}    
\end{equation}
for $\left\{\xi_i\right\}_{i=1}^n$ Rademacher variables independent of $\left\{\left(\mf x_i, y_i\right)\right\}_{i=1}^n$, and
\begin{equation}
2 F \sqrt{\frac{7\gamma}{3 n}}\leq r
\label{eq:lower radius}    
\end{equation}
Then with probability at least $1-e^{-\gamma},\left\|f-f_n\right\|_{\ell_2}^2 \leq r^2$ with $f \in \tilde{\mathcal{I}}(L, W, S, B)$ implies
$$
\left\|f-f_n\right\|_n^2 \leq(2 r)^2.
$$
\end{lemma}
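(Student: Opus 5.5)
The plan is to apply Talagrand's concentration inequality (Bousquet's version) to the centered empirical process indexed by the squared differences
\[
g_f=(f-f_n)^2,\qquad f\in\tilde{\mathcal{I}}(L,W,S,B),\ \|f-f_n\|_{\ell_2}^2\le r^2 .
\]
Set
\[
Z=\sup_f\Bigl(\|f-f_n\|_n^2-\|f-f_n\|_{\ell_2}^2\Bigr)=\sup_f \frac1n\sum_{i=1}^n\bigl(g_f(\mf x_i)-\mb E g_f(X)\bigr).
\]
The claim then reduces to showing that $Z\le 3r^2$ with probability at least $1-e^{-\gamma}$. On that event, every admissible $f$ satisfies $\|f-f_n\|_n^2\le r^2+3r^2=(2r)^2$.

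\textbf{Envelope and variance.} Since $\|f\|_\infty,\|f_n\|_\infty\le F$, we have $0\le g_f\le 4F^2$, so the envelope is $b=4F^2$. For the variance,
\[
\mathrm{Var}(g_f)\le \mb E g_f^2\le 4F^2\,\mb E g_f\le 4F^2 r^2=:\sigma^2 .
\]

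\textbf{Expectation.} Standard symmetrization gives $\mb E Z\le 2\,\mb E\sup_f \frac1n\sum\xi_i g_f(\mf x_i)$. By hypothesis \eqref{eq:3rademacher}, the right-hand side is at most $2r^2/3$.

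\textbf{Bousquet's inequality.} With probability at least $1-e^{-\gamma}$,
\[
Z\le \mb E Z+\sqrt{\frac{2\gamma(\sigma^2+2b\,\mb E Z)}{n}}+\frac{b\gamma}{3n}.
\]
We bound each term using the lower-radius condition \eqref{eq:lower radius}, i.e.\ $F^2\gamma/n\le 3r^2/28$:
\begin{itemize}
\item $\sigma^2+2b\,\mb E Z\le 4F^2r^2+16F^2r^2/3=28F^2r^2/3$, so the square-root term is at most
\[
\sqrt{\tfrac{56}{3}F^2r^2\gamma/n}\le \sqrt{\tfrac{56}{3}\cdot\tfrac{3}{28}}\,r^2=\sqrt2\,r^2 .
\]
\item $b\gamma/(3n)=\tfrac{4F^2\gamma}{3n}\le r^2/7$.
\end{itemize}
Summing, $Z\le (2/3+\sqrt2+1/7)r^2<3r^2$, as required.

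\textbf{Main obstacle.} The delicate step is the constant bookkeeping: the factor $7/3$ in \eqref{eq:lower radius} must be matched against the precise form of Bousquet's inequality. If those constants come out slightly tighter than computed above, I would instead use the cruder split $\sqrt{a+b}\le\sqrt a+\sqrt b$ together with $2\sqrt{xy}\le x+y$, absorbing the leftover terms into the $3r^2$ slack. The conclusion is not sensitive to this, since the total only needs to stay below $3r^2$.
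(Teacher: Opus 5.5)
Your argument is correct and is essentially the paper's proof. The paper uses the same envelope $4F^2$ and variance bound $4F^2r^2$, then invokes Theorem 2.1 of Bartlett, Bousquet and Mendelson (2005), which is Bousquet's inequality plus symmetrization; taking $\alpha=1/2$ there yields exactly the constants $3$ and $7/3$ in the hypotheses and the bound $r^2+r^2+r^2$, whereas your direct use of Bousquet's inequality gives the slightly smaller total $(2/3+\sqrt{2}+1/7)r^2$, so the constant bookkeeping you worried about works out.
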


\begin{proof}
Using the fact $(a-b)^2 \leq 2(a^2+b^2)$, we have for $f\in \tilde{\mathcal{I}}(L,W,S,B)$
\begin{align}
    \operatorname{Var}\left[(f(X)-f_n(X))^2\right]&\leq \mb E(f(X)-f_n(X))^4,\notag\\
    &\leq 2(F^2+\|f_n\|_\infty^2)\mb E(f(X)-f_n(X))^2,\notag\\
    &\leq 4F^2\|f-f_n\|_{\ell_2}^2.
\end{align}
Note that $0\leq (f(\mf x)-f_n(\mf x))^2\leq 4F^2$, then by Theorem 2.1 in \cite{bartlett2005local}, for every $\gamma>0$, with probability at least $1-e^{-\gamma}$,
\begin{align}
& \sup _{f \in \tilde{\mathcal{I}}(L, W, S, B),\left\|f-f_n\right\|_{\ell_2}^2 \leq r^2}\left\{\left\|f-f_n\right\|_n^2-\left\|f-f_n\right\|_{\ell_2}^2\right\} \notag\\
& \leq 3 \mathbb{E}\left(\sup _{f \in \tilde{\mathcal{I}}(L, W, S, B),\left\|f-f_n\right\|_{\ell_2}^2 \leq r^2} \frac{1}{n} \sum_{i=1}^n \xi_i\left(f\left(X_i\right)-f_n\left(X_i\right)\right)^2\right)+r\cdot 2F \sqrt{\frac{2\gamma}{n}}+\frac{28 F^2 \gamma}{3 n},\notag\\
&\leq 3r^2,\notag
\end{align}
where the last inequality is obtained by \eqref{eq:3rademacher} and \eqref{eq:lower radius}. Then with probability at least $1-e^{-\gamma},\left\|f-f_n\right\|_{\ell_2}^2 \leq r^2$ with $f \in \tilde{\mathcal{I}}(L, W, S, B)$ implies $\left\|f-f_n\right\|_n^2 \leq 4r^2.$

\end{proof}

\begin{lemma}
\label{lem:f-f_n l2 high prob}
Suppose that $h^2\lesssim n^{-\frac{s}{2s+d}}$ and $\left\|\hat{f}_h-f_n\right\|_{\ell_2} \leq r_0$, with $r_0$ satisfying \eqref{eq:3rademacher}, \eqref{eq:lower radius}, and Assumption 4 holds. Also, with the notation of Assumption 3, suppose that for the class $\mathcal{I}(L, W, S, B)$ the parameters are chosen as
\begin{align}
& L=3+2\left\lceil\log _2\left(\frac{3^{\max \{d, m\}}}{\epsilon c_{d, m}}\right)+5\right\rceil\left\lceil\log _2 \max \{d, m\}\right\rceil, \quad W=W_0 N \label{eq:cond1 lem:f-f_n l2 high prob}\\
& S=(L-1) W_0^2 N+N, \quad B=O\left(N^{\left(v^{-1}+d^{-1}\right)(\max \{1,(d / p-s)+\})}\right)\label{eq:cond2 lem:f-f_n l2 high prob}
\end{align}
for a constant $c_{d, m}$ that depends on $d$ and $m$, a constant $W_0$, and where $v=(s-\delta) / \delta$,
\begin{equation}
\delta=\frac{d}{p}, \quad N \asymp n^{\frac{d}{2 s+d}}.\label{eq:cond3 lem:f-f_n l2 high prob}
\end{equation}
Then there exists a universal constant $C_0>0$ such that
$$
\begin{aligned}
\left\|\hat{f}_h-f_n\right\|_{\ell_2}^2 \leq &  C_0\left[r_0 F^{5/2} \sqrt{\frac{\gamma}{n}}+\frac{F^{5/2} \gamma}{n}+\right. \\
&\left. r_0 F \sqrt{\frac{N(\log N)^2}{n}}+r_0 F \sqrt{\frac{N\left[(\log N)^2+\log r_0^{-1}+\log n\right]}{n}}+N^{-s / d} r_0 F^{3/2}\right]
\end{aligned}
$$
with probability at least $1-3e^{-\gamma}$, where $N \asymp n^{\frac{d}{2 s+d}}$.
\end{lemma}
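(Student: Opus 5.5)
Starting from Lemma~\ref{lem:f-f_n empirical}, in its $\ell_2$ form \eqref{eq:f-f_n delta l2 empirical}, the task reduces to bounding the empirical-process term $M_n(\hat f_h)-\hat M_n(\hat f_h)$ uniformly over the localized class $\mathcal{G}(r_0)=\{f\in\tilde{\mathcal I}(L,W,S,B):\|f-f_n\|_{\ell_2}\le r_0\}$. Since $\hat f_h\in\mathcal{G}(r_0)$ by hypothesis, we have
\[
M_n(\hat f_h)-\hat M_n(\hat f_h)\le \sup_{f\in\mathcal G(r_0)}\{M_n(f)-\hat M_n(f)\}.
\]
The bias term $(\|f_n-f_\tau^*\|_\infty+h^2)\|\hat f_h-f_n\|_{\ell_2}\sqrt F$ is handled directly. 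The network approximation result of \cite{Suzuki2018AdaptivityOD} holds under the architecture \eqref{eq:cond1 lem:f-f_n l2 high prob}--\eqref{eq:cond3 lem:f-f_n l2 high prob} and Assumption~4, and gives $\|f_n-f_\tau^*\|_\infty\lesssim N^{-s/d}$. In particular this is below the constant $c$ for large $n$, so Lemma~\ref{lem:f-f_n empirical} applies. With $N\asymp n^{d/(2s+d)}$ and $h^2\lesssim n^{-s/(2s+d)}\asymp N^{-s/d}$, this term is at most $C N^{-s/d}r_0F^{3/2}$. Here the extra factor $\max\{1,F\}$ is absorbed into the power of $F$.

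For the supremum I will use Talagrand/Bousquet concentration, in the form of Theorem 2.1 of \cite{bartlett2005local}, applied to the functions
\[
g_f(\mf x,y)=\ell_h(y-f(\mf x))-\ell_h(y-f_n(\mf x)).
\]
The key fact is that $\ell_h$ is Lipschitz with constant $\max\{\tau,1-\tau\}\le1$, because $\ell_h'(u)=\int\rho_\tau'(u-v)K_h(v)\,dv\in[\tau-1,\tau]$. Hence $|g_f|\le|f-f_n|\le 2F$ and $\mathrm{Var}(g_f)\le\|f-f_n\|_{\ell_2}^2\le r_0^2$. With probability at least $1-e^{-\gamma}$, the concentration inequality then gives
\[
\sup_{\mathcal G(r_0)}\{M_n-\hat M_n\}\le 2\,\mathbb E\,\mathrm{Rad}_n(\mathcal G(r_0))+r_0\sqrt{2\gamma/n}+CF\gamma/n .
\]
Multiplying by the factor $C\max\{1,F\}\sqrt F$-type constants yields the $r_0F^{5/2}\sqrt{\gamma/n}$ and $F^{5/2}\gamma/n$ terms; I will be loose about powers of $F$.

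For the Rademacher complexity, the Ledoux--Talagrand contraction inequality, applied with the $1$-Lipschitz $\ell_h$, reduces it to $\mathbb E\sup_{\mathcal G(r_0)}n^{-1}\sum\xi_i(f-f_n)(\mf x_i)$. I then symmetrize conditionally on the design and apply Dudley's entropy integral in the empirical norm $\|\cdot\|_n$. To do this I need the empirical radius $\|f-f_n\|_n\le 2r_0$, which Lemma~\ref{lem:radius rademacher} provides on an event of probability at least $1-e^{-\gamma}$ thanks to \eqref{eq:3rademacher}--\eqref{eq:lower radius}. On the complement I bound the sup crudely by $2F$, which costs another $e^{-\gamma}$. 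This accounts for the total $3e^{-\gamma}$: the concentration event, the radius event, and this complement.

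The covering-number bound for sparse ReLU networks (as in \cite{schmidt2020nonparametric,Suzuki2018AdaptivityOD}) is
\[
\log\mathcal N(\varepsilon,\tilde{\mathcal I},\|\cdot\|_\infty)\lesssim S L\log(\varepsilon^{-1}LBW).
\]
With $S\asymp LN$, $L\asymp\log N$, and $\log B\lesssim\log N$, this becomes $\lesssim N(\log N)^2+N\log N\log\varepsilon^{-1}$. Integrating from $\sim r_0/n$ up to $2r_0$ gives
\[
r_0\sqrt{N[(\log N)^2+\log r_0^{-1}+\log n]/n},
\]
plus a term $r_0\sqrt{N(\log N)^2/n}$ coming from the constant part of the entropy. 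Multiplying by $F$ produces the two middle terms of the claim.

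I expect the main obstacle to be the entropy step. It requires tracking the logarithmic dependence on $B$, $L$, $W$ and the lower integration limit carefully enough to produce exactly $(\log N)^2+\log r_0^{-1}+\log n$, and it requires correctly combining the conditional Dudley bound, which holds on the event from Lemma~\ref{lem:radius rademacher}, with the unconditional expectation. I would handle this by splitting the expectation over that event and its complement as described above.
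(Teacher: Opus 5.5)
Your architecture is the paper's: Theorem 2.1 of \cite{bartlett2005local} over the localized class, contraction through the $1$-Lipschitz $\ell_h$, the radius transfer of Lemma~\ref{lem:radius rademacher}, Dudley with the sparse-network covering bound, and $\|f_n-f_\tau^*\|_\infty+h^2\lesssim N^{-s/d}$ for the bias. The gap is in how you combine the concentration step with the radius lemma.

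You use the version of Theorem 2.1 whose right-hand side contains the \emph{expected} Rademacher complexity $\mathbb E\,\mathrm{Rad}_n(\mathcal G(r_0))$. This is a deterministic number. Splitting that expectation over the event $A$ of Lemma~\ref{lem:radius rademacher} gives
\[
\mathbb E\,\mathrm{Rad}_n(\mathcal G(r_0))\le\bigl(\text{Dudley bound at empirical radius }2r_0\bigr)+2F\,\mathrm P(A^c)\le\bigl(\text{Dudley bound}\bigr)+2Fe^{-\gamma}.
\]
So the complement costs an \emph{additive} term, not an extra exceptional event. In any case, $A$ and $A^c$ could not count as two separate probability losses. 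What your argument actually proves is the bound with probability $1-e^{-\gamma}$, but with an extra term of order $F\max\{1,F\}e^{-\gamma}$ on the right. That term is not dominated by the stated ones unless $e^{-\gamma}\lesssim\gamma/n$, i.e.\ essentially $\gamma\gtrsim\log n$. For fixed $\gamma$ it is of constant order, while every stated term tends to zero. This is harmless for Theorem~\ref{thm:minimax rate}, where $\gamma\asymp\log n$, but it is not the lemma as stated.

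The paper avoids the remainder by using the other, data-dependent half of Theorem 2.1. With probability $1-2e^{-\gamma}$, it bounds $\sup_{\mathcal G(r_0)}\{M_n-\hat M_n\}$ by the conditional average $\mathbb E_\xi\sup_{g\in\mathcal G(r_0)}n^{-1}\sum_i\xi_ig(\mf x_i,y_i)$ plus the $r_0\sqrt{\gamma/n}$ and $\gamma/n$ terms. Contraction is then applied conditionally on the data. On the event of Lemma~\ref{lem:radius rademacher} (probability $1-e^{-\gamma}$), the index set lies inside $\{\|f-f_n\|_n\le 2r_0\}$, so conditional Dudley applies pointwise there. The union bound then gives exactly $1-3e^{-\gamma}$.

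If you prefer to keep the expected version, drop the event split instead. Let $\hat\sigma=\sup_{\mathcal G(r_0)}\|f-f_n\|_n$; symmetrization and \eqref{eq:3rademacher} give $\mathbb E\hat\sigma^2\le\frac53 r_0^2$. For fixed $\alpha$, the crude Dudley bound $C(\alpha+\hat\sigma\sqrt{\log\mathcal N(\alpha,\tilde{\mathcal I},\|\cdot\|_\infty)/n})$ is linear in $\hat\sigma$, so Jensen finishes with probability $1-e^{-\gamma}$.

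On the logarithms, you need not hit the displayed bracket exactly. The paper's own chaining step uses Suzuki's covering bound, in which $\log\alpha^{-1}$ is multiplied by $S$ rather than $SL$, with $\alpha=r_0\sqrt{N(\log N)^2/n}$. It yields $N\log N[(\log N)^2+\log\alpha^{-1}]$ under the root, one $\log N$ more than stated. Your own reduction is also off: with $S\asymp LN$ and $L\asymp\log N$, $SL\log(\varepsilon^{-1}LBW)$ is of order $N(\log N)^3+N(\log N)^2\log\varepsilon^{-1}$.
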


\begin{proof}
Let
$$
\mathcal{G}=\left\{g: g(\mf x, y)=\ell_h(y-f(\mf x))-\ell_h\left(y-f_n(\mf x)\right), \quad f \in \tilde{\mathcal{I}}(L, W, S, B),\left\|f-f_n\right\|_{\ell_2} \leq r_0\right\}.
$$
Then for $\xi_1, \ldots, \xi_n$ independent Rademacher variables independent of $\left\{\left(\mf x_i, y_i\right)\right\}_{i=1}^n$, by Theorem 2.1 in \cite{bartlett2005local}, with probability at least $1-2e^{-\gamma}$, we have that
\begin{align}
&  M_n(\hat{f}_h)-\hat{M}_n(\hat{f}_h)+(\left\|f_n-f_\tau^*\right\|_{\infty}+h^2)\left\|\hat{f}_h-f_n\right\|_{\ell_2} \sqrt{F}\notag\\
\lesssim & \mathbb{E}\left(\left.\sup _{g \in \mathcal{G}} \frac{1}{n} \sum_{i=1}^n \xi_i g\left(\mf x_i, y_i\right) \right\rvert\,\left(\mf x_1, y_1\right), \ldots,\left(\mf x_n, y_n\right)\right)\notag \\
& +4 r_0 F^{3/2} \sqrt{\frac{\gamma}{n}}+\frac{100 F^{3/2} \gamma}{3 n}+(\left\|f_n-f_\tau^*\right\|_{\infty}+h^2)\left\|\hat{f}_h-f_n\right\|_{\ell_2} F^{1/2}.\label{eq:f-f_n l2 high prob}
\end{align}

Denote $\mb E_\xi$ as the expectation with respect to $\xi_1,\dots,\xi_n$. Let $$
\varphi_{f,i}(t_i)=\ell_h(y_i-(t_i+f_n(\mf x_i)))-\ell_h(y_i-f_n(\mf x_i)),
$$
where $t_i=f(\mf x_i)-f_n(\mf x_i)$. Note that $\ell_h(\cdot)$ is 1-Lipschitz continuous and $\varphi_{f,i}(0)=0$, by Talagrand's inequality (\cite{ledoux2013probability}), Lemma \ref{lem:radius rademacher}, with probability at least $1-e^{-\gamma}$, we have
\begin{align}
\mathbb{E}_{\xi}\left(\sup _{g \in \mathcal{G}} \frac{1}{n} \sum_{i=1}^n \xi_i g\left(\mf x_i, y_i\right)\right)&=\mathbb{E}_{\xi}\left(\sup _{f \in \mathcal{I}(L, W, S, B),\|f\|_{\infty} \leq F,\left\|f-f_n\right\|_{\ell_2} \leq r_0} \frac{1}{n} \sum_{i=1}^n \xi_i\varphi_{f,i}(t_i) \right),  \notag\\
& \leq \mathbb{E}_{\xi}\left(\sup _{f \in \mathcal{I}(L, W, S, B),\|f\|_{\infty} \leq F,\left\|f-f_n\right\|_{\ell_2} \leq r_0} \frac{1}{n} \sum_{i=1}^n \xi_i\left(f\left(\mf x_i\right)-f_n\left(\mf x_i\right)\right)\right), \notag\\
& \leq \mathbb{E}_{\xi}\left(\sup _{f \in \mathcal{I}(L, W, S, B),\|f\|_{\infty} \leq F,\left\|f_n-f\right\|_n \leq 2 r_0} \frac{1}{n} \sum_{i=1}^n \xi_i\left(f\left(\mf x_i\right)-f_n\left(\mf x_i\right)\right)\right).\label{eq:rademacher high prob}
\end{align}
By Dudley's chaining inequality and arguments in the proof of Theorem 2 in \cite{Suzuki2018AdaptivityOD}, we further have for some constant $C>0$,
\begin{align}
&\mathbb{E}_{\xi}\left(\sup _{f \in \mathcal{I}(L, W, S, B),\|f\|_{\infty} \leq F,\left\|f_n-f\right\|_n \leq 2 r_0} \frac{1}{n} \sum_{i=1}^n \xi_i\left(f\left(\mf x_i\right)-f_n\left(\mf x_i\right)\right)\right)\notag\\
& \leq \inf _{0<\alpha<r_0}\left\{4 \alpha+\frac{24 r_0}{\sqrt{n}} \sqrt{\log \mathcal{N}\left(\alpha, \tilde{\mathcal{I}}(L, W, S, B),\|\cdot\|_{\infty}\right)}\right\} \notag\\
& \leq C \inf _{0<\alpha<r_0}\left\{\alpha+r_0 \sqrt{\frac{N\log (N)\left[\log^2 N+\log \alpha^{-1}\right]}{n}}\right\}\label{eq:rademacher chaining}
\end{align}
By Proposition 1 in \cite{Suzuki2018AdaptivityOD} and $N \asymp n^{\frac{d}{2 s+d}},h^2\lesssim n^{-\frac{s}{2s+d}}$, we have $\left\|f_n-f_\tau^*\right\|_{\infty}+h^2 \lesssim N^{-s / d}$ . Let
$$
\alpha=r_0 \sqrt{\frac{N(\log N)^2}{n}},
$$
together with Lemma \ref{lem:f-f_n empirical}, \eqref{eq:rademacher chaining}, \eqref{eq:f-f_n l2 high prob}, and \eqref{eq:rademacher high prob}, we have for some constant $C>0$,
$$
\begin{aligned}
\left\|\hat{f}_h-f_\tau^*\right\|_{\ell_2}^2 \leq &  C\left[4 r_0 F^{5/2} \sqrt{\frac{\gamma}{n}}+\frac{100 F^{5/2} \gamma}{3 n}+\right. \\
& \left. r_0 F \sqrt{\frac{N(\log N)^2}{n}}+ r_0 F \sqrt{\frac{N\log (N)\left[(\log N)^2+\log r_0^{-1}+\log n\right]}{n}}+ r_0 N^{-s / d} F^{3/2}\right]
\end{aligned}
$$
with probability at least $1-3 e^{-\gamma}$.
\end{proof}

\begin{lemma}[Lemma 20 in \cite{padilla2022quantile}]
    \label{lem:radius rademacher star}
Let $r^*$ be defined as
$$
r^*=\inf \left\{r>0: 3 \mathbb{E}\left(\sup _{f \in \tilde{\mathcal{I}}(L, W, S, B),\left\|f-f_n\right\|_{\ell_2} \leq s} \frac{1}{n} \sum_{i=1}^n \xi_i\left(f\left(x_i\right)-f_n\left(x_i\right)\right)^2\right)<s^2, \forall s \geq r\right\},
$$
for $\left\{\xi_i\right\}_{i=1}^n$ Rademacher variables independent of $\left\{\left(x_i, y_i\right)\right\}_{i=1}^n$. Then under the conditions \eqref{eq:cond1 lem:f-f_n l2 high prob}, \eqref{eq:cond2 lem:f-f_n l2 high prob}, and \eqref{eq:cond3 lem:f-f_n l2 high prob} of Lemma \ref{lem:f-f_n l2 high prob},
\begin{equation}
    r^* \leq \tilde{C}\left[\sqrt{\frac{N(\log N)^2}{n}}+\sqrt{\frac{N\log (N)\left[(\log N)^2+\log n\right]}{n}}\right],\label{eq:r star upper bound lemma}
\end{equation}
for a constant $\tilde{C}>0$ and with $N$ satisfying $N \asymp n^{\frac{d}{2 s+d}}$.
\end{lemma}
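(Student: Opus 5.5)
The plan is to bound the fixed point $r^*$ by showing that any $r$ exceeding the right-hand side of \eqref{eq:r star upper bound lemma} satisfies the defining inequality. Concretely, I will show that for every $s\ge r$,
\[
3\,\mathbb{E}\Bigl(\sup\tfrac1n\textstyle\sum_i\xi_i(f(x_i)-f_n(x_i))^2\Bigr)<s^2 .
\]
This gives the claim directly from the definition of $r^*$ as an infimum.

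\textbf{Step 1 (remove the square).} The map $t\mapsto t^2$ is $4F$-Lipschitz on $[-2F,2F]$ and vanishes at $0$, and all differences satisfy $|f-f_n|\le 2F$. By the Ledoux--Talagrand contraction inequality, used as in \eqref{eq:rademacher high prob},
\[
\mathbb{E}\sup\tfrac1n\textstyle\sum_i\xi_i(f-f_n)^2(x_i)\le 8F\,\mathbb{E}\sup\tfrac1n\textstyle\sum_i\xi_i(f-f_n)(x_i).
\]
The supremum on both sides runs over $f\in\tilde{\mathcal I}(L,W,S,B)$ with $\|f-f_n\|_{\ell_2}\le s$.

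\textbf{Step 2 (pass from the population to the empirical ball).} I split on the event $E$ that $\|f-f_n\|_{\ell_2}\le s$ implies $\|f-f_n\|_n\le 2s$ uniformly over the class. Since the integrand is bounded by $4F^2$, the contribution of $E^c$ is at most $4F^2\,\mathrm P(E^c)$.

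To control $\mathrm P(E^c)$ I will avoid invoking Lemma~\ref{lem:radius rademacher}, because it presupposes the fixed-point inequality at radius $s$ and would make the argument circular. Instead I would apply Bernstein's inequality to the squared differences over a finite $\|\cdot\|_\infty$-net of the class, with $\gamma\asymp\log n$. The resulting $O(F^2/n)$ term is absorbed into the right-hand side of \eqref{eq:r star upper bound lemma}.

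\textbf{Step 3 (chaining).} On $E$, Dudley's entropy bound \eqref{eq:rademacher chaining} gives
\[
\mathbb{E}_\xi\sup\le C\inf_{0<\alpha<s}\Bigl\{\alpha+s\sqrt{N\log N\,[\log^2N+\log\alpha^{-1}]/n}\Bigr\}.
\]
This relies on the covering-number bound for $\tilde{\mathcal I}(L,W,S,B)$ from Suzuki (2018), namely $\log\mathcal N(\alpha)\lesssim N\log N[\log^2N+\log\alpha^{-1}]$ under \eqref{eq:cond1 lem:f-f_n l2 high prob}--\eqref{eq:cond3 lem:f-f_n l2 high prob}. I take $\alpha=s\sqrt{N(\log N)^2/n}$.

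Because $s\ge r\gtrsim n^{-1/2}$, we have $\log\alpha^{-1}\lesssim\log n$. The bound therefore becomes $C s\,\Phi_n$, where
\[
\Phi_n=\sqrt{N(\log N)^2/n}+\sqrt{N\log N[(\log N)^2+\log n]/n}.
\]

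\textbf{Step 4 (fixed point).} Combining the steps, the left side is at most $24CF\,s\,\Phi_n+O(F^2/n)$. This is strictly less than $s^2$ whenever $s\ge\tilde C\Phi_n$ for $\tilde C$ large depending on $F$ and $C$; the $1/n$ remainder is dominated since $\Phi_n^2\gg1/n$. Hence $r^*\le\tilde C\Phi_n$.

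\textbf{Main obstacle.} The delicate point is Step 2, where the population ball $\|f-f_n\|_{\ell_2}\le s$ must be converted into the empirical ball needed for chaining without circularity. If the Bernstein-plus-net route becomes messy, the fallback is to chain directly in $\|\cdot\|_\infty$. Entropy integrals in $\|\cdot\|_\infty$ dominate the $L_2(P_n)$ ones, so the empirical radius can simply be bounded by $2F$ on the coarse scales. The radius $s$ is then recovered on fine scales via a standard localized Dudley bound using $\sup_f\|f-f_n\|_n^2\le s^2+(\text{deviation})$, which again appears only at lower order.
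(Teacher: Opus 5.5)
Your proof is correct. The paper itself gives no argument for this lemma; it imports it from \cite{padilla2022quantile}. The surrounding appendix nonetheless makes the intended route clear:
\begin{itemize}
\item contraction to remove the square;
\item Lemma~\ref{lem:radius rademacher}, to pass from the $\ell_2$-ball to the empirical ball of radius $2s$ on an event of probability $1-e^{-\gamma}$;
\item Dudley's bound with Suzuki's entropy estimate, as in \eqref{eq:rademacher high prob}--\eqref{eq:rademacher chaining}.
\end{itemize}
Your Steps 1, 3 and 4 follow this template. The genuine difference is Step 2, where you replace Lemma~\ref{lem:radius rademacher} (i.e.\ Talagrand's concentration via Theorem 2.1 of \cite{bartlett2005local}) by Bernstein plus a union bound over a $\|\cdot\|_\infty$-net.

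Your stated reason for this, circularity, is not a real obstruction. Write $\psi(s)$ for the expectation in the definition of $r^*$.
\begin{itemize}
\item Every $s>r^*$ satisfies \eqref{eq:3rademacher} by definition of $r^*$.
\item \eqref{eq:lower radius} holds at such $s$ unless $r^*\lesssim F\sqrt{\log n/n}$, in which case there is nothing to prove.
\item Since $\psi$ is nondecreasing, the infimum property gives $3\psi(r^*)\ge (r^*)^2$.
\end{itemize}
Applying the lemma at $s>r^*$ and letting $s\downarrow r^*$ then yields $(r^*)^2\lesssim F r^*\Phi_n+F^2e^{-\gamma}$, which is the same bound.

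So the two routes compare as follows. The lemma-based route is shorter and reuses existing pieces, but needs this small infimum-and-limit argument at $r^*$. Yours checks the defining inequality directly for every $s\ge\tilde C\Phi_n$ using only elementary tools, at the price of a second use of the covering bound.

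When writing it up, state explicitly when $\mathrm P(E^c)\le e^{-\gamma}$ holds. It requires $s\gtrsim F\sqrt{(\log\mathcal N(\eta)+\gamma)/n}$, with $\eta$ a small multiple of $s$ so that $\log\eta^{-1}\lesssim\log n$. This threshold is $\lesssim F\Phi_n$, so $\tilde C$ must be chosen large enough for Step 2 as well as for Step 4. Your fallback would also work: Dudley with the random empirical radius, Jensen, and a self-bounding inequality for $\mathbb E\sup_f\|f-f_n\|_n^2$.
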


Let $\mathcal{H}$ be a class of functions from $\mathcal{X}$ to $\mathbb{R}$. We define the pseudodimension of $\mathcal{H}$, denoted as $\operatorname{Pdim}(\mathcal{H})$, as the largest integer $m$ for which there exist $\left(a_1, b_1\right) \ldots,\left(a_m, b_m\right) \in \mathcal{X} \times \mathbb{R}$ such that for all $\eta \in\{0,1\}^m$ there exists $f \in \mathcal{H}$ such that
$$
f\left(a_i\right)>b_i \Longleftrightarrow \eta_i,
$$
for $i=1, \ldots, m$.
\begin{lemma}[Theorem 12 from \cite{padilla2022quantile}]
\label{lem:pseudodimension}
With the notation from before, for the neural network function class $\mathcal{F}(P,U,L)$, we have
$$
\operatorname{Pdim}(\mathcal{F}(P, U, L))=O(L P \log (U)) .
$$ 
\end{lemma}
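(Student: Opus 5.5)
The plan is to reduce the pseudodimension bound to a VC-dimension bound and then use the sign-pattern counting argument for piecewise-linear networks (Bartlett, Harvey, Liaw and Mehrabian), which is the route behind Theorem 12 of \cite{padilla2022quantile}.

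\textbf{Reduction to VC dimension.} By definition, $\operatorname{Pdim}(\mathcal{F}(P,U,L))$ equals the VC dimension of the class of binary functions $(\mf x,b)\mapsto \mf 1\{f(\mf x)-b>0\}$ with $f\in\mathcal{F}(P,U,L)$. The map $(\mf x,b)\mapsto f(\mf x)-b$ is again a ReLU network with input $(\mf x,b)$. It has one extra output bias and the same depth and node count, so $P$, $U$ and $L$ change only by constants. It therefore suffices to show the following. For any $m$ points $(a_i,b_i)$, the number $K$ of sign vectors $\bigl(\operatorname{sgn}(f_\theta(a_i)-b_i)\bigr)_{i\le m}$ realized as the parameter vector $\theta\in\mathbb{R}^P$ varies satisfies $\log_2 K = O\bigl(LP\log(mU)\bigr)$. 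Shattering needs $2^m\le K$, so this gives $m\lesssim LP\log(mU)$, and the elementary inequality $m\le a\log m \Rightarrow m=O(a\log a)$ then yields $m=O(LP\log U)$, using $P\ge U$.

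\textbf{Layerwise partition of parameter space.} I will build a sequence of refining partitions $\mathcal{S}_0,\dots,\mathcal{S}_L$ of $\mathbb{R}^P$ with the following property. On each cell of $\mathcal{S}_{l}$, the activation pattern of every unit in layers $1,\dots,l$ is constant at all $m$ inputs $a_i$. As a consequence, every pre-activation of layer $l+1$ at every $a_i$ is a polynomial in $\theta$ of degree at most $l+1$, and it depends only on the $P_{l}$ parameters of the first $l+1$ weight layers.

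To pass from $\mathcal{S}_{l-1}$ to $\mathcal{S}_{l}$, fix a cell. Within it, the $m p_{l}$ quantities $a\mapsto$ (pre-activation minus shift $v_{l,j}$) are polynomials of degree at most $l$ in at most $P_l$ variables. Warren's theorem bounds the number of sign patterns of $M$ polynomials of degree at most $D$ in $P_l$ variables by $2(2eMD/P_l)^{P_l}$. Applying it with $M=mp_l$ shows that each cell splits into at most $2(2e\,m p_l\, l/P_l)^{P_l}$ subcells. The last step treats the output polynomial $f_\theta(a_i)-b_i$, of degree at most $L+1$, in the same way.

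\textbf{Counting.} Multiplying the per-layer bounds gives
\[
K\le \prod_{l=1}^{L+1} 2\Bigl(\frac{2e\,m\,p_l\, l}{P_l}\Bigr)^{P_l}.
\]
Taking logarithms and using $P_l\le P$, $\sum_l p_l = U$, together with Jensen's inequality (concavity of $\log$), yields
\[
\log_2 K\lesssim \sum_l P_l \log\bigl(e\,m\,L\,U\bigr)\le (L+1)P\log(emLU).
\]
Since $L\le U$, this is $O(LP\log(mU))$, which is the bound required by the reduction.

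\textbf{Main obstacle.} I expect the delicate point to be the bookkeeping in the partition step. One must verify that the degree grows by exactly one per layer, since the ReLU is linear on each fixed-pattern cell and each new weight multiplies an existing polynomial. One must also check that the number of effective variables at layer $l$ is $P_l$ rather than $P$, so that the Jensen step produces the factor $LP$ instead of $L^2P$. Both checks rely on the piecewise-linearity of $\sigma$ and on the shifts $\mf v_l$ entering linearly. Any crude bound losing a $\log L$ is harmless, because it is absorbed into $\log U$.
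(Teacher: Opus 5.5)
Your overall route is the one behind the cited result. The paper gives no proof of its own; it invokes Theorem~12 of \cite{padilla2022quantile}, which rests on the Warren-type layerwise counting of Bartlett, Harvey, Liaw and Mehrabian. Your partition and degree bookkeeping are fine, but the final arithmetic has a gap.

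When you replace $\log(2e\,m\,p_l\,l/P_l)$ by $\log(emLU)$, you discard the denominator $P_l$. From $m\lesssim LP\log(mU)$ the self-bounding inequality then only yields $m=O(LP\log(LPU))$. Invoking $P\ge U$ goes in the wrong direction: it turns this into $O(LP\log P)$, not $O(LP\log U)$. Removing $\log P$ this way would need $P\le\mathrm{poly}(U)$, which you never use. In this paper that bound does hold, since $P\le U^2+(d+2)U+1$, so for fixed $d$ your crude bound can be patched, but only with a $d$-dependent constant.

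The clean repair is to keep the denominators, which is exactly what the Jensen step is for. Assume $m\ge (L+1)P$; otherwise $m=O(LP)$ and you are done. This assumption also supplies the hypothesis of Warren's bound in the form you quote, namely at least as many polynomials as variables. Set $w:=\sum_{l=1}^{L+1}P_l\le (L+1)P$ and use weights $P_l/w$. Then
\[
\log_2 K\le (L+1)+\sum_{l=1}^{L+1}P_l\log_2\frac{2e\,m\,p_l\,l}{P_l}\le (L+1)+w\log_2\frac{2e\,m\sum_{l}l\,p_l}{w}\le (L+1)+w\log_2\frac{m\,r}{w},
\]
where $r:=2e(L+1)(U+1)$. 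Now $m$ enters only through $m/w$. From $2^m\le K$ and $L+1\le w$ you get $m/w\le 1+\log_2(m/w)+\log_2 r$. Hence $m/w=O(\log r)$, and $m=O(LP\log(LU))=O(LP\log U)$ because $L\le U$.

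This also corrects your ``main obstacle'' paragraph. Using all $P$ variables at every layer would still give a factor $(L+1)P$, not $L^2P$; the $P_l$ refinement only improves $L$ to an effective depth. What actually turns $\log P$ into $\log U$ is keeping the $1/P_l$ in Warren's bound.
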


\subsection{Proof of Theorem \ref{thm:minimax rate}}

\begin{proof}
Write the ball centered in $f$ of radius $r$
$$
\mr B(f, \| \cdot \|_{\ell_2}, r) = \{ g : \| f - g \|_{\ell_2} \leq r \}.
$$
We divide the space $\tilde{\mathcal{I}}(L,W,S,B)=\{f:f\in\mathcal{I}(L,W,S,B),\|f\|_\infty\leq F\}$ into sets of increasing radius
\[
\mr B(f_n, \| \cdot \|_{\ell_2}, \bar r), \mr B(f_n, \| \cdot \|_{\ell_2}, 2\bar r) \setminus \mr B(f_n, \| \cdot \|_{\ell_2}, \bar r), \dots, \mr B(f_n, \| \cdot \|_{\ell_2}, 2^{l}\bar r)\setminus\mr B(f_n, \| \cdot \|_{\ell_2}, 2^{l-1}\bar r),
\]
where
\[
l = \left\lfloor \log_2 \left( \frac{2F}{\sqrt{\log n / n}} \right) \right\rfloor.
\]
If for some $j \leq l$,
\[
\hat{f}_h \in \mr B(f_n, \| \cdot \|_{\ell_2}, 2^{j}\bar r),
\]
then by Lemma \ref{lem:f-f_n l2 high prob}, with probability at least $1 - 3e^{-\gamma}$, we have for some constant $\tilde C>0$,
\begin{align}
\left\|\hat{f}_h-f_n\right\|_{\ell_2}^2 \leq & \tilde C\left(2^j\bar r F^{5/2} \sqrt{\frac{\gamma}{n}}+\frac{F^{5/2} \gamma}{n}+\right. \notag\\
&\left. 2^j\bar r F \sqrt{\frac{N(\log N)^2}{n}}+2^j\bar r F \sqrt{\frac{N\log(N)\left[(\log N)^2+2\log n\right]}{n}}+N^{-s / d} 2^j\bar r F^{3/2}\right)\label{eq:localization}.
\end{align}
Recall the $r^*$ defined in Lemma \ref{lem:radius rademacher star}. We set 
\begin{align}
    \bar r &= 8\tilde C \left[ F^{5/2} \sqrt{\frac{\gamma}{n}} + F \sqrt{\frac{N(\log N)^2}{n}} + F \sqrt{\frac{N\log(N)[(\log N)^2 +2 \log n]}{n}} + N^{- s/d} F^{3/2} \right] \notag\\
    &+ 2\sqrt{2\tilde C}\cdot \sqrt{  \frac{F^{5/2} \gamma}{n}}+r^*.\label{eq:r bar setting}    
\end{align}
By Lemma \ref{lem:radius rademacher star} and $N\asymp n^{d/(2s+d)}$, setting  $\gamma\asymp \log n$, it follows that $\tilde{\mathcal{I}}(L,W,S,B)\subset\mr B(f_n,\|\cdot\|_{\ell_2},2^l\bar r)$ when $n$ is sufficiently large. Therefore, $\hat f_h\in \mr B(f_n,\|\cdot\|_{\ell_2},2^l\bar r)$ with probability 1 if $n$ is sufficiently large.

Elementary calculation yields $\bar r > r^*$ and for all $0\leq j\leq l$,
\begin{equation}
    \frac{2^{j} \bar r}{8}\geq  \tilde C \left[ F^{5/2} \sqrt{\frac{\gamma}{n}}+ F\sqrt{\frac{N(\log N)^2}{n}} + F\sqrt{\frac{N\log (N)[(\log N)^2 + 2 \log n]}{n}} + N^{- s/d} F^{3/2} \right],\label{eq:localization r1}
\end{equation}
and
\begin{equation}
    \frac{2^{2j} \bar r^2}{8}\geq \tilde{C} \left(\frac{ F^{5/2} \gamma}{n} \right) .\label{eq:localization r2}
\end{equation}

Combining \eqref{eq:localization r1}, \eqref{eq:localization r2}, and \eqref{eq:localization}, we have with probability at least $1-3e^{-\gamma}$,
\begin{equation}
\|\hat{f}_h-f_n\|_{\ell_2}\leq 2^{j-1} \bar r.  
\end{equation}

Now we begin from the first step of our localization procedure. Note that $\bar r > r^*$, by Lemmas \ref{lem:radius rademacher} and \ref{lem:f-f_n l2 high prob}, with probability at least $1-e^{-\gamma}$, we have that 
$$
\left\|\hat{f}_h-f_n\right\|_{\ell_2} \leq 2^l \bar{r} \quad \text {implies} \quad\left\|\hat{f}_h-f_n\right\|_n \leq 2^{l+1} \bar{r}.
$$
Then by arguments above, with probability at least $1-4e^{-\gamma}$, $\left\|\hat{f}_h-f_n\right\|_{\ell_2}\leq 2^l \bar{r}$ implies that
$$
\left\|\hat{f}_h-f_n\right\|_{\ell_2} \leq 2^{l-1} \bar{r}, \quad \text { and } \quad\left\|\hat{f}_h-f_n\right\|_n \leq 2^l \bar{r}.
$$
Continue recursively, we arrive at
$$
\left\|\hat{f}_h-f_n\right\|_{\ell_2} \leq \bar{r}, \quad \text { and } \quad\left\|\hat{f}_h-f_n\right\|_n \leq 2 \bar{r},
$$
with probability approaching one. 

Specifically, this procedure can be formulated as
\begin{align}
    \mr P(\hat{f}_h\in \mr B(f_n,\|\cdot\|_{\ell_2},\bar r))&=\mr P(\hat{f}_h\in \mr B(f_n,\|\cdot\|_{\ell_2},2\bar r))-\mr P(\hat{f}_h\in \mr B(f_n,\|\cdot\|_{\ell_2},2\bar r)\setminus \mr B(f_n,\|\cdot\|_{\ell_2},\bar r) ),\notag\\
    &\geq \mr P(\hat{f}_h\in \mr B(f_n,\|\cdot\|_{\ell_2},2\bar r))-4e^{-\gamma},\notag\\
    &\geq \cdots,\notag\\
    &\geq 1-4(l+1)e^{-\gamma}=1-o(1),
\end{align}
with setting $\gamma\asymp \log n$.

By Lemma \ref{lem:radius rademacher star} and \eqref{eq:r bar setting}, we have
$$
\begin{aligned}
\bar{r} \leq & 8 \tilde{C}\left[F^{5/2} \sqrt{\frac{\gamma}{n}}+F \sqrt{\frac{N(\log N)^2}{n}}+F \sqrt{\frac{N\log(N)\left[(\log N)^2+2 \log n\right]}{n}}+N^{-s / d} F^{3/2}\right]\\
&+2\sqrt{2}\cdot\sqrt{\frac{\tilde{C} F^{5/2} \gamma}{n}}+ \tilde{C}\left[\sqrt{\frac{N(\log N)^2}{n}}+\sqrt{\frac{N\log(N)\left[(\log N)^2+\log n\right]}{n}}\right].
\end{aligned}
$$
Then the claim follows by $N\asymp n^{d/(2s+d)}$ and $\left\|f_n-f_\tau^*\right\|_{\infty} \lesssim N^{-s / d}$ in Proposition 1 of \cite{Suzuki2018AdaptivityOD}.
\end{proof}

\subsection{Proof of Theorem \ref{thm:general upper bound}}
\begin{proof}
Throughout this proof, the covariates $\mf x_1, \ldots, \mf x_n$ are fixed. For simplicity, we write $f\in\mathcal{F}$ instead of $f\in\mathcal{F}(P,U,L)$ within the proof. Let $\hat{\delta}_i=\hat{f}_h\left(\mf x_i\right)-f_\tau^*\left(\mf x_i\right)$ for $i=1, \ldots, n$. 
    Note that $\ell_h (u)=\int_{-\infty}^\infty \rho_\tau(v)K_h(v-u)\mr dv=\int_{-\infty}^\infty \rho_\tau(u+s)K_h(s)\mr ds$, then for any $x,y\in\mathbb R$,
\begin{equation}
    \ell_h(x)-\ell_h(y)=\int_{-\infty}^\infty K_h(s) \left(\rho_\tau(x+s)-\rho_\tau(y+s)\right) \mr ds. \label{eq:key ell-rho}   
\end{equation}
By Knight identity \citep{knight1998limiting}, for any $\delta\in\mb R$,
\begin{align}
&\rho_\tau(y_i-(f_\tau^*(\mf x_i)+\hat \delta_i)+s )-\rho_\tau\left(y_i-f_\tau^*(\mf x_i)+s\right)\notag\\
&=-\hat\delta_i\left(\tau-\mf1\left\{y_i+s \leq f_\tau^*(\mf x_i)\right\}\right)+ \int_0^{\hat\delta_i} (\mf 1\{y_i\leq f_\tau^*(\mf x_i)+z-s\}-\mf 1\{y_i\leq f_\tau^*(\mf x_i)-s\})\mr dz. \label{eq:rho diff}
\end{align}
By Assumption 1, Fubini's theorem and mean value expansion, using the fact $\int sK_h(s)\mr ds=0,\int s^2K_h(s)\mr ds=\sigma_K^2h^2$, we have
\begin{align}
    &\mb E\left[\frac{1}{n}\sum_{i=1}^n \int K_h(s)\int_0^{\hat \delta_i} (\mf 1\{y_i\leq f_\tau^*(\mf x_i)+z-s\}-\mf 1\{y_i\leq f_\tau^*(\mf x_i)-s\})\mr dz\mr ds\Bigg|\mf x_1,\dots,\mf x_n  \right]\notag\\
    &=\mb E\left[\frac{1}{n}\sum_{i=1}^n\int K_h(s)\int_0^{\hat \delta_i} F_{y_i|\mf x_i}(f_\tau^*(\mf x_i)+z-s)-F_{y_i|\mf x_i}(f_\tau^*(\mf x_i)-s)\mr dz\mr ds  \Bigg|\mf x_1,\dots,\mf x_n \right],\notag\\
    &\geq \mb E\left[\frac{1}{n}\sum_{i=1}^n\left(\underline{p}\int_0^{|\hat\delta_i|} \min\{z,\kappa\}\mr dz-\underline{c}|\hat\delta_i| h^2\right)\Bigg|\mf x_1,\dots,\mf x_n \right],\label{eq:int lower}
\end{align}
where the constants $\underline{p},\underline{c}>0$ are uniform for all $i=1,\dots, n$.

Similarly, we also have
\begin{align}
    &\mb E\left[ \frac{1}{n}\sum_{i=1}^n-\hat \delta_i\int K_h(s)(\tau-\mf 1\{y_i+s\leq f_\tau^*(\mf x_i)\})\mr ds  \right]\notag\\
    &=\mb E\left[\frac{1}{n}\sum_{i=1}^n-\hat\delta_i\int K_h(s) (F_{y_i|\mf x_i}(f_\tau^*(\mf x_i))-F_{y_i|\mf x_i}(f_\tau^*(\mf x_i)-s))\mr ds   \right]\notag\\
    &\geq -\underline{c}\mb E\left(\frac{1}{n}\sum_{i=1}^n|\hat\delta_i| h^2\right).\label{eq:lower int start}
\end{align}
Combining \eqref{eq:key ell-rho}, \eqref{eq:rho diff}, \eqref{eq:int lower}, and \eqref{eq:lower int start}, for $D_h(t)=\min\{|t|,t^2\}-h^2|t|$, we have
\begin{align}
&\mathbb{E}\left[\frac{1}{n} \sum_{i=1}^n D_h^2\left\{f_\tau^*\left(\mf x_i\right)-\hat{f}_h\left(\mf x_i\right)\right\}\right] \notag\\
& \leq \frac{1}{c_\tau n} \mathbb{E}\left(\sum_{i=1}^n \mathbb{E}\left[\ell_h\left\{y_i-f_\tau^*\left(\mf x_i\right)-\hat{\delta}_i\right\}\right]-\sum_{i=1}^n \mathbb{E}\left[\ell_h\left\{y_i-f_\tau^*\left(\mf x_i\right)\right\}\right]\right),\notag\\
&=\frac{1}{c_\tau}\mb E\left\{M_n(\hat{f}_h)\right\}+O(err_1).\label{eq:decomp general bound}
\end{align}
Next, we only need to bound $E\left\{M_n(\hat{f}_h)\right\}$. By symmetrization Lemma 10 in \cite{Padilla2021riskbound}, and Talagrand's inequality \citep{ledoux2013probability} using the fact $\ell_h(\cdot)$ is 1-Lipschitz continuous, for i.i.d. Rademacher variables $\xi_i,i=1,\dots,n$, 
\begin{align}
\mb E\left\{M_n(\hat{f}_h)\right\} & \leq \mb E\left\{ \sup_{f\in\mathcal{F},\|f\|_\infty\leq F}  \left[M_n(f)-\hat M_n(f)\right]  \right\},\notag\\
&\leq 2\mb E\left\{ \sup_{f\in\mathcal{F},\|f\|_\infty\leq F} \sum_{i=1}^n \xi_i\hat M_{n,i}(f)  \right\},\notag\\
&\leq 2\mb E\left\{ \sup_{f\in\mathcal{F},\|f\|_\infty\leq 2F} \frac{1}{n}\sum_{i=1}^n\xi_i f(\mf x_i) \right\}. \label{eq:symmetrization}
\end{align}
By Dudley's theorem and Lemma 4 in \cite{farrell2021deep}, we further have
\begin{align}
    F\mb E\left\{ \sup_{f\in\mathcal{F},\|f\|_\infty\leq F} \frac{1}{n}\sum_{i=1}^n\xi_i \frac{f(\mf x_i)}{F} \right\}&\leq \frac{C F}{\sqrt{n}} \int_0^2 \sqrt{\log \mathcal{N}\left(\mu, \mathcal{F} / F,\|\cdot\|_n\right)} \mr d \mu,\notag\\
& \leq \frac{C F}{\sqrt{n}} \int_0^2 \sqrt{\log \left(\left(\frac{2 \cdot e \cdot n}{\mu \cdot \operatorname{Pdim}(\mathcal{F})}\right)^{\operatorname{Pdim}(\mathcal{F})}\right)} \mr d \mu. \label{eq:dudley's bound}
\end{align}
Combining Lemma \ref{lem:pseudodimension}, \eqref{eq:symmetrization} and \eqref{eq:dudley's bound}, for some constant $\tilde{C}>0$, we have
\begin{align}
\mathbb{E}\left\{M_n(\hat{f}_h)\right\} &  \leq \tilde{C} F \sqrt{\frac{L P \log U \cdot \log n}{n}},
\end{align}
which shows \eqref{eq:general bound delta} combining with \eqref{eq:decomp general bound}. 

For \eqref{eq:general bound l2}, when $h^2=o(\sqrt{\mb E\|\hat{f}_h-f_\tau^*\|_{n}^2})$, by $\sum_i^n |a_i|/n\leq \sqrt{\sum_{i=1}^n a_i^2/n}$ and Jensen's inequality,
\begin{align}
    h^2\mb E\|\hat{f}_h-f_\tau^*\|_{n,1}\leq h^2 \mb E\sqrt{\|\hat{f}_h-f_\tau^*\|_n^2}\leq h^2 \sqrt{\mb E\|\hat{f}_h-f_\tau^*\|_n^2}=o(\mb E\|\hat{f}_h-f_\tau^*\|_n^2).
\end{align}
Then combining with the fact $\|\hat{f}_h-f_\tau^*\|_n^2\leq \max\{1,F\}\Delta_n^2(\hat{f}_h,f_\tau^*)$, \eqref{eq:general bound l2} can also hold.
\end{proof}

\section{Additional experiments}
\label{sec:additional experiments}

We state additional details of the experiment in this section. The whole experiments are implemented in PyTorch \citep{paszke2019pytorch}. We use stochastic gradient descent (SGD) with the Nesterov method of momentum factor 0.9. For each sample size, we keep $1/10$ of the data for validation. We start the learning rate at 0.1, and use scheduler \verb|ReduceLROnPlateau| with factor 0.5 and patience 5 to adjust the learning rate dynamically. We also implement gradient calculation for three different convolution-type smoothed loss functions, see Remark 3.1 in \citet{he2023smoothed}. We manually calculate gradients of the loss functions at the backpropagation step, and keep automatic differentiation for the rest of the computational graph. We also show MSE results without a stop criterion, that is, to train the models with full 100 epochs. Compared to the results in Table \ref{tab:mse}, there is no obvious advantage to training without stopping. On the contrary, using a stop criterion can improve generalization ability and reduce training time.

\subsection{Tables}

In this section, we first present MSE results for our ConquerNet with a sample size of 10000 and different bandwidth choices, as shown in Table \ref{tab:mse with h}. Bandwidths $h=\{0.001,0.005,0.01,0.05,0.1\}$ are considered. 
The results show that our ConquerNet outperform the baseline model over a wide range of bandwidths. 

Table \ref{tab:mse} and Figure \ref{fig:time_tau=0.05} show that our ConquerNet obtain better performance and higher training efficiency. The faster training mainly comes from our stopping strategy, that is to stop training if the learning rate is lower than a threshold and the validation loss does not decrease for several
consecutive epochs, while the baseline models are trained using all epochs according to the codes provided in \citet{padilla2022quantile}. Therefore, we also train the ConquerNet without the stop criterion and study the MSE results, which are shown in Table~\ref{tab:mse nostop}. We can conclude that our ConquerNet still outperform the baseline models.

\begin{table}[htpb]
\centering
\caption{MSE performances for scenario 1-3, model A and B, sample size 10000 under different bandwidths, quantile levels, and smoothing kernels. The MSEs are averaged over 50 independent trials.}
\label{tab:mse with h}
\begin{adjustbox}{height=0.45\textheight,center,keepaspectratio}
\begin{tabular}{@{}ccccccc|ccccc@{}}
\toprule
\multirow{2}{*}{$h$}   & \multirow{2}{*}{Method} & \multicolumn{5}{c|}{Scenario 1, Model A}                                                & \multicolumn{5}{c}{Scenario 1, Model B}                                                 \\ \cmidrule(l){3-12} 
                       &                         & $\tau$=0.05     & $\tau$=0.25     & $\tau$=0.5      & $\tau$=0.75     & $\tau$=0.95     & $\tau$=0.05     & $\tau$=0.25     & $\tau$=0.5      & $\tau$=0.75     & $\tau$=0.95     \\ \midrule
                       & Baseline                & \fbox{0.0618}          & 0.0067          & 0.0047          & 0.0063          & 0.1366          & 0.1691          & 0.0099          & 0.0066          & 0.0082          & 0.3882          \\ \cmidrule(l){2-12} 
\multirow{3}{*}{0.001} & Gaussian                & 0.0678          & \textbf{0.0064} & \textbf{0.0034} & \textbf{0.0055} & \textbf{0.0625} & \textbf{0.0654} & 0.0101          & \textbf{0.0061} & 0.0083          & \textbf{0.0708} \\
                       & Uniform                 & 0.0789          & \textbf{0.0058} & \textbf{0.0036} & \textbf{0.0057} & \textbf{0.0543} & \textbf{0.1356} & \textbf{0.0094} & \textbf{0.0060} & 0.0089          & \textbf{0.0679} \\
                       & Epanechnikov            & 0.0653          & \textbf{0.0061} & \textbf{0.0037} & \textbf{0.0053} & \textbf{0.0665} & \textbf{0.1225} & \textbf{0.0098} & \textbf{0.0056} & 0.0098          & \textbf{0.0699} \\ \cmidrule(l){2-12} 
\multirow{3}{*}{0.005} & Gaussian                & 0.0716          & \textbf{0.0061} & \textbf{0.0033} & \textbf{0.0054} & \textbf{0.0580} & \textbf{0.1062} & \textbf{0.0095} & \textbf{0.0055} & 0.0086          & \textbf{0.0726} \\
                       & Uniform                 & 0.0645          & \textbf{0.0056} & \textbf{0.0038} & \textbf{0.0056} & \textbf{0.0549} & \textbf{0.0974} & \textbf{0.0093} & \textbf{0.0055} & \textbf{0.0080} & \textbf{0.0736} \\
                       & Epanechnikov            & 0.0676          & \textbf{0.0062} & \textbf{0.0036} & \textbf{0.0053} & \textbf{0.0587} & \textbf{0.1294} & \textbf{0.0089} & \textbf{0.0057} & 0.0092          & \textbf{0.0663} \\ \cmidrule(l){2-12} 
\multirow{3}{*}{0.01}  & Gaussian                & 0.0624          & \textbf{0.0062} & \textbf{0.0037} & \textbf{0.0060} & \textbf{0.0662} & \textbf{0.0755} & \textbf{0.0091} & \textbf{0.0058} & 0.0087          & \textbf{0.0722} \\
                       & Uniform                 & 0.0687          & \textbf{0.0057} & \textbf{0.0037} & \textbf{0.0056} & \textbf{0.0805} & \textbf{0.0717} & \textbf{0.0090} & \textbf{0.0053} & 0.0088          & \textbf{0.0617} \\
                       & Epanechnikov            & 0.0800          & \textbf{0.0059} & \textbf{0.0035} & \textbf{0.0056} & \textbf{0.0749} & \textbf{0.0846} & \textbf{0.0093} & \textbf{0.0056} & \textbf{0.0081} & \textbf{0.0737} \\ \cmidrule(l){2-12} 
\multirow{3}{*}{0.05}  & Gaussian                & 0.0913          & 0.0081          & \textbf{0.0034} & 0.0074          & \textbf{0.0623} & \textbf{0.1055} & 0.0120          & \textbf{0.0059} & 0.0107          & \textbf{0.0748} \\
                       & Uniform                 & 0.0718          & \textbf{0.0066} & \textbf{0.0036} & \textbf{0.0060} & \textbf{0.0582} & \textbf{0.0746} & \textbf{0.0094} & \textbf{0.0055} & 0.0090          & \textbf{0.0878} \\
                       & Epanechnikov            & 0.1112          & \textbf{0.0063} & \textbf{0.0033} & \textbf{0.0054} & \textbf{0.0744} & \textbf{0.0801} & \textbf{0.0093} & \textbf{0.0056} & 0.0096          & \textbf{0.0607} \\ \cmidrule(l){2-12} 
\multirow{3}{*}{0.1}   & Gaussian                & 0.1153          & 0.0183          & \textbf{0.0035} & 0.0174          & \textbf{0.1076} & \textbf{0.1593} & 0.0215          & \textbf{0.0057} & 0.0203          & \textbf{0.1167} \\
                       & Uniform                 & 0.0687          & 0.0101          & \textbf{0.0037} & 0.0090          & \textbf{0.0758} & \textbf{0.1179} & 0.0135          & \textbf{0.0060} & 0.0130          & \textbf{0.0767} \\
                       & Epanechnikov            & 0.0690          & 0.0085          & \textbf{0.0036} & 0.0071          & \textbf{0.0681} & \textbf{0.1006} & 0.0122          & \textbf{0.0054} & 0.0107          & \textbf{0.0872} \\ \midrule
\multirow{2}{*}{$h$}   & \multirow{2}{*}{Method} & \multicolumn{5}{c|}{Scenario 2,   Model A}                                              & \multicolumn{5}{c}{Scenario 2, Model B}                                                 \\ \cmidrule(l){3-12} 
                       &                         & $\tau$=0.05     & $\tau$=0.25     & $\tau$=0.5      & $\tau$=0.75     & $\tau$=0.95     & $\tau$=0.05     & $\tau$=0.25     & $\tau$=0.5      & $\tau$=0.75     & $\tau$=0.95     \\ \midrule
                       & Baseline                & 0.1704          & 0.0205          & 0.0145          & 0.0223          & 0.1670          & 0.1323          & 0.0202          & 0.0129          & 0.0220          & 0.1358          \\ \cmidrule(l){2-12} 
\multirow{3}{*}{0.001} & Gaussian                & \textbf{0.1316} & \textbf{0.0176} & \textbf{0.0128} & \textbf{0.0193} & \textbf{0.1537} & \textbf{0.1085} & \textbf{0.0160} & \textbf{0.0119} & \textbf{0.0178} & \textbf{0.1144} \\
                       & Uniform                 & \textbf{0.1457} & \textbf{0.0180} & \textbf{0.0128} & \textbf{0.0191} & \textbf{0.1467} & \textbf{0.1172} & \textbf{0.0151} & \textbf{0.0118} & \textbf{0.0175} & \textbf{0.1342} \\
                       & Epanechnikov            & \textbf{0.1430} & \textbf{0.0162} & \textbf{0.0126} & \textbf{0.0191} & \textbf{0.1388} & \textbf{0.1038} & \textbf{0.0146} & \textbf{0.0107} & \textbf{0.0173} & \textbf{0.1302} \\ \cmidrule(l){2-12} 
\multirow{3}{*}{0.005} & Gaussian                & \textbf{0.1428} & \textbf{0.0184} & \textbf{0.0134} & \textbf{0.0182} & \textbf{0.1371} & \textbf{0.1029} & \textbf{0.0148} & \textbf{0.0118} & \textbf{0.0164} & \textbf{0.1264} \\
                       & Uniform                 & \textbf{0.1457} & \textbf{0.0184} & \textbf{0.0128} & \textbf{0.0185} & \textbf{0.1423} & \textbf{0.1250} & \textbf{0.0151} & \textbf{0.0115} & \textbf{0.0171} & \textbf{0.1275} \\
                       & Epanechnikov            & \textbf{0.1551} & \textbf{0.0167} & \textbf{0.0121} & \textbf{0.0185} & \textbf{0.1262} & \textbf{0.1262} & \textbf{0.0186} & 0.0129          & \textbf{0.0172} & \textbf{0.1326} \\ \cmidrule(l){2-12} 
\multirow{3}{*}{0.01}  & Gaussian                & \textbf{0.1315} & \textbf{0.0165} & \textbf{0.0123} & \textbf{0.0172} & \textbf{0.1413} & \textbf{0.1166} & \textbf{0.0154} & \textbf{0.0112} & \textbf{0.0177} & \textbf{0.1117} \\
                       & Uniform                 & \textbf{0.1590} & \textbf{0.0169} & \textbf{0.0109} & \textbf{0.0180} & \textbf{0.1493} & \textbf{0.1079} & \textbf{0.0128} & \textbf{0.0108} & \textbf{0.0187} & \textbf{0.1142} \\
                       & Epanechnikov            & \textbf{0.1364} & \textbf{0.0184} & \textbf{0.0118} & \textbf{0.0197} & \textbf{0.1366} & \textbf{0.1118} & \textbf{0.0152} & \textbf{0.0111} & \textbf{0.0184} & \textbf{0.1119} \\ \cmidrule(l){2-12} 
\multirow{3}{*}{0.05}  & Gaussian                & \textbf{0.1478} & \textbf{0.0178} & \textbf{0.0122} & \textbf{0.0191} & \textbf{0.1553} & \textbf{0.1151} & \textbf{0.0146} & \textbf{0.0100} & \textbf{0.0193} & \textbf{0.1333} \\
                       & Uniform                 & \textbf{0.1328} & \textbf{0.0171} & \textbf{0.0131} & \textbf{0.0195} & \textbf{0.1400} & \textbf{0.1302} & \textbf{0.0135} & \textbf{0.0120} & \textbf{0.0174} & \textbf{0.1071} \\
                       & Epanechnikov            & \textbf{0.1650} & \textbf{0.0162} & \textbf{0.0125} & \textbf{0.0202} & \textbf{0.1363} & \textbf{0.1093} & \textbf{0.0167} & \textbf{0.0109} & \textbf{0.0212} & \textbf{0.1247} \\ \cmidrule(l){2-12} 
\multirow{3}{*}{0.1}   & Gaussian                & \textbf{0.1558} & \textbf{0.0179} & \textbf{0.0121} & \textbf{0.0193} & \textbf{0.1341} & \textbf{0.0980} & \textbf{0.0157} & \textbf{0.0106} & \textbf{0.0190} & \textbf{0.1256} \\
                       & Uniform                 & \textbf{0.1474} & \textbf{0.0176} & \textbf{0.0121} & \textbf{0.0188} & \textbf{0.1347} & \textbf{0.1198} & \textbf{0.0153} & \textbf{0.0126} & \textbf{0.0192} & \textbf{0.1169} \\
                       & Epanechnikov            & \textbf{0.1533} & \textbf{0.0181} & \textbf{0.0120} & \textbf{0.0196} & \textbf{0.1421} & \textbf{0.1170} & \textbf{0.0162} & \textbf{0.0118} & \textbf{0.0201} & \textbf{0.1302} \\ \midrule
\multirow{2}{*}{$h$}   & \multirow{2}{*}{Method} & \multicolumn{5}{c|}{Scenario 3,   Model A}                                              & \multicolumn{5}{c}{Scenario 3, Model B}                                                 \\ \cmidrule(l){3-12} 
                       &                         & $\tau$=0.05     & $\tau$=0.25     & $\tau$=0.5      & $\tau$=0.75     & $\tau$=0.95     & $\tau$=0.05     & $\tau$=0.25     & $\tau$=0.5      & $\tau$=0.75     & $\tau$=0.95     \\ \midrule
                       & Baseline                & 0.9232          & 0.2420          & 0.1459          & 0.2413          & 0.9960          & 0.7786          & 0.2306          & 0.1391          & 0.2380          & 0.7249          \\ \cmidrule(l){2-12} 
\multirow{3}{*}{0.001} & Gaussian                & \textbf{0.8395} & \textbf{0.2040} & \textbf{0.1295} & \textbf{0.2145} & \textbf{0.7477} & 0.8256          & 0.2518          & 0.1416          & 0.2444          & 0.7536          \\
                       & Uniform                 & \textbf{0.8790} & \textbf{0.2064} & \textbf{0.1300} & \textbf{0.2146} & \textbf{0.7824} & \textbf{0.7167} & 0.2409          & \textbf{0.1372} & 0.2493          & \textbf{0.6920} \\
                       & Epanechnikov            & \textbf{0.8230} & \textbf{0.2129} & \textbf{0.1349} & \textbf{0.2206} & \textbf{0.7998} & \textbf{0.7502} & 0.2454          & \textbf{0.1390} & \textbf{0.2315} & \textbf{0.6683} \\ \cmidrule(l){2-12} 
\multirow{3}{*}{0.005} & Gaussian                & \textbf{0.8707} & \textbf{0.2077} & \textbf{0.1308} & \textbf{0.2263} & \textbf{0.8742} & \textbf{0.7740} & 0.2385          & 0.1413          & 0.2528          & \textbf{0.6860} \\
                       & Uniform                 & \textbf{0.8739} & \textbf{0.2021} & \textbf{0.1276} & \textbf{0.2249} & \textbf{0.8332} & \textbf{0.7142} & \textbf{0.2279} & \textbf{0.1374} & \textbf{0.2292} & 0.7574          \\
                       & Epanechnikov            & \textbf{0.8138} & \textbf{0.2084} & \textbf{0.1307} & \textbf{0.2166} & \textbf{0.8117} & \textbf{0.7051} & 0.2376          & \textbf{0.1344} & 0.2529          & 0.7390          \\ \cmidrule(l){2-12} 
\multirow{3}{*}{0.01}  & Gaussian                & \textbf{0.8995} & \textbf{0.1998} & \textbf{0.1275} & \textbf{0.2313} & \textbf{0.7526} & \textbf{0.7052} & 0.2473          & \textbf{0.1362} & 0.2381          & \textbf{0.7225} \\
                       & Uniform                 & \textbf{0.8739} & \textbf{0.2058} & \textbf{0.1289} & \textbf{0.2074} & \textbf{0.8256} & \textbf{0.7404} & 0.2411          & \textbf{0.1384} & 0.2390          & 0.7370          \\
                       & Epanechnikov            & \textbf{0.8079} & \textbf{0.2167} & \textbf{0.1317} & \textbf{0.2294} & \textbf{0.7893} & 0.8089          & 0.2330          & 0.1393          & 0.2497          & \textbf{0.6786} \\ \cmidrule(l){2-12} 
\multirow{3}{*}{0.05}  & Gaussian                & \textbf{0.8527} & \textbf{0.2083} & \textbf{0.1356} & \textbf{0.2211} & \textbf{0.7853} & \textbf{0.7434} & 0.2357          & \textbf{0.1332} & \textbf{0.2350} & \textbf{0.6987} \\
                       & Uniform                 & \textbf{0.8250} & \textbf{0.2164} & \textbf{0.1352} & \textbf{0.2280} & \textbf{0.7966} & \textbf{0.7513} & 0.2485          & \textbf{0.1349} & 0.2485          & \textbf{0.6703} \\
                       & Epanechnikov            & \textbf{0.7879} & \textbf{0.2079} & \textbf{0.1310} & \textbf{0.2128} & \textbf{0.7526} & \textbf{0.6812} & 0.2372          & \textbf{0.1379} & 0.2439          & \textbf{0.7231} \\ \cmidrule(l){2-12} 
\multirow{3}{*}{0.1}   & Gaussian                & \textbf{0.8220} & \textbf{0.2125} & \textbf{0.1348} & \textbf{0.2114} & \textbf{0.8293} & \textbf{0.7644} & \textbf{0.2302} & \textbf{0.1326} & 0.2467          & \textbf{0.6604} \\
                       & Uniform                 & \textbf{0.8100} & \textbf{0.2118} & \textbf{0.1371} & \textbf{0.2253} & \textbf{0.8218} & \textbf{0.7501} & \textbf{0.2295} & \textbf{0.1391} & 0.2393          & \textbf{0.7201} \\
                       & Epanechnikov            & \textbf{0.9015} & \textbf{0.2132} & \textbf{0.1310} & \textbf{0.2204} & \textbf{0.7852} & \textbf{0.6904} & 0.2471          & \textbf{0.1302} & \textbf{0.2262} & \textbf{0.6741} \\ \bottomrule
\end{tabular}
\end{adjustbox}
\end{table}

\begin{table}[htpb]
\centering
\caption{MSE performances without stop criterion for scenario 1-3, Model A and B under different sample sizes, quantile levels and smoothing kernels. The MSEs are averaged over 50 independent trials.}
\label{tab:mse nostop}
\begin{adjustbox}{width=\textwidth,center}
\begin{tabular}{@{}ccclllllllllllllll@{}}
\toprule
\multicolumn{2}{c}{\multirow{2}{*}{}}          & \multirow{2}{*}{Method} & \multicolumn{5}{c}{$n$=1000}                                                                                                                                                & \multicolumn{5}{c}{$n$=5000}                                                                                                                                                & \multicolumn{5}{c}{$n$=10000}                                                                                                                                          \\ \cmidrule(l){4-18} 
\multicolumn{2}{c}{}                           &                         & \multicolumn{1}{c}{$\tau$=0.05} & \multicolumn{1}{c}{$\tau$=0.25} & \multicolumn{1}{c}{$\tau$=0.5} & \multicolumn{1}{c}{$\tau$=0.75} & \multicolumn{1}{c|}{$\tau$=0.95}     & \multicolumn{1}{c}{$\tau$=0.05} & \multicolumn{1}{c}{$\tau$=0.25} & \multicolumn{1}{c}{$\tau$=0.5} & \multicolumn{1}{c}{$\tau$=0.75} & \multicolumn{1}{c|}{$\tau$=0.95}     & \multicolumn{1}{c}{$\tau$=0.05} & \multicolumn{1}{c}{$\tau$=0.25} & \multicolumn{1}{c}{$\tau$=0.5} & \multicolumn{1}{c}{$\tau$=0.75} & \multicolumn{1}{c}{$\tau$=0.95} \\ \midrule
\multirow{8}{*}{S1} & \multirow{4}{*}{Model A} & Baseline                & \fbox{0.3820}                          & 0.0402                          & 0.0278                         & 0.0374                          & \multicolumn{1}{l|}{0.3784}          & \fbox{0.0996 }                         & 0.0120                          & 0.0086                         & 0.0108                          & \multicolumn{1}{l|}{\fbox{0.0939}}          & \fbox{0.0618}                          & 0.0067                          & 0.0047                         & 0.0063                          & 0.1366                          \\ \cmidrule(l){3-18} 
                    &                          & Gaussian                & 0.4271                          & \textbf{0.0379}                 & \textbf{0.0224}                & \textbf{0.0374}                 & \multicolumn{1}{l|}{0.5027}          & 0.1118                          & \textbf{0.0088}                 & \textbf{0.0053}                & \textbf{0.0095}                 & \multicolumn{1}{l|}{0.1081}          & 0.0669                          & \textbf{0.0063}                 & \textbf{0.0033}                & \textbf{0.0054}                 & \textbf{0.0626}                 \\
                    &                          & Uniform                 & 0.4413                          & \textbf{0.0378}                 & \textbf{0.0224}                & \textbf{0.0326}                 & \multicolumn{1}{l|}{\textbf{0.3704}} & 0.1072                          & \textbf{0.0087}                 & \textbf{0.0059}                & \textbf{0.0101}                 & \multicolumn{1}{l|}{0.1312}          & 0.0789                          & \textbf{0.0058}                 & \textbf{0.0035}                & \textbf{0.0055}                 & \textbf{0.0530}                 \\
                    &                          & Epanechnikov            & 0.6718                          & \textbf{0.0390}                 & \textbf{0.0218}                & \textbf{0.0371}                 & \multicolumn{1}{l|}{0.5913}          & 0.1249                          & \textbf{0.0091}                 & \textbf{0.0055}                & \textbf{0.0095}                 & \multicolumn{1}{l|}{0.1153}          & 0.0648                          & \textbf{0.0060}                 & \textbf{0.0037}                & \textbf{0.0053}                 & \textbf{0.0662}                 \\ \cmidrule(l){3-18} 
                    & \multirow{4}{*}{Model B} & Baseline                & \fbox{0.3842}                          & \fbox{0.0527}                          & \fbox{0.0319}                         & \fbox{0.0475}                          & \multicolumn{1}{l|}{\fbox{0.4222}}          & 0.1143                          & 0.0149                          & 0.0107                         & 0.0151                          & \multicolumn{1}{l|}{0.1277}          & 0.1691                          & 0.0099                          & 0.0066                         & 0.0082                          & 0.3882                          \\ \cmidrule(l){3-18} 
                    &                          & Gaussian                & 0.4158                          & 0.0666                          & 0.0383                         & 0.0632                          & \multicolumn{1}{l|}{0.5080}          & 0.1172                          & \textbf{0.0143}                 & \textbf{0.0098}                & \textbf{0.0141}                 & \multicolumn{1}{l|}{\textbf{0.1067}} & \textbf{0.1063}                 & \textbf{0.0094}                 & \textbf{0.0055}                & 0.0084                          & \textbf{0.0729}                 \\
                    &                          & Uniform                 & 0.5645                          & 0.0610                          & 0.0376                         & 0.0616                          & \multicolumn{1}{l|}{0.5685}          & \textbf{0.1034}                 & \textbf{0.0143}                 & \textbf{0.0091}                & 0.0153                          & \multicolumn{1}{l|}{\textbf{0.1255}} & \textbf{0.0975}                 & \textbf{0.0093}                 & \textbf{0.0055}                & \textbf{0.0080}                 & \textbf{0.0738}                 \\
                    &                          & Epanechnikov            & 0.4247                          & 0.0576                          & 0.0383                         & 0.0623                          & \multicolumn{1}{l|}{0.6050}          & \textbf{0.1109}                 & \textbf{0.0144}                 & \textbf{0.0092}                & \textbf{0.0146}                 & \multicolumn{1}{l|}{\textbf{0.1164}} & \textbf{0.1291}                 & \textbf{0.0089}                 & \textbf{0.0058}                & 0.0092                          & \textbf{0.0662}                 \\ \midrule
\multirow{8}{*}{S2} & \multirow{4}{*}{Model A} & Baseline                & 0.8292                          & 0.0868                          & 0.0619                         & 0.0839                          & \multicolumn{1}{l|}{\fbox{0.7874}}          & 0.2752                          & \fbox{0.0275}                          & 0.0222                         & 0.0308                          & \multicolumn{1}{l|}{0.2747}          & 0.1704                          & 0.0205                          & 0.0145                         & 0.0223                          & 0.1670                          \\ \cmidrule(l){3-18} 
                    &                          & Gaussian                & \textbf{0.7994}                 & \textbf{0.0711}                 & \textbf{0.0587}                & \textbf{0.0778}                 & \multicolumn{1}{l|}{1.1450}          & \textbf{0.2202}                 & 0.0276                          & \textbf{0.0169}                & \textbf{0.0273}                 & \multicolumn{1}{l|}{\textbf{0.2598}} & \textbf{0.1316}                 & \textbf{0.0176}                 & \textbf{0.0128}                & \textbf{0.0193}                 & \textbf{0.1537}                 \\
                    &                          & Uniform                 & 0.8890                          & \textbf{0.0722}                 & \textbf{0.0472}                & 0.0968                          & \multicolumn{1}{l|}{0.8566}          & \textbf{0.2320}                 & 0.0286                          & \textbf{0.0181}                & \textbf{0.0275}                 & \multicolumn{1}{l|}{\textbf{0.2586}} & \textbf{0.1457}                 & \textbf{0.0180}                 & \textbf{0.0128}                & \textbf{0.0191}                 & \textbf{0.1467}                 \\
                    &                          & Epanechnikov            & 0.9192                          & \textbf{0.0787}                 & \textbf{0.0522}                & 0.1048                          & \multicolumn{1}{l|}{0.9065}          & \textbf{0.2415}                 & 0.0284                          & \textbf{0.0177}                & \textbf{0.0265}                 & \multicolumn{1}{l|}{\textbf{0.2492}} & \textbf{0.1430}                 & \textbf{0.0162}                 & \textbf{0.0125}                & \textbf{0.0191}                 & \textbf{0.1388}                 \\ \cmidrule(l){3-18} 
                    & \multirow{4}{*}{Model B} & Baseline                & 0.4930                          & \fbox{0.0583}                          & \fbox{0.0493}                         & 0.0840                          & \multicolumn{1}{l|}{\fbox{0.5898}}          & 0.1732                          & 0.0257                          & 0.0178                         & 0.0300                          & \multicolumn{1}{l|}{0.1966}          & 0.1323                          & 0.0202                          & 0.0129                         & 0.0220                          & 0.1358                          \\ \cmidrule(l){3-18} 
                    &                          & Gaussian                & 0.7345                          & 0.0633                          & 0.0499                         & \textbf{0.0759}                 & \multicolumn{1}{l|}{0.6273}          & 0.1800                          & \textbf{0.0246}                 & \textbf{0.0155}                & \textbf{0.0245}                 & \multicolumn{1}{l|}{0.2157}          & \textbf{0.1085}                 & \textbf{0.0160}                 & \textbf{0.0119}                & \textbf{0.0178}                 & \textbf{0.1144}                 \\
                    &                          & Uniform                 & \textbf{0.4506}                 & 0.0716                          & 0.0504                         & 0.0935                          & \multicolumn{1}{l|}{0.6030}          & \textbf{0.1707}                 & \textbf{0.0216}                 & \textbf{0.0176}                & \textbf{0.0241}                 & \multicolumn{1}{l|}{0.2243}          & \textbf{0.1172}                 & \textbf{0.0151}                 & \textbf{0.0118}                & \textbf{0.0175}                 & \textbf{0.1342}                 \\
                    &                          & Epanechnikov            & 0.5800                          & 0.0723                          & 0.0545                         & \textbf{0.0806}                 & \multicolumn{1}{l|}{0.7471}          & 0.2151                          & 0.0263                          & \textbf{0.0159}                & \textbf{0.0289}                 & \multicolumn{1}{l|}{\textbf{0.1792}} & \textbf{0.1038}                 & \textbf{0.0146}                 & \textbf{0.0107}                & \textbf{0.0173}                 & \textbf{0.1302}                 \\ \midrule
\multirow{8}{*}{S3} & \multirow{4}{*}{Model A} & Baseline                & 3.0766                          & 0.7564                          & 0.5350                         & 0.7407                          & \multicolumn{1}{l|}{\fbox{2.8969}}          & 1.2870                          & 0.3854                          & 0.2146                         & 0.3387                          & \multicolumn{1}{l|}{1.3495}          & 0.9232                          & 0.2420                          & 0.1459                         & 0.2413                          & 0.9960                          \\ \cmidrule(l){3-18} 
                    &                          & Gaussian                & \textbf{2.9080}                 & 0.7769                          & \textbf{0.4784}                & \textbf{0.7056}                 & \multicolumn{1}{l|}{3.4181}          & 1.3139                          & \textbf{0.3379}                 & \textbf{0.1994}                & \textbf{0.3229}                 & \multicolumn{1}{l|}{\textbf{1.1897}} & \textbf{0.8395}                 & \textbf{0.2038}                 & \textbf{0.1295}                & \textbf{0.2145}                 & \textbf{0.7477}                 \\
                    &                          & Uniform                 & 3.3730                          & 0.7690                          & \textbf{0.4876}                & 0.7832                          & \multicolumn{1}{l|}{3.3576}          & \textbf{1.1593}                 & \textbf{0.3449}                 & \textbf{0.1912}                & \textbf{0.3088}                 & \multicolumn{1}{l|}{\textbf{1.2392}} & \textbf{0.8790}                 & \textbf{0.2064}                 & \textbf{0.1300}                & \textbf{0.2146}                 & \textbf{0.7824}                 \\
                    &                          & Epanechnikov            & 3.4479                          & \textbf{0.7308}                 & \textbf{0.4679}                & 0.8076                          & \multicolumn{1}{l|}{3.3156}          & \textbf{1.1229}                 & \textbf{0.3616}                 & \textbf{0.1980}                & \textbf{0.3099}                 & \multicolumn{1}{l|}{\textbf{1.3312}} & \textbf{0.8230}                 & \textbf{0.2123}                 & \textbf{0.1337}                & \textbf{0.2206}                 & \textbf{0.7998}                 \\ \cmidrule(l){3-18} 
                    & \multirow{4}{*}{Model B} & Baseline                & 2.8166                          & \fbox{0.7558 }                         & 0.5175                         & 0.7665                          & \multicolumn{1}{l|}{2.2839}          & 1.0061                          & \fbox{0.3596}                          & \fbox{0.2196}                         & 0.3551                          & \multicolumn{1}{l|}{1.0990}          & 0.7786                          & \fbox{0.2306 }                         & 0.1391                         & 0.2380                          & 0.7249                          \\ \cmidrule(l){3-18} 
                    &                          & Gaussian                & \textbf{2.3173}                 & 0.8406                          & \textbf{0.5142}                & \textbf{0.7536}                 & \multicolumn{1}{l|}{2.6542}          & 1.0602                          & 0.4255                          & 0.2302                         & \textbf{0.3526}                 & \multicolumn{1}{l|}{\textbf{1.0681}} & 0.8244                          & 0.2516                          & 0.1416                         & 0.2443                          & 0.7536                          \\
                    &                          & Uniform                 & \textbf{2.1946}                 & 0.8319                          & 0.5193                         & \textbf{0.7352}                 & \multicolumn{1}{l|}{2.5765}          & 1.2056                          & 0.4038                          & 0.2292                         & \textbf{0.3372}                 & \multicolumn{1}{l|}{\textbf{1.0528}} & \textbf{0.7167}                 & 0.2411                          & \textbf{0.1359}                & 0.2493                          & \textbf{0.6890}                 \\
                    &                          & Epanechnikov            & 2.9684                          & 0.9008                          & \textbf{0.4892}                & 0.8704                          & \multicolumn{1}{l|}{\textbf{2.1302}} & \textbf{1.0053}                 & 0.3741                          & 0.2297                         & \textbf{0.3473}                 & \multicolumn{1}{l|}{\textbf{1.0896}} & \textbf{0.7474}                 & 0.2453                          & \textbf{0.1390}                & \textbf{0.2315}                 & \textbf{0.6683}                 \\ \bottomrule
\end{tabular}
\end{adjustbox}
\end{table}

{We also study the effect of residual-based neural networks. We add one residual block for every hidden layer in the baseline and ConquerNet to study the MSE performance. We compare the baseline model performance without residual blocks to that using residual-based structures. The results are shown in Table \ref{tab:mse-single-nores-vs-res-baseline}. The numbers with brackets in the table represent the standard error of the MSEs over the 50 trials, and the bold font means that the model has a smaller MSE. We find from the results that the residual-based neural networks have no advantage in reducing MSE in many scenarios. 
We also make the comparison for ConquerNet, as shown in Table \ref{tab:mse-single-nores-vs-res-gaussian} for the Gaussian kernel, for example. Residual blocks also fail to significantly reduce the MSE in the Gaussian ConquerNet. However, we find that for high ($\tau=0.95$) or low ($\tau=0.05$) quantile levels, the models with residual structures tend to have smaller MSEs and standard errors, which increases the training stability. We provide the table of the MSE performances for baseline models and ConquerNet with residual blocks, see Table \ref{tab:mse-single-res}. We can see that for residual-based networks, the ConquerNet still have better performance than the baseline models.}

\begin{table}[htpb]
\centering
\caption{Mean squared error (MSE) and its standard error performances for baseline, scenario 1-3, model A and B with residual blocks under different sample sizes, quantile levels. ``Original'' means the original baseline network without residual blocks, and ``+Res'' means the baseline network with residual blocks. The MSEs are averaged over 50 independent trials. The bracket means the standard error of the MSEs over the trials.}
\label{tab:mse-single-nores-vs-res-baseline}
\begin{adjustbox}{width=\textwidth,center}
\begin{tabular}{@{}cccccccccccccccccc@{}}
\toprule
\multicolumn{3}{c}{\multirow{2}{*}{Baseline}}                              & \multicolumn{5}{c}{$n$=1000}                                                                                 & \multicolumn{5}{c}{$n$=5000}                                                                                 & \multicolumn{5}{c}{$n$=10000}                                                           \\ \cmidrule(l){4-18} 
\multicolumn{3}{c}{}                                                       & $\tau$=0.05     & $\tau$=0.25     & $\tau$=0.5      & $\tau$=0.75     & \multicolumn{1}{c|}{$\tau$=0.95}     & $\tau$=0.05     & $\tau$=0.25     & $\tau$=0.5      & $\tau$=0.75     & \multicolumn{1}{c|}{$\tau$=0.95}     & $\tau$=0.05     & $\tau$=0.25     & $\tau$=0.5      & $\tau$=0.75     & $\tau$=0.95     \\ \midrule
\multirow{8}{*}{S1} & \multirow{4}{*}{Model A} & \multirow{2}{*}{Original} & 0.3820          & \textbf{0.0402} & \textbf{0.0278} & \textbf{0.0374} & \multicolumn{1}{c|}{0.3784}          & \textbf{0.0996} & \textbf{0.0120} & \textbf{0.0086} & \textbf{0.0108} & \multicolumn{1}{c|}{0.0939}          & \textbf{0.0618} & \textbf{0.0067} & \textbf{0.0047} & \textbf{0.0063} & 0.1366          \\
                    &                          &                           & (0.0916)        & (0.0025)        & (0.0019)        & (0.0024)        & \multicolumn{1}{c|}{(0.0524)}        & (0.0172)        & (0.0005)        & (0.0007)        & (0.0005)        & \multicolumn{1}{c|}{(0.0207)}        & (0.0048)        & (0.0003)        & (0.0002)        & (0.0004)        & (0.0585)        \\
                    &                          & \multirow{2}{*}{+ Res}    & \textbf{0.3316} & 0.0575          & 0.0324          & 0.0376          & \multicolumn{1}{c|}{\textbf{0.3508}} & 0.1051          & 0.0221          & 0.0120          & 0.0121          & \multicolumn{1}{c|}{\textbf{0.0797}} & 0.0661          & 0.0150          & 0.0088          & 0.0076          & \textbf{0.0596} \\
                    &                          &                           & (0.0343)        & (0.0029)        & (0.0018)        & (0.0025)        & \multicolumn{1}{c|}{(0.0729)}        & (0.0071)        & (0.0010)        & (0.0006)        & (0.0008)        & \multicolumn{1}{c|}{(0.0057)}        & (0.0037)        & (0.0006)        & (0.0004)        & (0.0005)        & (0.0070)        \\ \cmidrule(l){4-18} 
                    & \multirow{4}{*}{Model B} & \multirow{2}{*}{Original} & 0.3842          & \textbf{0.0527} & \textbf{0.0319} & 0.0475          & \multicolumn{1}{c|}{0.4222}          & \textbf{0.1143} & 0.0149          & \textbf{0.0107} & \textbf{0.0151} & \multicolumn{1}{c|}{0.1277}          & 0.1691          & \textbf{0.0099} & \textbf{0.0066} & \textbf{0.0082} & \textbf{0.3882} \\
                    &                          &                           & (0.0705)        & (0.0036)        & (0.0019)        & (0.0032)        & \multicolumn{1}{c|}{(0.0709)}        & (0.0160)        & (0.0006)        & (0.0004)        & (0.0008)        & \multicolumn{1}{c|}{(0.0234)}        & (0.0656)        & (0.0006)        & (0.0003)        & (0.0004)        & (0.3125)        \\
                    &                          & \multirow{2}{*}{+ Res}    & \textbf{0.3457} & 0.0716          & 0.0373          & \textbf{0.0424} & \multicolumn{1}{c|}{\textbf{0.3231}} & 0.1249          & 0.0293          & 0.0191          & 0.0162          & \multicolumn{1}{c|}{\textbf{0.0620}} & \textbf{0.0783} & 0.0200          & 0.0121          & 0.0118          & 0.0455          \\
                    &                          &                           & (0.0365)        & (0.0040)        & (0.0017)        & (0.0031)        & \multicolumn{1}{c|}{(0.0515)}        & (0.0081)        & (0.0011)        & (0.0008)        & (0.0011)        & \multicolumn{1}{c|}{(0.0048)}        & (0.0057)        & (0.0007)        & (0.0005)        & (0.0007)        & (0.0037)        \\ \midrule
\multirow{8}{*}{S2} & \multirow{4}{*}{Model A} & \multirow{2}{*}{Original} & 0.8292          & 0.0868          & \textbf{0.0619} & \textbf{0.0839} & \multicolumn{1}{c|}{0.7874}          & 0.2752          & \textbf{0.0275} & \textbf{0.0222} & \textbf{0.0308} & \multicolumn{1}{c|}{0.2747}          & 0.1704          & \textbf{0.0205} & \textbf{0.0145} & \textbf{0.0223} & \textbf{0.1670} \\
                    &                          &                           & (0.1170)        & (0.0162)        & (0.0046)        & (0.0061)        & \multicolumn{1}{c|}{(0.1149)}        & (0.0264)        & (0.0021)        & (0.0016)        & (0.0020)        & \multicolumn{1}{c|}{(0.0216)}        & (0.0121)        & (0.0013)        & (0.0006)        & (0.0013)        & (0.0076)        \\
                    &                          & \multirow{2}{*}{+ Res}    & \textbf{0.7892} & \textbf{0.0801} & 0.0655          & 0.1017          & \multicolumn{1}{c|}{\textbf{0.7819}} & \textbf{0.2290} & 0.0315          & 0.0244          & 0.0354          & \multicolumn{1}{c|}{\textbf{0.2551}} & \textbf{0.1285} & 0.0213          & 0.0157          & 0.0231          & 0.1723          \\
                    &                          &                           & (0.1507)        & (0.0061)        & (0.0059)        & (0.0060)        & \multicolumn{1}{c|}{(0.1265)}        & (0.0220)        & (0.0022)        & (0.0017)        & (0.0019)        & \multicolumn{1}{c|}{(0.0164)}        & (0.0106)        & (0.0015)        & (0.0008)        & (0.0012)        & (0.0096)        \\ \cmidrule(l){4-18} 
                    & \multirow{4}{*}{Model B} & \multirow{2}{*}{Original} & \textbf{0.4930} & \textbf{0.0583} & \textbf{0.0493} & \textbf{0.0840} & \multicolumn{1}{c|}{\textbf{0.5898}} & \textbf{0.1732} & \textbf{0.0257} & \textbf{0.0178} & \textbf{0.0300} & \multicolumn{1}{c|}{\textbf{0.1966}} & 0.1323          & 0.0202          & \textbf{0.0129} & 0.0220          & 0.1358          \\
                    &                          &                           & (0.0742)        & (0.0038)        & (0.0043)        & (0.0062)        & \multicolumn{1}{c|}{(0.0712)}        & (0.0114)        & (0.0015)        & (0.0009)        & (0.0018)        & \multicolumn{1}{c|}{(0.0164)}        & (0.0146)        & (0.0017)        & (0.0006)        & (0.0013)        & (0.0070)        \\
                    &                          & \multirow{2}{*}{+ Res}    & 0.5210          & 0.0683          & 0.0651          & 0.0869          & \multicolumn{1}{c|}{0.7145}          & 0.1847          & 0.0261          & 0.0205          & 0.0335          & \multicolumn{1}{c|}{0.2108}          & \textbf{0.1254} & \textbf{0.0174} & 0.0142          & \textbf{0.0204} & \textbf{0.1302} \\
                    &                          &                           & (0.1177)        & (0.0056)        & (0.0084)        & (0.0054)        & \multicolumn{1}{c|}{(0.1019)}        & (0.0214)        & (0.0015)        & (0.0008)        & (0.0023)        & \multicolumn{1}{c|}{(0.0160)}        & (0.0102)        & (0.0010)        & (0.0007)        & (0.0009)        & (0.0090)        \\ \midrule
\multirow{8}{*}{S3} & \multirow{4}{*}{Model A} & \multirow{2}{*}{Original} & \textbf{3.0766} & \textbf{0.7564} & \textbf{0.5350} & \textbf{0.7407} & \multicolumn{1}{c|}{\textbf{2.8969}} & 1.2870          & 0.3854          & \textbf{0.2146} & \textbf{0.3387} & \multicolumn{1}{c|}{1.3495}          & 0.9232          & \textbf{0.2420} & \textbf{0.1459} & \textbf{0.2413} & 0.9960          \\
                    &                          &                           & (0.2728)        & (0.0388)        & (0.0248)        & (0.0330)        & \multicolumn{1}{c|}{(0.2179)}        & (0.0844)        & (0.0216)        & (0.0051)        & (0.0109)        & \multicolumn{1}{c|}{(0.0976)}        & (0.0348)        & (0.0082)        & (0.0028)        & (0.0067)        & (0.0636)        \\
                    &                          & \multirow{2}{*}{+ Res}    & 3.1729          & 0.8820          & 0.5587          & 0.8456          & \multicolumn{1}{c|}{3.0478}          & \textbf{1.1872} & \textbf{0.3741} & 0.2298          & 0.3626          & \multicolumn{1}{c|}{\textbf{1.1419}} & \textbf{0.7821} & 0.2505          & 0.1708          & 0.2539          & \textbf{0.7933} \\
                    &                          &                           & (0.4222)        & (0.0415)        & (0.0139)        & (0.0403)        & \multicolumn{1}{c|}{(0.2550)}        & (0.0536)        & (0.0106)        & (0.0057)        & (0.0107)        & \multicolumn{1}{c|}{(0.0664)}        & (0.0382)        & (0.0066)        & (0.0035)        & (0.0062)        & (0.0389)        \\ \cmidrule(l){4-18} 
                    & \multirow{4}{*}{Model B} & \multirow{2}{*}{Original} & 2.8166          & \textbf{0.7558} & 0.5175          & 0.7665          & \multicolumn{1}{c|}{2.2839}          & 1.0061          & 0.3596          & 0.2196          & 0.3551          & \multicolumn{1}{c|}{1.0990}          & 0.7786          & \textbf{0.2306} & \textbf{0.1391} & \textbf{0.2380} & 0.7249          \\
                    &                          &                           & (0.2967)        & (0.0383)        & (0.0205)        & (0.0509)        & \multicolumn{1}{c|}{(0.2600)}        & (0.0579)        & (0.0131)        & (0.0054)        & (0.0109)        & \multicolumn{1}{c|}{(0.0780)}        & (0.0315)        & (0.0075)        & (0.0036)        & (0.0084)        & (0.0330)        \\
                    &                          & \multirow{2}{*}{+ Res}    & \textbf{2.2566} & 0.7801          & \textbf{0.5042} & \textbf{0.7446} & \multicolumn{1}{c|}{\textbf{2.2770}} & \textbf{0.9813} & \textbf{0.3330} & \textbf{0.2148} & \textbf{0.3146} & \multicolumn{1}{c|}{\textbf{0.9632}} & \textbf{0.7292} & 0.2358          & 0.1554          & 0.2451          & \textbf{0.6422} \\
                    &                          &                           & (0.2357)        & (0.0379)        & (0.0171)        & (0.0301)        & \multicolumn{1}{c|}{(0.1769)}                             & (0.0587)        & (0.0103)        & (0.0043)        & (0.0083)        & \multicolumn{1}{c|}{(0.0474)}                             & (0.0358)        & (0.0068)        & (0.0025)        & (0.0061)        & (0.0267)        \\ \bottomrule
\end{tabular}
\end{adjustbox}
\end{table}

\begin{table}[htpb]
\centering
\caption{Mean squared error (MSE) and its standard error performances for Gaussian ConquerNet, scenario 1-3, model A and B with residual blocks under different sample sizes, quantile levels. ``Original'' means the original Gaussian ConquerNet without residual blocks, and ``+Res'' means the Gaussian ConquerNet with residual blocks. The MSEs are averaged over 50 independent trials. The bracket means the standard error of the MSEs over the trials.}
\label{tab:mse-single-nores-vs-res-gaussian}
\begin{adjustbox}{width=\textwidth,center}
\begin{tabular}{@{}cccccccccccccccccc@{}}
\toprule
\multicolumn{3}{c}{\multirow{2}{*}{Gaussian}}                              & \multicolumn{5}{c}{$n$=1000}                                                                                 & \multicolumn{5}{c}{$n$=5000}                                                                                 & \multicolumn{5}{c}{$n$=10000}                                                           \\ \cmidrule(l){4-18} 
\multicolumn{3}{c}{}                                                       & $\tau$=0.05     & $\tau$=0.25     & $\tau$=0.5      & $\tau$=0.75     & \multicolumn{1}{c|}{$\tau$=0.95}     & $\tau$=0.05     & $\tau$=0.25     & $\tau$=0.5      & $\tau$=0.75     & \multicolumn{1}{c|}{$\tau$=0.95}     & $\tau$=0.05     & $\tau$=0.25     & $\tau$=0.5      & $\tau$=0.75     & $\tau$=0.95     \\ \midrule
\multirow{8}{*}{S1} & \multirow{4}{*}{Model A} & \multirow{2}{*}{Original} & 0.4354          & \textbf{0.0383} & 0.0224          & 0.0382          & \multicolumn{1}{c|}{0.5035}          & 0.1124          & \textbf{0.0087} & \textbf{0.0055} & 0.0097          & \multicolumn{1}{c|}{0.1081}          & 0.0678          & \textbf{0.0064} & \textbf{0.0034} & \textbf{0.0055} & 0.0625          \\
                    &                          &                           & (0.0767)        & (0.0025)        & (0.0014)        & (0.0030)        & \multicolumn{1}{c|}{(0.0858)}        & (0.0120)        & (0.0004)        & (0.0003)        & (0.0007)        & \multicolumn{1}{c|}{(0.0148)}        & (0.0088)        & (0.0005)        & (0.0001)        & (0.0004)        & (0.0060)        \\
                    &                          & \multirow{2}{*}{+ Res}    & \textbf{0.3787} & 0.0620          & \textbf{0.0217} & \textbf{0.0262} & \multicolumn{1}{c|}{\textbf{0.4817}} & \textbf{0.1006} & 0.0249          & 0.0118          & \textbf{0.0090} & \multicolumn{1}{c|}{\textbf{0.0889}} & \textbf{0.0641} & 0.0245          & 0.0122          & 0.0100          & \textbf{0.0465} \\
                    &                          &                           & (0.0573)        & (0.0030)        & (0.0014)        & (0.0021)        & \multicolumn{1}{c|}{(0.0825)}        & (0.0058)        & (0.0011)        & (0.0006)        & (0.0006)        & \multicolumn{1}{c|}{(0.0105)}        & (0.0037)        & (0.0009)        & (0.0006)        & (0.0008)        & (0.0044)        \\ \cmidrule(l){4-18} 
                    & \multirow{4}{*}{Model B} & \multirow{2}{*}{Original} & 0.4158          & \textbf{0.0665} & 0.0383          & 0.0633          & \multicolumn{1}{c|}{0.5202}          & \textbf{0.1172} & \textbf{0.0144} & \textbf{0.0097} & 0.0142          & \multicolumn{1}{c|}{0.1066}          & 0.1062          & \textbf{0.0095} & \textbf{0.0055} & \textbf{0.0086} & 0.0726          \\
                    &                          &                           & (0.0341)        & (0.0043)        & (0.0014)        & (0.0024)        & \multicolumn{1}{c|}{(0.0427)}        & (0.0113)        & (0.0006)        & (0.0004)        & (0.0005)        & \multicolumn{1}{c|}{(0.0122)}        & (0.0307)        & (0.0005)        & (0.0003)        & (0.0005)        & (0.0124)        \\
                    &                          & \multirow{2}{*}{+ Res}    & \textbf{0.3525} & 0.0797          & \textbf{0.0306} & \textbf{0.0308} & \multicolumn{1}{c|}{\textbf{0.5173}} & 0.1360          & 0.0386          & 0.0187          & \textbf{0.0131} & \multicolumn{1}{c|}{\textbf{0.0709}} & \textbf{0.0829} & 0.0276          & 0.0156          & 0.0110          & \textbf{0.0443} \\
                    &                          &                           & (0.0380)        & (0.0036)        & (0.0018)        & (0.0021)        & \multicolumn{1}{c|}{(0.0915)}        & (0.0094)        & (0.0016)        & (0.0011)        & (0.0009)        & \multicolumn{1}{c|}{(0.0077)}        & (0.0052)        & (0.0011)        & (0.0006)        & (0.0007)        & (0.0044)        \\ \midrule
\multirow{8}{*}{S2} & \multirow{4}{*}{Model A} & \multirow{2}{*}{Original} & 0.7994          & 0.0711          & 0.0587          & \textbf{0.0778} & \multicolumn{1}{c|}{1.1466}          & 0.2202          & 0.0276          & \textbf{0.0169} & \textbf{0.0273} & \multicolumn{1}{c|}{0.2598}          & 0.1316          & 0.0176          & \textbf{0.0128} & 0.0193          & 0.1537          \\
                    &                          &                           & (0.1005)        & (0.0068)        & (0.0088)        & (0.0053)        & \multicolumn{1}{c|}{(0.2029)}        & (0.0174)        & (0.0025)        & (0.0010)        & (0.0017)        & \multicolumn{1}{c|}{(0.0284)}        & (0.0091)        & (0.0011)        & (0.0007)        & (0.0016)        & (0.0102)        \\
                    &                          & \multirow{2}{*}{+ Res}    & \textbf{0.7481} & \textbf{0.0704} & \textbf{0.0577} & 0.0906          & \multicolumn{1}{c|}{\textbf{0.7275}} & \textbf{0.2076} & \textbf{0.0260} & 0.0223          & 0.0308          & \multicolumn{1}{c|}{\textbf{0.2394}} & \textbf{0.1302} & \textbf{0.0152} & 0.0134          & \textbf{0.0187} & \textbf{0.1354} \\
                    &                          &                           & (0.1358)        & (0.0082)        & (0.0043)        & (0.0064)        & \multicolumn{1}{c|}{(0.0984)}        & (0.0219)        & (0.0015)        & (0.0028)        & (0.0017)        & \multicolumn{1}{c|}{(0.0315)}        & (0.0075)        & (0.0009)        & (0.0006)        & (0.0010)        & (0.0092)        \\ \cmidrule(l){4-18} 
                    & \multirow{4}{*}{Model B} & \multirow{2}{*}{Original} & 0.7367          & \textbf{0.0639} & \textbf{0.0500} & \textbf{0.0759} & \multicolumn{1}{c|}{0.6273}          & 0.1800          & 0.0246          & \textbf{0.0155} & \textbf{0.0245} & \multicolumn{1}{c|}{0.2157}          & \textbf{0.1085} & 0.0160          & \textbf{0.0119} & \textbf{0.0178} & 0.1144          \\
                    &                          &                           & (0.1652)        & (0.0057)        & (0.0049)        & (0.0053)        & \multicolumn{1}{c|}{(0.0972)}        & (0.0171)        & (0.0020)        & (0.0009)        & (0.0024)        & \multicolumn{1}{c|}{(0.0236)}        & (0.0087)        & (0.0017)        & (0.0007)        & (0.0013)        & (0.0078)        \\
                    &                          & \multirow{2}{*}{+ Res}    & \textbf{0.7154} & 0.0700          & 0.0553          & 0.0827          & \multicolumn{1}{c|}{\textbf{0.5483}} & \textbf{0.1430} & \textbf{0.0237} & 0.0195          & 0.0284          & \multicolumn{1}{c|}{\textbf{0.1835}} & 0.1008          & \textbf{0.0152} & 0.0120          & 0.0187          & \textbf{0.1036} \\
                    &                          &                           & (0.1862)        & (0.0083)        & (0.0043)        & (0.0053)        & \multicolumn{1}{c|}{(0.1012)}        & (0.0134)        & (0.0011)        & (0.0013)        & (0.0014)        & \multicolumn{1}{c|}{(0.0138)}        & (0.0093)        & (0.0010)        & (0.0006)        & (0.0011)        & (0.0070)        \\ \midrule
\multirow{8}{*}{S3} & \multirow{4}{*}{Model A} & \multirow{2}{*}{Original} & 2.8841          & \textbf{0.7776} & \textbf{0.4788} & \textbf{0.7056} & \multicolumn{1}{c|}{3.4222}          & 1.3139          & 0.3379          & \textbf{0.1993} & 0.3269          & \multicolumn{1}{c|}{1.1897}          & 0.8395          & \textbf{0.2040} & \textbf{0.1295} & \textbf{0.2145} & 0.7477          \\
                    &                          &                           & (0.3062)        & (0.0746)        & (0.0206)        & (0.0500)        & \multicolumn{1}{c|}{(0.3403)}        & (0.1159)        & (0.0135)        & (0.0067)        & (0.0100)        & \multicolumn{1}{c|}{(0.0783)}        & (0.0565)        & (0.0055)        & (0.0027)        & (0.0055)        & (0.0402)        \\
                    &                          & \multirow{2}{*}{+ Res}    & \textbf{2.8543} & 0.9541          & 0.5828          & 0.7702          & \multicolumn{1}{c|}{\textbf{2.7395}} & \textbf{1.1222} & \textbf{0.3339} & 0.2143          & \textbf{0.3045} & \multicolumn{1}{c|}{\textbf{0.9996}} & \textbf{0.7556} & 0.2403          & 0.1515          & 0.2281          & \textbf{0.6931} \\
                    &                          &                           & (0.2822)        & (0.0714)        & (0.0455)        & (0.0347)        & \multicolumn{1}{c|}{(0.2494)}        & (0.0464)        & (0.0117)        & (0.0045)        & (0.0084)        & \multicolumn{1}{c|}{(0.0518)}        & (0.0360)        & (0.0084)        & (0.0027)        & (0.0056)        & (0.0286)        \\ \cmidrule(l){4-18} 
                    & \multirow{4}{*}{Model B} & \multirow{2}{*}{Original} & \textbf{2.3193} & 0.8405          & \textbf{0.5142} & \textbf{0.7543} & \multicolumn{1}{c|}{2.6520}          & 1.0602          & 0.4263          & 0.2304          & 0.3525          & \multicolumn{1}{c|}{1.0681}          & 0.8256          & 0.2518          & \textbf{0.1416} & 0.2444          & 0.7536          \\
                    &                          &                           & (0.2530)        & (0.0573)        & (0.0182)        & (0.0441)        & \multicolumn{1}{c|}{(0.5055)}        & (0.0791)        & (0.0202)        & (0.0070)        & (0.0122)        & \multicolumn{1}{c|}{(0.0822)}        & (0.0460)        & (0.0132)        & (0.0047)        & (0.0085)        & (0.0450)        \\
                    &                          & \multirow{2}{*}{+ Res}    & 2.5481          & \textbf{0.7569} & 0.5322          & 0.8249          & \multicolumn{1}{c|}{\textbf{2.0108}} & \textbf{0.9965} & \textbf{0.3196} & \textbf{0.2130} & \textbf{0.3027} & \multicolumn{1}{c|}{\textbf{0.8176}} & \textbf{0.6785} & \textbf{0.2169} & 0.1470          & \textbf{0.2303} & \textbf{0.6120} \\
                    &                          &                           & (0.2880)        & (0.0426)        & (0.0230)        & (0.0643)        & \multicolumn{1}{c|}{(0.1294)}        & (0.0655)        & (0.0112)        & (0.0083)        & (0.0108)        & \multicolumn{1}{c|}{(0.0462)}        & (0.0374)        & (0.0068)        & (0.0035)        & (0.0064)        & (0.0310)        \\ \bottomrule
\end{tabular}
\end{adjustbox}
\end{table}

\begin{table}[htpb]
\centering
\caption{Mean squared error (MSE) performances for scenario 1-3, model A and B with residual blocks under different sample sizes, quantile levels, and smoothing kernels. The MSEs are averaged over 50 independent trials. The baseline model is boxed if it outperforms our ConquerNet.}
\label{tab:mse-single-res}
\begin{adjustbox}{width=\textwidth,center}
\begin{tabular}{@{}cccccccccccccccccc@{}}
\toprule
\multicolumn{2}{c}{\multirow{2}{*}{}}          & \multirow{2}{*}{Method} & \multicolumn{5}{c}{$n$=1000}                                                                                 & \multicolumn{5}{c}{$n$=5000}                                                                                 & \multicolumn{5}{c}{$n$=10000}                                                           \\ \cmidrule(l){4-18} 
\multicolumn{2}{c}{}                           &                         & $\tau$=0.05     & $\tau$=0.25     & $\tau$=0.5      & $\tau$=0.75     & \multicolumn{1}{c|}{$\tau$=0.95}     & $\tau$=0.05     & $\tau$=0.25     & $\tau$=0.5      & $\tau$=0.75     & \multicolumn{1}{c|}{$\tau$=0.95}     & $\tau$=0.05     & $\tau$=0.25     & $\tau$=0.5      & $\tau$=0.75     & $\tau$=0.95     \\ \midrule
\multirow{8}{*}{S1} & \multirow{4}{*}{Model A} & Baseline                & \fbox{0.3316}          & \fbox{0.0575}          & 0.0324          & 0.0376          & \multicolumn{1}{c|}{\fbox{0.3508}}          & 0.1051          & \fbox{0.0221}         & 0.0120          & 0.0121         & \multicolumn{1}{c|}{\fbox{0.0797}}          & 0.0661          & \fbox{0.0150}          & \fbox{0.0088}          & \fbox{0.0076}          & 0.0596          \\ \cmidrule(l){3-18} 
                    &                          & Gaussian                & 0.3787          & 0.0620          & \textbf{0.0217} & \textbf{0.0262} & \multicolumn{1}{c|}{0.4817}          & \textbf{0.1006} & 0.0249          & \textbf{0.0118} & \textbf{0.0090} & \multicolumn{1}{c|}{0.0889}          & \textbf{0.0641} & 0.0245          & 0.0122          & 0.0100          & \textbf{0.0465} \\
                    &                          & Uniform                 & 0.3719          & 0.0581          & \textbf{0.0218} & \textbf{0.0234} & \multicolumn{1}{c|}{0.4008}          & 0.1139          & 0.0255          & \textbf{0.0113} & \textbf{0.0092} & \multicolumn{1}{c|}{0.0881}          & 0.0689          & 0.0239          & 0.0127          & 0.0099          & \textbf{0.0483} \\
                    &                          & Epanechnikov            & 0.3356          & 0.0629          & \textbf{0.0225} & \textbf{0.0259} & \multicolumn{1}{c|}{0.3973}          & 0.1149          & 0.0259          & \textbf{0.0115} & \textbf{0.0090} & \multicolumn{1}{c|}{0.0928}          & \textbf{0.0627} & 0.0254          & 0.0127          & 0.0094          & \textbf{0.0434} \\ \cmidrule(l){3-18} 
                    & \multirow{4}{*}{Model B} & Baseline                & \fbox{0.3457}          & \fbox{0.0716}          & 0.0373          & 0.0424          & \multicolumn{1}{c|}{\fbox{0.3231}}          & 0.1249          & \fbox{0.0293}          & 0.0191          & 0.0162          & \multicolumn{1}{c|}{\fbox{0.0620}}          & 0.0783          & \fbox{0.0200}          & \fbox{0.0121}          & 0.0118          & 0.0455          \\ \cmidrule(l){3-18} 
                    &                          & Gaussian                & 0.3525          & 0.0797          & \textbf{0.0306} & \textbf{0.0308} & \multicolumn{1}{c|}{0.5173}          & 0.1360          & 0.0386          & \textbf{0.0187} & \textbf{0.0131} & \multicolumn{1}{c|}{0.0709}          & 0.0829          & 0.0276          & 0.0156          & \textbf{0.0110} & \textbf{0.0443} \\
                    &                          & Uniform                 & 0.3596          & 0.0795          & \textbf{0.0322} & \textbf{0.0352} & \multicolumn{1}{c|}{0.4551}          & \textbf{0.1240} & 0.0389          & 0.0196          & \textbf{0.0137} & \multicolumn{1}{c|}{0.0698}          & \textbf{0.0769} & 0.0275          & 0.0153          & 0.0123          & \textbf{0.0440} \\
                    &                          & Epanechnikov            & 0.3680          & 0.0871          & \textbf{0.0348} & \textbf{0.0304} & \multicolumn{1}{c|}{0.4675}          & 0.1272          & 0.0379          & \textbf{0.0190} & \textbf{0.0131} & \multicolumn{1}{c|}{0.0669}          & 0.0810          & 0.0269          & 0.0157          & \textbf{0.0113} & \textbf{0.0379} \\ \midrule
\multirow{8}{*}{S2} & \multirow{4}{*}{Model A} & Baseline                & 0.7892          & 0.0801          & 0.0655          & 0.1017          & \multicolumn{1}{c|}{0.7819}          & 0.2290          & 0.0315          & 0.0244          & 0.0354          & \multicolumn{1}{c|}{0.2551}          & 0.1285          & 0.0213          & 0.0157          & 0.0231          & 0.1723          \\ \cmidrule(l){3-18} 
                    &                          & Gaussian                & \textbf{0.7481} & \textbf{0.0704} & \textbf{0.0577} & \textbf{0.0906} & \multicolumn{1}{c|}{\textbf{0.7275}} & \textbf{0.2076} & \textbf{0.0260} & \textbf{0.0223} & \textbf{0.0308} & \multicolumn{1}{c|}{\textbf{0.2394}} & 0.1302          & \textbf{0.0152} & \textbf{0.0134} & \textbf{0.0187} & \textbf{0.1354} \\
                    &                          & Uniform                 & 0.8190          & \textbf{0.0734} & \textbf{0.0647} & 0.1053          & \multicolumn{1}{c|}{0.8890}          & \textbf{0.2000} & \textbf{0.0240} & \textbf{0.0209} & \textbf{0.0353} & \multicolumn{1}{c|}{\textbf{0.2544}} & \textbf{0.1136} & \textbf{0.0168} & \textbf{0.0137} & \textbf{0.0188} & \textbf{0.1168} \\
                    &                          & Epanechnikov            & \textbf{0.6887} & 0.0827          & 0.0716          & 0.1185          & \multicolumn{1}{c|}{\textbf{0.6827}} & \textbf{0.1907} & \textbf{0.0265} & \textbf{0.0232} & \textbf{0.0302} & \multicolumn{1}{c|}{0.2563}          & \textbf{0.1227} & \textbf{0.0168} & \textbf{0.0137} & \textbf{0.0197} & \textbf{0.1233} \\ \cmidrule(l){3-18} 
                    & \multirow{4}{*}{Model B} & Baseline                & \fbox{0.5210}          & 0.0683          & 0.0651          & 0.0869          & \multicolumn{1}{c|}{0.7145}          & 0.1847          & 0.0261          & 0.0205          & 0.0335          & \multicolumn{1}{c|}{0.2108}          & 0.1254          & 0.0174          & 0.0142          & 0.0204          & 0.1302          \\ \cmidrule(l){3-18} 
                    &                          & Gaussian                & 0.7154          & 0.0700          & \textbf{0.0553} & \textbf{0.0827} & \multicolumn{1}{c|}{\textbf{0.5483}} & \textbf{0.1430} & \textbf{0.0237} & \textbf{0.0195} & \textbf{0.0284} & \multicolumn{1}{c|}{\textbf{0.1835}} & \textbf{0.1008} & \textbf{0.0152} & \textbf{0.0120} & \textbf{0.0187} & \textbf{0.1036} \\
                    &                          & Uniform                 & 0.6440          & \textbf{0.0605} & \textbf{0.0523} & \textbf{0.0808} & \multicolumn{1}{c|}{\textbf{0.5217}} & \textbf{0.1445} & \textbf{0.0215} & \textbf{0.0180} & \textbf{0.0318} & \multicolumn{1}{c|}{\textbf{0.1884}} & \textbf{0.1043} & \textbf{0.0149} & \textbf{0.0113} & \textbf{0.0182} & \textbf{0.1074} \\
                    &                          & Epanechnikov            & 0.6414          & 0.0879          & \textbf{0.0580} & \textbf{0.0868} & \multicolumn{1}{c|}{\textbf{0.5577}} & \textbf{0.1682} & \textbf{0.0235} & \textbf{0.0179} & \textbf{0.0268} & \multicolumn{1}{c|}{\textbf{0.1823}} & \textbf{0.1049} & \textbf{0.0143} & \textbf{0.0116} & \textbf{0.0203} & \textbf{0.1154} \\ \midrule
\multirow{8}{*}{S3} & \multirow{4}{*}{Model A} & Baseline                & 3.1729          & 0.8820          & \fbox{0.5587}          & 0.8456          & \multicolumn{1}{c|}{3.0478}          & 1.1872          & 0.3741          & 0.2298          & 0.3626          & \multicolumn{1}{c|}{1.1419}          & 0.7821          & 0.2505          & 0.1708          & 0.2539          & 0.7933          \\ \cmidrule(l){3-18} 
                    &                          & Gaussian                & \textbf{2.8543} & 0.9541          & 0.5828          & \textbf{0.7702} & \multicolumn{1}{c|}{\textbf{2.7395}} & \textbf{1.1222} & \textbf{0.3339} & \textbf{0.2143} & \textbf{0.3045} & \multicolumn{1}{c|}{\textbf{0.9996}} & \textbf{0.7556} & \textbf{0.2403} & \textbf{0.1515} & \textbf{0.2281} & \textbf{0.6931} \\
                    &                          & Uniform                 & 3.9049          & \textbf{0.8399} & 0.5665          & \textbf{0.7472} & \multicolumn{1}{c|}{\textbf{2.8420}} & \textbf{1.1147} & \textbf{0.3521} & \textbf{0.2104} & \textbf{0.3096} & \multicolumn{1}{c|}{\textbf{1.1170}} & 0.8147          & \textbf{0.2228} & \textbf{0.1468} & \textbf{0.2301} & \textbf{0.7264} \\
                    &                          & Epanechnikov            & \textbf{3.0467} & \textbf{0.8131} & 0.5638          & \textbf{0.7606} & \multicolumn{1}{c|}{\textbf{2.6823}} & 1.2444          & \textbf{0.3520} & \textbf{0.2145} & \textbf{0.3247} & \multicolumn{1}{c|}{1.2093}          & \textbf{0.7679} & \textbf{0.2242} & \textbf{0.1531} & \textbf{0.2232} & \textbf{0.7798} \\ \cmidrule(l){3-18} 
                    & \multirow{4}{*}{Model B} & Baseline                & 2.2566          & 0.7801          & 0.5042          & 0.7446          & \multicolumn{1}{c|}{2.2770}          & 0.9813          & 0.3330          & 0.2148          & 0.3146          & \multicolumn{1}{c|}{0.9632}          & 0.7292          & 0.2358          & 0.1554          & 0.2451          & 0.6422          \\ \cmidrule(l){3-18} 
                    &                          & Gaussian                & 2.5481          & \textbf{0.7569} & 0.5322          & 0.8249          & \multicolumn{1}{c|}{\textbf{2.0108}} & 0.9965          & \textbf{0.3196} & \textbf{0.2130} & \textbf{0.3027} & \multicolumn{1}{c|}{\textbf{0.8176}} & \textbf{0.6785} & \textbf{0.2169} & \textbf{0.1470} & \textbf{0.2303} & \textbf{0.6120} \\
                    &                          & Uniform                 & \textbf{1.9425} & 0.8181          & \textbf{0.4824} & 0.7644          & \multicolumn{1}{c|}{\textbf{1.9375}} & \textbf{0.9431} & 0.3394          & \textbf{0.2002} & \textbf{0.3013} & \multicolumn{1}{c|}{\textbf{0.8535}} & \textbf{0.6307} & \textbf{0.2177} & \textbf{0.1462} & \textbf{0.2086} & \textbf{0.6415} \\
                    &                          & Epanechnikov            & 2.4130          & 0.7807          & 0.5403          & \textbf{0.7333} & \multicolumn{1}{c|}{\textbf{1.9167}} & \textbf{0.8839} & \textbf{0.3262} & \textbf{0.2112} & 0.3211          & \multicolumn{1}{c|}{\textbf{0.9264}} & \textbf{0.7278} & \textbf{0.2058} & \textbf{0.1471} & \textbf{0.2277} & \textbf{0.5911} \\ \bottomrule
\end{tabular}
\end{adjustbox}
\end{table}

{We also implement the simulations for the joint estimation of multiple quantile levels under non-crossing constraints, see \citet{padilla2022quantile} for the estimation of baseline networks. For the ConquerNet, for multiple quantile levels are given in the set $\Gamma\subset(0,1)$, we estimate the quantile functions by solving
\begin{equation*}
    \begin{aligned}
        \{\hat{f}_{\tau}\}_{\tau \in \Gamma} = &\operatorname*{arg\,min}_{\{f_{\tau}\}_{\tau \in \Gamma}} \sum_{\tau\in\Gamma}\sum_{i=1}^n \ell_{\tau}(y_i - f_{\tau}(\mf x_i))\\
        & \text{subject to}\quad f_{\tau}(\mf x_i)\leq f_{\tau^\prime}(\mf x_i)\quad\forall\tau<\tau^\prime, \tau,\tau^\prime\in\Gamma, i=1,\ldots,n.
    \end{aligned}
\end{equation*}
To solve the problem, we let $\tau_0<\cdots<\tau_m$ be the elements of $\Gamma$ and solve
\begin{equation*}
\{\hat{g}_\tau\}_{\tau \in \Gamma} = \operatorname*{arg\,min}_{\{g_\tau\}_{\tau \in \Gamma}} \sum_{i=1}^n \ell_h(y_i - g_{\tau_0}(\mf x_i)) + \sum_{j=1}^m \sum_{i=1}^n \ell_h \left\{ y_i - g_{\tau_0}(\mf x_i) - \sum_{l=1}^j \log \left( 1 + e^{g_{\tau_l}(\mf x_i)} \right) \right\}
\end{equation*}
and set
\begin{equation*}
\hat{f}_{\tau_0}(\mf x) = \hat{g}_{\tau_0}(\mf x), \quad \text{and} \quad \hat{f}_{\tau_j}(\mf x) = \hat{g}_{\tau_0}(\mf x) + \sum_{l=1}^j \log \left( 1 + e^{\hat{g}_{\tau_l}(\mf x)} \right) \quad \text{for} \quad j = 1, \ldots, m,
\end{equation*}
where we recall that $\ell_h(\cdot)$ is the convolution-type smoothed quantile loss for the quantile $\tau$ in equation \eqref{eq:estimator besov}.
For the simulation data, we study the MSE performance, see Table \ref{tab:mse-multi-nores} below. In contrast to Table~\ref{tab:mse} in the paper, we can see that the joint estimation of multiple quantile levels performs much better for high ($\tau$=0.95) and low ($\tau$=0.05) quantiles, while the performance for $\tau=0.25/0.5/0.75$ declines a little bit in exchange.}

\begin{table}[htpb]
\centering
\caption{Mean squared error (MSE) performances for scenario 1-3, model A and B under different sample sizes, quantile levels, and smoothing kernels. Multiple quantile levels are trained jointly under the non-crossing constraint. The MSEs are averaged over 50 independent trials.}
\label{tab:mse-multi-nores}
\begin{adjustbox}{width=\textwidth,center}
\begin{tabular}{@{}cccccccccccccccccc@{}}
\toprule
\multicolumn{2}{c}{\multirow{2}{*}{}}          & \multirow{2}{*}{Method} & \multicolumn{5}{c}{$n$=1000}                                                                                 & \multicolumn{5}{c}{$n$=5000}                                                                                 & \multicolumn{5}{c}{$n$=10000}                                                           \\ \cmidrule(l){4-18} 
\multicolumn{2}{c}{}                           &                         & $\tau$=0.05     & $\tau$=0.25     & $\tau$=0.5      & $\tau$=0.75     & \multicolumn{1}{c|}{$\tau$=0.95}     & $\tau$=0.05     & $\tau$=0.25     & $\tau$=0.5      & $\tau$=0.75     & \multicolumn{1}{c|}{$\tau$=0.95}     & $\tau$=0.05     & $\tau$=0.25     & $\tau$=0.5      & $\tau$=0.75     & $\tau$=0.95     \\ \midrule
\multirow{8}{*}{S1} & \multirow{4}{*}{Model A} & Baseline                & 0.1312          & 0.0514          & 0.0278          & 0.0417          & \multicolumn{1}{c|}{\fbox{0.1221}}          & 0.0494          & 0.0147          & 0.0100          & 0.0146          & \multicolumn{1}{c|}{0.0397}          & \fbox{0.0305}          & 0.0088          & 0.0056          & 0.0078          & 0.0234          \\ \cmidrule(l){3-18} 
                    &                          & Gaussian                & \textbf{0.1258} & \textbf{0.0427} & \textbf{0.0224} & \textbf{0.0395} & \multicolumn{1}{c|}{0.1235}          & \textbf{0.0477} & \textbf{0.0111} & \textbf{0.0070} & \textbf{0.0120} & \multicolumn{1}{c|}{\textbf{0.0361}} & 0.0338          & \textbf{0.0076} & \textbf{0.0040} & \textbf{0.0060} & \textbf{0.0215} \\
                    &                          & Uniform                 & \textbf{0.1212} & \textbf{0.0424} & \textbf{0.0240} & \textbf{0.0444} & \multicolumn{1}{c|}{0.1331}          & \textbf{0.0471} & \textbf{0.0111} & \textbf{0.0067} & \textbf{0.0113} & \multicolumn{1}{c|}{\textbf{0.0355}} & 0.0340          & \textbf{0.0077} & \textbf{0.0039} & \textbf{0.0059} & \textbf{0.0222} \\
                    &                          & Epanechnikov            & \textbf{0.1204} & \textbf{0.0453} & \textbf{0.0235} & \textbf{0.0408} & \multicolumn{1}{c|}{0.1289}          & 0.0495          & \textbf{0.0123} & \textbf{0.0079} & \textbf{0.0128} & \multicolumn{1}{c|}{\textbf{0.0372}} & 0.0361          & \textbf{0.0080} & \textbf{0.0043} & \textbf{0.0063} & \textbf{0.0217} \\ \cmidrule(l){3-18} 
                    & \multirow{4}{*}{Model B} & Baseline                & \fbox{0.1911}          & \fbox{0.0623}          & \fbox{0.0352}          & \fbox{0.0591}          & \multicolumn{1}{c|}{\fbox{0.1821}}          & \fbox{0.0767}          & 0.0192          & 0.0119          & 0.0191          & \multicolumn{1}{c|}{0.0621}          & \fbox{0.0547}          & \fbox{0.0123}          & 0.0074          & 0.0106          & 0.0362          \\ \cmidrule(l){3-18} 
                    &                          & Gaussian                & 0.2105          & 0.0711          & 0.0378          & 0.0681          & \multicolumn{1}{c|}{0.2041}          & 0.0886          & \textbf{0.0178} & \textbf{0.0100} & \textbf{0.0169} & \multicolumn{1}{c|}{\textbf{0.0617}} & 0.0674          & 0.0126          & \textbf{0.0063} & \textbf{0.0096} & \textbf{0.0362} \\
                    &                          & Uniform                 & 0.2145          & 0.070           & 0.0378          & 0.0662          & \multicolumn{1}{c|}{0.2066}          & 0.0864          & \textbf{0.0171} & \textbf{0.0097} & \textbf{0.0159} & \multicolumn{1}{c|}{\textbf{0.0597}} & 0.0711          & 0.0134          & \textbf{0.0067} & \textbf{0.0096} & 0.0375          \\
                    &                          & Epanechnikov            & 0.2136          & 0.073           & 0.0412          & 0.0685          & \multicolumn{1}{c|}{0.2034}          & 0.0883          & \textbf{0.0183} & \textbf{0.0103} & \textbf{0.0168} & \multicolumn{1}{c|}{\textbf{0.0605}} & 0.0726          & 0.0126          & \textbf{0.0064} & \textbf{0.0099} & 0.0391          \\ \midrule
\multirow{8}{*}{S2} & \multirow{4}{*}{Model A} & Baseline                & 0.1830          & 0.0829          & 0.0712          & 0.0910          & \multicolumn{1}{c|}{0.2613}          & 0.0650          & 0.0272          & 0.0248          & 0.0321          & \multicolumn{1}{c|}{0.0888}          & 0.0469          & 0.0199          & 0.0182          & 0.0234          & 0.0641          \\ \cmidrule(l){3-18} 
                    &                          & Gaussian                & \textbf{0.1736} & \textbf{0.0718} & \textbf{0.0645} & 0.0948          & \multicolumn{1}{c|}{\textbf{0.2526}} & \textbf{0.0542} & \textbf{0.0222} & \textbf{0.0202} & \textbf{0.0262} & \multicolumn{1}{c|}{\textbf{0.0810}} & \textbf{0.0395} & \textbf{0.0140} & \textbf{0.0129} & \textbf{0.0181} & \textbf{0.0539} \\
                    &                          & Uniform                 & \textbf{0.1669} & \textbf{0.0641} & \textbf{0.0592} & \textbf{0.0827} & \multicolumn{1}{c|}{\textbf{0.2362}} & \textbf{0.0549} & \textbf{0.0233} & \textbf{0.0211} & \textbf{0.0283} & \multicolumn{1}{c|}{\textbf{0.0870}} & \textbf{0.0377} & \textbf{0.0132} & \textbf{0.0122} & \textbf{0.0165} & \textbf{0.0477} \\
                    &                          & Epanechnikov            & 0.1901          & \textbf{0.0786} & 0.0749          & 0.1066          & \multicolumn{1}{c|}{0.2634}          & \textbf{0.0535} & \textbf{0.0222} & \textbf{0.0192} & \textbf{0.0267} & \multicolumn{1}{c|}{\textbf{0.0803}} & \textbf{0.0396} & \textbf{0.0148} & \textbf{0.0130} & \textbf{0.0172} & \textbf{0.0524} \\ \cmidrule(l){3-18} 
                    & \multirow{4}{*}{Model B} & Baseline                & \fbox{0.1358}          & \fbox{0.0494}          & \fbox{0.0481}          & \fbox{0.0690}          & \multicolumn{1}{c|}{\fbox{0.2026}}          & \fbox{0.0645}          & 0.0222          & 0.0191          & 0.0276          & \multicolumn{1}{c|}{\fbox{0.0890}}          & 0.0499          & 0.0164          & 0.0143          & 0.0198          & 0.0560          \\ \cmidrule(l){3-18} 
                    &                          & Gaussian                & 0.1449          & 0.0572          & 0.0537          & 0.0780          & \multicolumn{1}{c|}{0.2060}          & 0.0737          & \textbf{0.0207} & \textbf{0.0171} & \textbf{0.0261} & \multicolumn{1}{c|}{0.0899}          & \textbf{0.0480} & \textbf{0.0126} & \textbf{0.0108} & \textbf{0.0163} & \textbf{0.0540} \\
                    &                          & Uniform                 & 0.1489          & 0.0559          & 0.0546          & 0.0835          & \multicolumn{1}{c|}{0.2252}          & 0.0728          & 0.0237          & \textbf{0.0190} & \textbf{0.0266} & \multicolumn{1}{c|}{0.0906}          & \textbf{0.0492} & \textbf{0.0132} & \textbf{0.0112} & \textbf{0.0165} & \textbf{0.0542} \\
                    &                          & Epanechnikov            & 0.1485          & 0.0567          & 0.0514          & 0.0800          & \multicolumn{1}{c|}{0.2220}          & 0.0777          & 0.0241          & 0.0208          & 0.0298          & \multicolumn{1}{c|}{0.0937}          & \textbf{0.0466} & \textbf{0.0135} & \textbf{0.0110} & \textbf{0.0159} & \textbf{0.0541} \\ \midrule
\multirow{8}{*}{S3} & \multirow{4}{*}{Model A} & Baseline                & 0.9823          & 0.7126          & 0.6113          & 0.7058          & \multicolumn{1}{c|}{1.1469}          & 0.3581          & 0.2787          & 0.2524          & 0.2740          & \multicolumn{1}{c|}{0.4018}          & 0.2699          & 0.1865          & 0.1778          & 0.2012          & 0.2859          \\ \cmidrule(l){3-18} 
                    &                          & Gaussian                & \textbf{0.8659} & \textbf{0.6814} & \textbf{0.5797} & \textbf{0.6803} & \multicolumn{1}{c|}{\textbf{1.0587}} & \textbf{0.3368} & \textbf{0.2476} & \textbf{0.2292} & \textbf{0.2494} & \multicolumn{1}{c|}{\textbf{0.3801}} & \textbf{0.2301} & \textbf{0.1597} & \textbf{0.1502} & \textbf{0.1654} & \textbf{0.2361} \\
                    &                          & Uniform                 & \textbf{0.9403} & 0.7268          & 0.6202          & \textbf{0.6916} & \multicolumn{1}{c|}{\textbf{1.0572}} & \textbf{0.3504} & \textbf{0.2589} & \textbf{0.2351} & \textbf{0.2550} & \multicolumn{1}{c|}{\textbf{0.3745}} & \textbf{0.2297} & \textbf{0.1672} & \textbf{0.1572} & \textbf{0.1720} & \textbf{0.2371} \\
                    &                          & Epanechnikov            & \textbf{0.9368} & 0.7153          & \textbf{0.5725} & \textbf{0.6167} & \multicolumn{1}{c|}{\textbf{1.0125}} & \textbf{0.3208} & \textbf{0.2507} & \textbf{0.2305} & \textbf{0.2518} & \multicolumn{1}{c|}{\textbf{0.3541}} & \textbf{0.2305} & \textbf{0.1668} & \textbf{0.1554} & \textbf{0.1670} & \textbf{0.2398} \\ \cmidrule(l){3-18} 
                    & \multirow{4}{*}{Model B} & Baseline                & \fbox{0.6933}          & \fbox{0.5887}         & \fbox{0.5255}          & \fbox{0.5704}          & \multicolumn{1}{c|}{\fbox{0.7379}}          & \fbox{0.2942}          & \fbox{0.2617}          & \fbox{0.2486}          & \fbox{0.2605}          & \multicolumn{1}{c|}{\fbox{0.3159}}          & 0.1990          & 0.1743          & 0.1704          & 0.1867          & 0.2247          \\ \cmidrule(l){3-18} 
                    &                          & Gaussian                & 0.7480          & 0.6832          & 0.5910          & 0.6510          & \multicolumn{1}{c|}{0.7794}          & 0.3240          & 0.2964          & 0.2767          & 0.2867          & \multicolumn{1}{c|}{0.3347}          & 0.2008          & 0.1824          & 0.1748          & \textbf{0.1826} & \textbf{0.2063} \\
                    &                          & Uniform                 & 0.7214          & 0.6326          & 0.5468          & 0.6127          & \multicolumn{1}{c|}{0.8332}          & 0.3078          & 0.2791          & 0.2623          & 0.2682          & \multicolumn{1}{c|}{0.3108}          & \textbf{0.1916} & \textbf{0.1700} & \textbf{0.1620} & \textbf{0.1698} & \textbf{0.1973} \\
                    &                          & Epanechnikov            & 0.8064          & 0.7124          & 0.6185          & 0.6936          & \multicolumn{1}{c|}{0.8509}          & 0.3292          & 0.2900          & 0.2717          & 0.2821          & \multicolumn{1}{c|}{0.3409}          & \textbf{0.1916} & \textbf{0.1712} & \textbf{0.1650} & \textbf{0.1725} & \textbf{0.2014} \\ \bottomrule
\end{tabular}
\end{adjustbox}
\end{table}


\begin{table}[htpb]
\centering
\caption{Mean squared error (MSE) performances for scenario 2, model A and B with 5-fold cross-validation under different sample sizes, quantile levels, and smoothing kernels. The MSEs are averaged over 50 independent trials.}
\label{tab:mse-single-nores-CV-S2}
\begin{adjustbox}{width=\textwidth,center}
\begin{tabular}{@{}cccccccccccc@{}}
\toprule
\multirow{2}{*}{$n$}   & \multirow{2}{*}{Method} & \multicolumn{5}{c}{Scenario 2, Model A}                                                                      & \multicolumn{5}{c}{Scenario 2, Model B}                                                 \\ \cmidrule(l){3-12} 
                       &                         & $\tau$=0.05     & $\tau$=0.25     & $\tau$=0.5      & $\tau$=0.75     & \multicolumn{1}{c|}{$\tau$=0.95}     & $\tau$=0.05     & $\tau$=0.25     & $\tau$=0.5      & $\tau$=0.75     & $\tau$=0.95     \\ \midrule
\multirow{4}{*}{1000}  & Baseline                & 0.9195          & \fbox{0.0714}          & 0.0586          & 0.0894          & \multicolumn{1}{c|}{0.8167}          & 0.5480          & \fbox{0.0674}          & \fbox{0.0462}          & 0.0838          & \fbox{0.4792}          \\ \cmidrule(l){2-12} 
                       & Gaussian                & \textbf{0.7859} & 0.0763          & \textbf{0.0498} & \textbf{0.0890} & \multicolumn{1}{c|}{0.8510}          & 0.5849          & 0.0751          & 0.0555          & \textbf{0.0814} & 0.6314          \\
                       & Uniform                 & 0.9437          & 0.0827          & \textbf{0.0569} & \textbf{0.0744} & \multicolumn{1}{c|}{\textbf{0.7308}} & \textbf{0.5434} & 0.0740          & 0.0499          & \textbf{0.0802} & 1.0782          \\
                       & Epanechnikov            & 1.0141          & 0.0759          & \textbf{0.0462} & 0.0919          & \multicolumn{1}{c|}{\textbf{0.6248}} & 0.5938          & 0.0688          & 0.0561          & \textbf{0.0787} & 0.6151          \\ \midrule
\multirow{4}{*}{5000}  & Baseline                & 0.3446          & 0.0258          & 0.0241          & 0.0335          & \multicolumn{1}{c|}{0.3439}          & \fbox{0.1629}          & 0.0265          & 0.0188          & 0.0272          & 0.2205          \\ \cmidrule(l){2-12} 
                       & Gaussian                & \textbf{0.2358} & 0.0275 & \textbf{0.0180} & \textbf{0.0302} & \multicolumn{1}{c|}{\textbf{0.3064}} & 0.1724          & \textbf{0.0199} & \textbf{0.0161} & \textbf{0.0261} & 0.2244          \\
                       & Uniform                 & \textbf{0.2477} & 0.0310          & \textbf{0.0184} & \textbf{0.0268} & \multicolumn{1}{c|}{\textbf{0.2732}} & 0.1804          & \textbf{0.0256} & \textbf{0.0182} & \textbf{0.0258} & \textbf{0.2045} \\
                       & Epanechnikov            & \textbf{0.2556} & \textbf{0.0237} & \textbf{0.0169} & \textbf{0.0273} & \multicolumn{1}{c|}{\textbf{0.2657}} & 0.2104          & \textbf{0.0227} & \textbf{0.0161} & 0.0295          & 0.2357          \\ \midrule
\multirow{4}{*}{10000} & Baseline                & 0.1511          & 0.0193          & 0.0164          & 0.0229          & \multicolumn{1}{c|}{0.1853}          & 0.1218          & 0.0152          & 0.0128          & 0.0223          & 0.1522          \\ \cmidrule(l){2-12} 
                       & Gaussian                & 0.1577          & \textbf{0.0162} & \textbf{0.0139} & \textbf{0.0184} & \multicolumn{1}{c|}{\textbf{0.1401}} & \textbf{0.1192} & \textbf{0.0138} & \textbf{0.0113} & \textbf{0.0189} & \textbf{0.1375} \\
                       & Uniform                 & \textbf{0.1489} & \textbf{0.0153} & \textbf{0.0120} & \textbf{0.0210} & \multicolumn{1}{c|}{\textbf{0.1508}} & 0.1255          & 0.0164          & \textbf{0.0112} & \textbf{0.0171} & 0.1592          \\
                       & Epanechnikov            & \textbf{0.1431} & \textbf{0.0165} & \textbf{0.0121} & \textbf{0.0187} & \textbf{0.1852}                      & \textbf{0.1091} & 0.0167          & \textbf{0.0113} & \textbf{0.0166} & \textbf{0.1122} \\ \bottomrule
\end{tabular}
\end{adjustbox}
\end{table}

{In addition, we implemented the MAE loss and the pinball loss in the experiment part. The MAE results for multiple quantile levels joint estimation are shown in Table \ref{tab:mae-multi-nores}. The MAE results for 5-fold cross-validation of Scenario 2 are shown in Table \ref{tab:mae-single-nores-CV-S2}. 
The results for MAE show that our ConquerNet outperform the baseline networks in most cases, which remains consistent compared to the MSE results in Tables \ref{tab:mse-multi-nores} and \ref{tab:mse-single-nores-CV-S2}, respectively. The pinball loss results for real data analysis are presented in Table \ref{tab:bmi}. These table results indicate the stability of our ConquerNet with respect to different loss metrics.}
\begin{table}[htpb]
\centering
\caption{Mean absolute error (MAE) performances for scenario 1-3, model A and B under different sample sizes, quantile levels, and smoothing kernels. Multiple quantile levels are trained jointly under the non-crossing constraint. The MAEs are averaged over 50 independent trials.}
\label{tab:mae-multi-nores}
\begin{adjustbox}{width=\textwidth,center}
\begin{tabular}{@{}cccccccccccccccccc@{}}
\toprule
\multicolumn{2}{c}{\multirow{2}{*}{}}          & \multirow{2}{*}{Method} & \multicolumn{5}{c}{$n$=1000}                                                                                 & \multicolumn{5}{c}{$n$=5000}                                                                                 & \multicolumn{5}{c}{$n$=10000}                                                           \\ \cmidrule(l){4-18} 
\multicolumn{2}{c}{}                           &                         & $\tau$=0.05     & $\tau$=0.25     & $\tau$=0.5      & $\tau$=0.75     & \multicolumn{1}{c|}{$\tau$=0.95}     & $\tau$=0.05     & $\tau$=0.25     & $\tau$=0.5      & $\tau$=0.75     & \multicolumn{1}{c|}{$\tau$=0.95}     & $\tau$=0.05     & $\tau$=0.25     & $\tau$=0.5      & $\tau$=0.75     & $\tau$=0.95     \\ \midrule
\multirow{8}{*}{S1} & \multirow{4}{*}{Model A} & Baseline                & 0.2805          & 0.1798          & 0.1310          & 0.1622          & \multicolumn{1}{c|}{0.2792}          & 0.1718          & 0.0960          & 0.0781          & 0.0948          & \multicolumn{1}{c|}{0.1583}          & \fbox{0.1314}          & 0.0723          & 0.0573          & 0.0694          & 0.1202          \\ \cmidrule(l){3-18} 
                    &                          & Gaussian                & \textbf{0.2775} & \textbf{0.1645} & \textbf{0.1204} & \textbf{0.1587} & \multicolumn{1}{c|}{\textbf{0.2780}} & \textbf{0.1678} & \textbf{0.0826} & \textbf{0.0647} & \textbf{0.0859} & \multicolumn{1}{c|}{\textbf{0.1516}} & 0.1396          & \textbf{0.0675} & \textbf{0.0483} & \textbf{0.0607} & \textbf{0.1144} \\
                    &                          & Uniform                 & \textbf{0.2713} & \textbf{0.1654} & \textbf{0.1246} & 0.1671          & \multicolumn{1}{c|}{0.2926}          & \textbf{0.1663} & \textbf{0.0829} & \textbf{0.0633} & \textbf{0.0828} & \multicolumn{1}{c|}{\textbf{0.1487}} & 0.1400          & \textbf{0.0679} & \textbf{0.0487} & \textbf{0.0605} & \textbf{0.1165} \\
                    &                          & Epanechnikov            & \textbf{0.2707} & \textbf{0.1715} & \textbf{0.1230} & \textbf{0.1607} & \multicolumn{1}{c|}{0.2870}          & \textbf{0.1703} & \textbf{0.0849} & \textbf{0.0673} & \textbf{0.0883} & \multicolumn{1}{c|}{\textbf{0.1510}} & 0.1456          & \textbf{0.0695} & \textbf{0.0507} & \textbf{0.0622} & \textbf{0.1157} \\ \cmidrule(l){3-18} 
                    & \multirow{4}{*}{Model B} & Baseline                & \fbox{0.3414}          & \fbox{0.1974}          & \fbox{0.1484}          & \fbox{0.1948}          & \multicolumn{1}{c|}{\fbox{0.3444}}          & \fbox{0.2151}          & 0.1084          & 0.0856          & 0.1094          & \multicolumn{1}{c|}{0.1983}          & \fbox{0.1807}          & 0.0876          & 0.0675          & 0.0816          & 0.1524          \\ \cmidrule(l){3-18} 
                    &                          & Gaussian                & 0.3552          & 0.2121          & 0.1551          & 0.2063          & \multicolumn{1}{c|}{0.3620}          & 0.2304          & \textbf{0.1039} & \textbf{0.0786} & \textbf{0.1032} & \multicolumn{1}{c|}{0.1993}          & 0.1971          & 0.0880          & \textbf{0.0617} & \textbf{0.0775} & \textbf{0.1516} \\
                    &                          & Uniform                 & 0.3616          & 0.2121          & 0.1546          & 0.2017          & \multicolumn{1}{c|}{0.3659}          & 0.2281          & \textbf{0.1016} & \textbf{0.0772} & \textbf{0.1003} & \multicolumn{1}{c|}{\textbf{0.1976}} & 0.2027          & 0.0910          & \textbf{0.0639} & \textbf{0.0780} & 0.1550          \\
                    &                          & Epanechnikov            & 0.3611          & 0.2133          & 0.1610          & 0.2073          & \multicolumn{1}{c|}{0.3654}          & 0.2312          & \textbf{0.1056} & \textbf{0.0797} & \textbf{0.1027} & \multicolumn{1}{c|}{\textbf{0.1975}} & 0.2062          & \textbf{0.0876} & \textbf{0.0621} & \textbf{0.0791} & 0.1589          \\ \midrule
\multirow{8}{*}{S2} & \multirow{4}{*}{Model A} & Baseline                & 0.3352          & 0.2137          & 0.1999          & 0.2265          & \multicolumn{1}{c|}{0.3765}          & 0.1955          & 0.1250          & 0.1182          & 0.1344          & \multicolumn{1}{c|}{0.2236}          & 0.1674          & 0.1060          & 0.1013          & 0.1159          & 0.1899          \\ \cmidrule(l){3-18} 
                    &                          & Gaussian                & \textbf{0.3271} & \textbf{0.1968} & \textbf{0.1882} & \textbf{0.2235} & \multicolumn{1}{c|}{\textbf{0.3686}} & \textbf{0.1794} & \textbf{0.1141} & \textbf{0.1086} & \textbf{0.1240} & \multicolumn{1}{c|}{\textbf{0.2125}} & \textbf{0.1534} & \textbf{0.0885} & \textbf{0.0843} & \textbf{0.0986} & \textbf{0.1724} \\
                    &                          & Uniform                 & \textbf{0.3256} & \textbf{0.1913} & \textbf{0.1851} & \textbf{0.2173} & \multicolumn{1}{c|}{\textbf{0.3611}} & \textbf{0.1815} & \textbf{0.1143} & \textbf{0.1088} & \textbf{0.1256} & \multicolumn{1}{c|}{\textbf{0.2153}} & \textbf{0.1494} & \textbf{0.0877} & \textbf{0.0832} & \textbf{0.0971} & \textbf{0.1661} \\
                    &                          & Epanechnikov            & 0.3432          & \textbf{0.2054} & 0.2012          & 0.2380          & \multicolumn{1}{c|}{0.3781}          & \textbf{0.1798} & \textbf{0.1117} & \textbf{0.1057} & \textbf{0.1232} & \multicolumn{1}{c|}{\textbf{0.2117}} & \textbf{0.1529} & \textbf{0.0900} & \textbf{0.0851} & \textbf{0.0987} & \textbf{0.1732} \\ \cmidrule(l){3-18} 
                    & \multirow{4}{*}{Model B} & Baseline                & \fbox{0.2947}          & \fbox{0.1700}          & \fbox{0.1680}          & \fbox{0.2033}          & \multicolumn{1}{c|}{0.3517}          & \fbox{0.1998}          & 0.1162          & 0.1073          & 0.1289          & \multicolumn{1}{c|}{0.2301}          & 0.1733          & 0.0971          & 0.0919          & 0.1086          & 0.1814          \\ \cmidrule(l){3-18} 
                    &                          & Gaussian                & 0.3027          & 0.1820          & 0.1770          & 0.2153          & \multicolumn{1}{c|}{\textbf{0.3513}} & 0.2124          & \textbf{0.1109} & \textbf{0.0990} & \textbf{0.1223} & \multicolumn{1}{c|}{\textbf{0.2294}} & \textbf{0.1693} & \textbf{0.0869} & \textbf{0.0794} & \textbf{0.0977} & \textbf{0.1781} \\
                    &                          & Uniform                 & 0.3058          & 0.1780          & 0.1763          & 0.2184          & \multicolumn{1}{c|}{0.3633}          & 0.2112          & \textbf{0.1150} & \textbf{0.1022} & \textbf{0.1233} & \multicolumn{1}{c|}{0.2320}          & \textbf{0.1726} & \textbf{0.0890} & \textbf{0.0807} & \textbf{0.0986} & \textbf{0.1801} \\
                    &                          & Epanechnikov            & 0.3035          & 0.1741          & 0.1713          & 0.2136          & \multicolumn{1}{c|}{0.3596}          & 0.2188          & 0.1194          & 0.1076          & 0.1304          & \multicolumn{1}{c|}{0.2368}          & \textbf{0.1672} & \textbf{0.0890} & \textbf{0.0801} & \textbf{0.0969} & \textbf{0.1791} \\ \midrule
\multirow{8}{*}{S3} & \multirow{4}{*}{Model A} & Baseline                & 0.7794          & 0.6503          & 0.6017          & 0.6386          & \multicolumn{1}{c|}{0.8088}          & 0.4631          & 0.4050          & 0.3822          & 0.3952          & \multicolumn{1}{c|}{0.4868}          & 0.4030          & 0.3326          & 0.3220          & 0.3402          & 0.4076          \\ \cmidrule(l){3-18} 
                    &                          & Gaussian                & \textbf{0.7381} & \textbf{0.6396} & \textbf{0.5890} & \textbf{0.6228} & \multicolumn{1}{c|}{\textbf{0.7545}} & \textbf{0.4453} & \textbf{0.3817} & \textbf{0.3651} & \textbf{0.3784} & \multicolumn{1}{c|}{\textbf{0.4728}} & \textbf{0.3708} & \textbf{0.3071} & \textbf{0.2947} & \textbf{0.3067} & \textbf{0.3738} \\
                    &                          & Uniform                 & \textbf{0.7678} & 0.6559          & 0.6132          & 0.6444          & \multicolumn{1}{c|}{\textbf{0.7743}} & \textbf{0.4575} & \textbf{0.3907} & \textbf{0.3680} & \textbf{0.3797} & \multicolumn{1}{c|}{\textbf{0.4688}} & \textbf{0.3712} & \textbf{0.3133} & \textbf{0.3017} & \textbf{0.3134} & \textbf{0.3742} \\
                    &                          & Epanechnikov            & \textbf{0.7650} & \textbf{0.6417} & \textbf{0.5790} & \textbf{0.6037} & \multicolumn{1}{c|}{\textbf{0.7491}} & \textbf{0.4391} & \textbf{0.3849} & \textbf{0.3672} & \textbf{0.3805} & \multicolumn{1}{c|}{\textbf{0.4604}} & \textbf{0.3700} & \textbf{0.3130} & \textbf{0.2988} & \textbf{0.3085} & \textbf{0.3771} \\ \cmidrule(l){3-18} 
                    & \multirow{4}{*}{Model B} & Baseline                & \fbox{0.6603}          & \fbox{0.5983}          & \fbox{0.5712}          & \fbox{0.5957}          & \multicolumn{1}{c|}{\fbox{0.6654}}          & \fbox{0.4162}          & \fbox{0.3933}          & \fbox{0.3816}          & \fbox{0.3874}          & \multicolumn{1}{c|}{0.4271}          & \fbox{0.3380}          & 0.3167          & 0.3113          & 0.3227          & 0.3520          \\ \cmidrule(l){3-18} 
                    &                          & Gaussian                & 0.6923          & 0.6354          & 0.5995          & 0.6336          & \multicolumn{1}{c|}{0.6862}          & 0.4385          & 0.4147          & 0.3997          & 0.4032          & \multicolumn{1}{c|}{0.4360}          & 0.3422          & 0.3229          & 0.3138          & \textbf{0.3173} & \textbf{0.3404} \\
                    &                          & Uniform                 & 0.6805          & 0.6133          & 0.5824          & 0.6204          & \multicolumn{1}{c|}{0.6978}          & 0.4292          & 0.4047          & 0.3915          & 0.3917          & \multicolumn{1}{c|}{\textbf{0.4207}} & 0.3353          & \textbf{0.3142} & \textbf{0.3030} & \textbf{0.3081} & \textbf{0.3357} \\
                    &                          & Epanechnikov            & 0.7175          & 0.6433          & 0.6115          & 0.6542          & \multicolumn{1}{c|}{0.7074}          & 0.4433          & 0.4119          & 0.3987          & 0.4003          & \multicolumn{1}{c|}{0.4336}          & 0.3335          & \textbf{0.3143} & \textbf{0.3040} & \textbf{0.3074} & \textbf{0.3351} \\ \bottomrule
\end{tabular}
\end{adjustbox}
\end{table}

\begin{table}[htpb]
\centering
\caption{Mean absolute error (MAE) performances for scenario 2, model A and B with 5-fold cross-validation under different sample sizes, quantile levels, and smoothing kernels. The MAEs are averaged over 50 independent trials.}
\label{tab:mae-single-nores-CV-S2}
\begin{adjustbox}{width=\textwidth,center}
\begin{tabular}{@{}cccccccccccc@{}}
\toprule
\multirow{2}{*}{$n$}   & \multirow{2}{*}{Method} & \multicolumn{5}{c}{Scenario 2, Model A}                                                                      & \multicolumn{5}{c}{Scenario 2, Model B}                                                 \\ \cmidrule(l){3-12} 
                       &                         & $\tau$=0.05     & $\tau$=0.25     & $\tau$=0.5      & $\tau$=0.75     & \multicolumn{1}{c|}{$\tau$=0.95}     & $\tau$=0.05     & $\tau$=0.25     & $\tau$=0.5      & $\tau$=0.75     & $\tau$=0.95     \\ \midrule
\multirow{4}{*}{1000}  & Baseline                & 0.6325          & 0.2014          & 0.1820          & 0.2285          & \multicolumn{1}{c|}{0.6348}          & 0.5071          & 0.1994          & \fbox{0.1684}          & 0.2264          & \fbox{0.5078}          \\ \cmidrule(l){2-12} 
                       & Gaussian                & \textbf{0.5974} & 0.2043          & \textbf{0.1710} & \textbf{0.2199} & \multicolumn{1}{c|}{\textbf{0.6320}} & \textbf{0.4969} & 0.2017          & 0.1770          & \textbf{0.2173} & 0.5303          \\
                       & Uniform                 & 0.6401          & 0.2062          & \textbf{0.1794} & \textbf{0.2024} & \multicolumn{1}{c|}{\textbf{0.5837}} & \textbf{0.5030} & \textbf{0.1993} & 0.1694          & \textbf{0.2105} & 0.5735          \\
                       & Epanechnikov            & 0.6348          & \textbf{0.1928} & \textbf{0.1663} & \textbf{0.2211} & \multicolumn{1}{c|}{\textbf{0.5642}} & 0.5129          & \textbf{0.1922} & 0.1778          & \textbf{0.2142} & 0.5409          \\ \midrule
\multirow{4}{*}{5000}  & Baseline                & 0.3797          & 0.1214          & 0.1158          & 0.1371          & \multicolumn{1}{c|}{0.4034}          & 0.2954          & 0.1237          & 0.1053          & 0.1275          & 0.3513          \\ \cmidrule(l){2-12} 
                       & Gaussian                & \textbf{0.3376} & 0.1220          & \textbf{0.1007} & \textbf{0.1302} & \multicolumn{1}{c|}{\textbf{0.4002}} & \textbf{0.2905} & \textbf{0.1085} & \textbf{0.0973} & \textbf{0.1234} & \textbf{0.3410} \\
                       & Uniform                 & \textbf{0.3501} & 0.1276          & \textbf{0.1019} & \textbf{0.1249} & \multicolumn{1}{c|}{\textbf{0.3751}} & 0.3059          & \textbf{0.1220} & \textbf{0.1020} & \textbf{0.1201} & \textbf{0.3337} \\
                       & Epanechnikov            & \textbf{0.3513} & \textbf{0.1154} & \textbf{0.0982} & \textbf{0.1239} & \multicolumn{1}{c|}{\textbf{0.3680}} & 0.3228          & \textbf{0.1151} & \textbf{0.0969} & 0.1297          & \textbf{0.3435} \\ \midrule
\multirow{4}{*}{10000} & Baseline                & 0.2858          & 0.1034          & 0.0951          & 0.1156          & \multicolumn{1}{c|}{0.3194}          & 0.2515          & 0.0949          & 0.0866          & 0.1124          & 0.2863          \\ \cmidrule(l){2-12} 
                       & Gaussian                & \textbf{0.2768} & \textbf{0.0937} & \textbf{0.0858} & \textbf{0.1014} & \multicolumn{1}{c|}{\textbf{0.2762}} & \textbf{0.2373} & \textbf{0.0882} & \textbf{0.0800} & \textbf{0.1022} & \textbf{0.2774} \\
                       & Uniform                 & \textbf{0.2738} & \textbf{0.0937} & \textbf{0.0818} & \textbf{0.1064} & \multicolumn{1}{c|}{\textbf{0.2852}} & 0.2579          & 0.0951          & \textbf{0.0802} & \textbf{0.0988} & 0.2916          \\
                       & Epanechnikov            & \textbf{0.2744} & \textbf{0.0959} & \textbf{0.0818} & \textbf{0.1020} & \textbf{0.3157}                      & \textbf{0.2383} & \textbf{0.0943} & \textbf{0.0794} & \textbf{0.0974} & \textbf{0.2553} \\ \bottomrule
\end{tabular}
\end{adjustbox}
\end{table}

In Section~\ref{sec:empirical study}, we studied the BMI data and presented the advantage of the ConquerNet for several single quantile levels. We also implement the pinball loss performance under multiple non-crossing quantile levels, see Table~\ref{tab:bmi}. Our ConquerNet maintain better performance under multiple quantile levels.

\begin{table}[htpb]
\centering
\caption{Pinball loss performance for BMI (body mass index) prediction. ``Single Quantile'' represents that the models are trained with every single quantile level. ``Multiple Quantiles'' represents that the models are trained with multiple quantile levels jointly under non-crossing constraints.}
\label{tab:bmi}
\begin{adjustbox}{width=\textwidth,center}
\begin{tabular}{@{}ccllllllllll@{}}
\toprule
\multirow{2}{*}{Gender} & \multirow{2}{*}{Method} & \multicolumn{5}{c}{Single Quantile}                                                                                                                                         & \multicolumn{5}{c}{Multiple Quantiles}                                                                                                                                 \\ \cmidrule(l){3-12} 
                        &                         & \multicolumn{1}{c}{$\tau$=0.05} & \multicolumn{1}{c}{$\tau$=0.25} & \multicolumn{1}{c}{$\tau$=0.5} & \multicolumn{1}{c}{$\tau$=0.75} & \multicolumn{1}{c|}{$\tau$=0.95}     & \multicolumn{1}{c}{$\tau$=0.05} & \multicolumn{1}{c}{$\tau$=0.25} & \multicolumn{1}{c}{$\tau$=0.5} & \multicolumn{1}{c}{$\tau$=0.75} & \multicolumn{1}{c}{$\tau$=0.95} \\ \midrule
\multirow{4}{*}{Male}   & Baseline                & 0.5231                          & 2.0638                          & 2.7387                         & 2.0534                          & \multicolumn{1}{l|}{0.5190}          & 0.5218                          & 2.0629                          & 2.7415                         & 2.0523                          & 0.5198                          \\ \cmidrule(l){2-12} 
                        & Gaussian                & \textbf{0.5221}                 & \textbf{2.0609}                 & \textbf{2.7384}                & \textbf{2.0513}                 & \multicolumn{1}{l|}{\textbf{0.5189}} & \textbf{0.5214}                 & \textbf{2.0625}                 & \textbf{2.7407}                & \textbf{2.0520}                 & \textbf{0.5192}                 \\
                        & Uniform                 & \textbf{0.5217}                 & \textbf{2.0618}                 & 2.7403                         & \textbf{2.0521}                 & \multicolumn{1}{l|}{0.5192}          & \textbf{0.5218}                 & \textbf{2.0619}                 & \textbf{2.7395}                & \textbf{2.0517}                 & 0.5201                          \\
                        & Epanechnikov            & \textbf{0.5218}                 & \textbf{2.0636}                 & 2.7389                         & \textbf{2.0518}                 & \multicolumn{1}{l|}{0.5205}          & \textbf{0.5215}                 & \textbf{2.0621}                 & \textbf{2.7399}                & \textbf{2.0521}                 & \textbf{0.5197}                 \\ \cmidrule(l){2-12} 
\multirow{4}{*}{Female} & Baseline                & 0.5218                          & 2.0596                          & 2.7366                         & 2.0555                          & \multicolumn{1}{l|}{0.5249}          & 0.5204                          & 2.0449                          & \fbox{2.7291}                         & 2.0531                          & 0.5251                          \\ \cmidrule(l){2-12} 
                        & Gaussian                & \textbf{0.5202}                 & \textbf{2.0446}                 & \textbf{2.7323}                & \textbf{2.0531}                 & \multicolumn{1}{l|}{0.5259}          & \textbf{0.5203}                 & 2.0450                          & 2.7305                         & \textbf{2.0529}                 & \textbf{0.5250}                 \\
                        & Uniform                 & \textbf{0.5211}                 & \textbf{2.0497}                 & \textbf{2.7311}                & 2.0642                          & \multicolumn{1}{l|}{0.5251}          & \textbf{0.5202}                 & \textbf{2.0449}                 & 2.7295                         & \textbf{2.0526}                 & \textbf{0.5248}                 \\
                        & Epanechnikov            & \textbf{0.5205}                 & \textbf{2.0454}                 & \textbf{2.7276}                & \textbf{2.0523}                 & \multicolumn{1}{l|}{\textbf{0.5248}} & 0.5212                          & 2.0460                          & 2.7325                         & 2.0579                          & 0.5290                          \\ \bottomrule
\end{tabular}
\end{adjustbox}
\end{table}

\subsection{Plots of MSEs by sample sizes}
{We study the plot of MSEs as a function of sample sizes to directly corroborate Theorem \ref{thm:general upper bound}. We plotted the curves of log MSEs by log sample sizes. For scenario 2 and model A, we trained the baseline model and the ConquerNet with the Gaussian smoothing kernel. We took sample sizes of $\{1000,3000,5000,7000,10000\}$ and evaluated the mean MSEs of test sets over 50 trials. The plots are shown in Figure \ref{fig:logMSE-logsamplesize-S2-A}. The curves have linear shapes, and in equation \eqref{eq:general bound l2} of Theorem \ref{thm:general upper bound}, the log MSE also has a nearly linear upper bound with respect to log sample size, which corroborates the theoretical results in Theorem \ref{thm:general upper bound}. Besides, the slopes of both baseline and ConquerNet are larger than $-1$, which is the asymptotic slope of the minimax rate $n^{-2s/(2s+d)}$. We also fit linear models for the curves in the figure and find that the slope of the ConquerNet is smaller than that of the baseline models, indicating that our upper bound is better than the baseline from the simulation.}


\begin{figure}[htpb]
    \centering
    \includegraphics[width=0.8\linewidth]{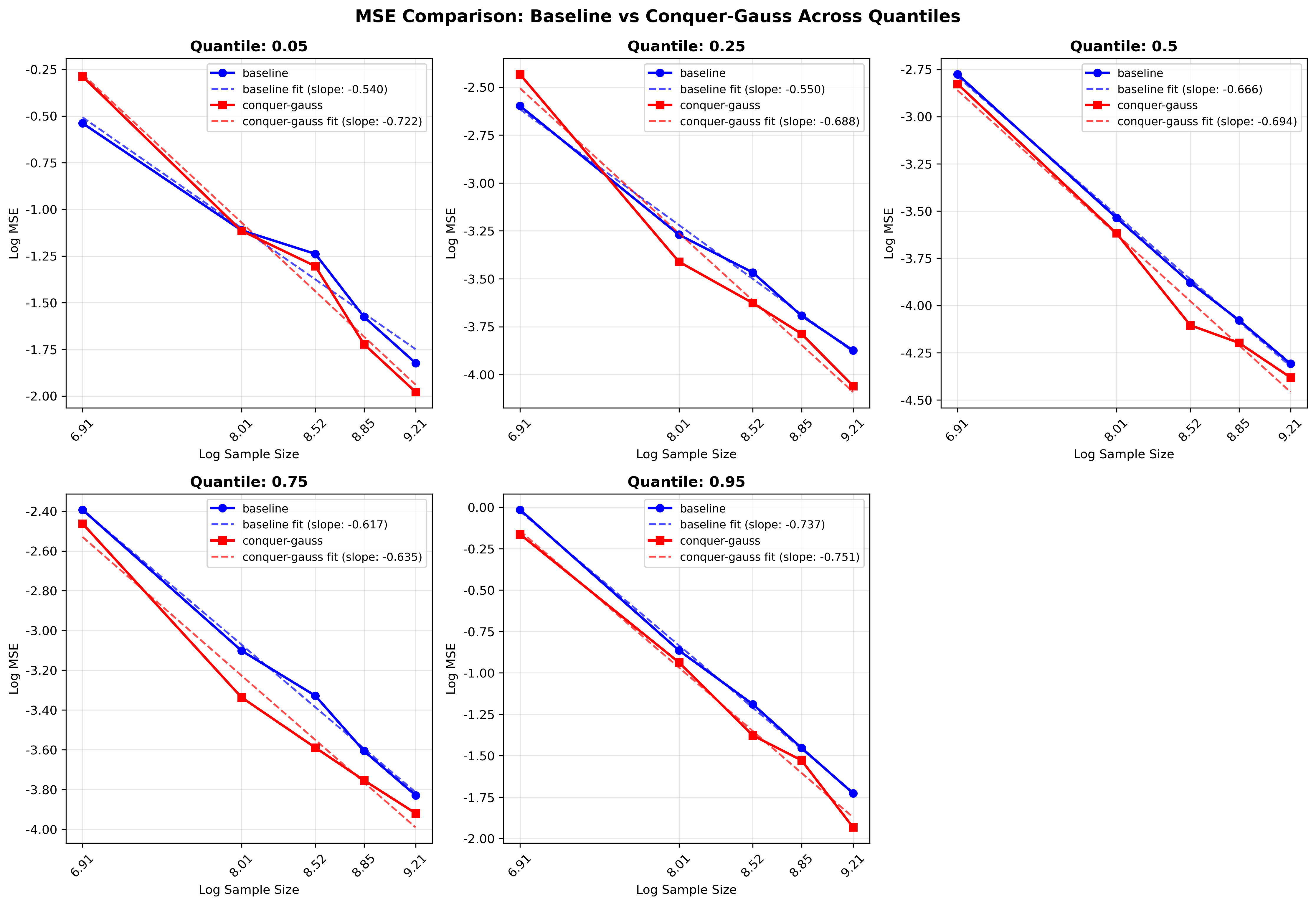}
    \caption{Plots of log MSEs by log sample sizes for scenario 2, model A. The red lines represent the ConquerNet smoothed by the Gaussian kernel with $h=0.1$. The blue lines represent the baseline network. The solid lines with points represent the line charts of simulation results. The dashed lines represent the fitted linear models. The MSEs are averaged over 50 independent trials.}
    \label{fig:logMSE-logsamplesize-S2-A}
\end{figure}

    

\subsection{Plots of training time}

\begin{figure}[htpb]
    \centering
    \includegraphics[width=1\linewidth]{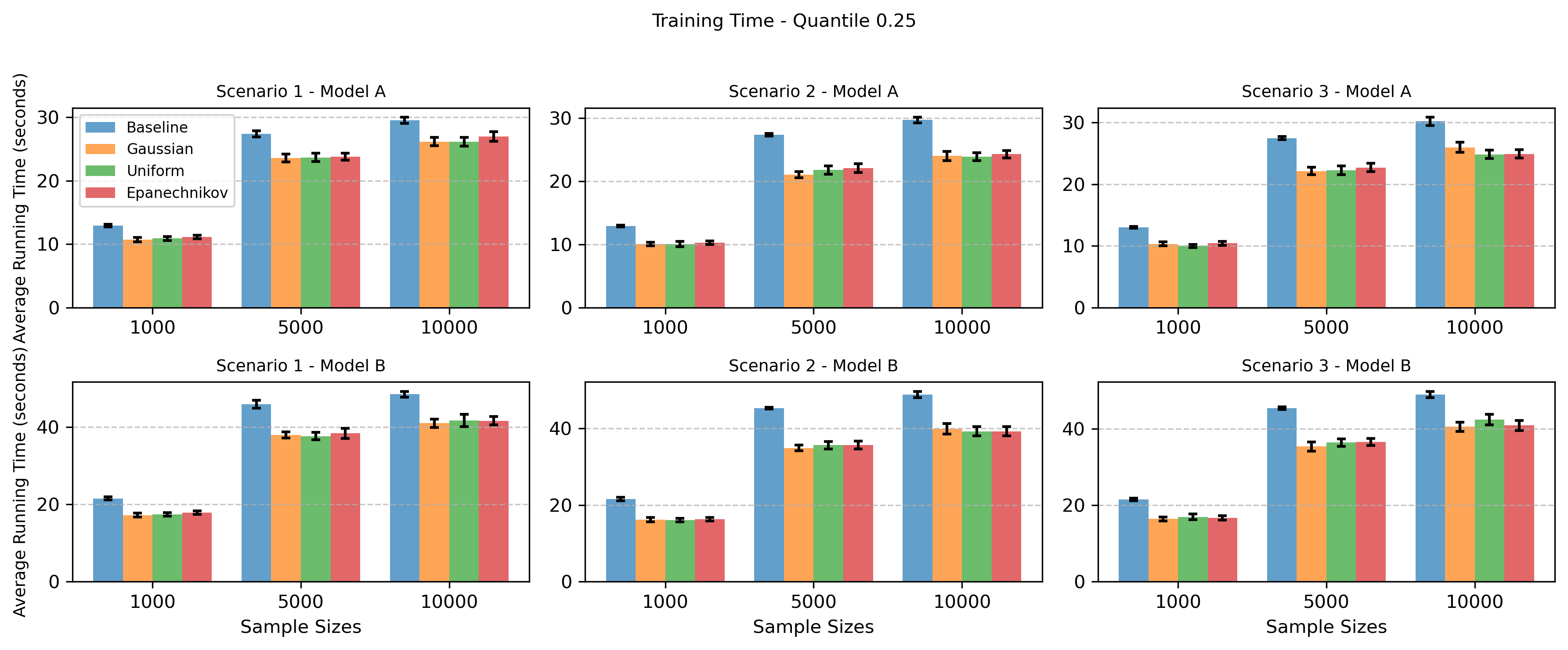}
    \caption{Bar chart with error bars with average training time over 50 trials under quantile level $\tau=0.25$. The error bars represent 95\% confidence intervals of training time for each setting.}
    \label{fig:time_tau=0.25}
\end{figure}
\begin{figure}[htpb]
    \centering
    \includegraphics[width=1\linewidth]{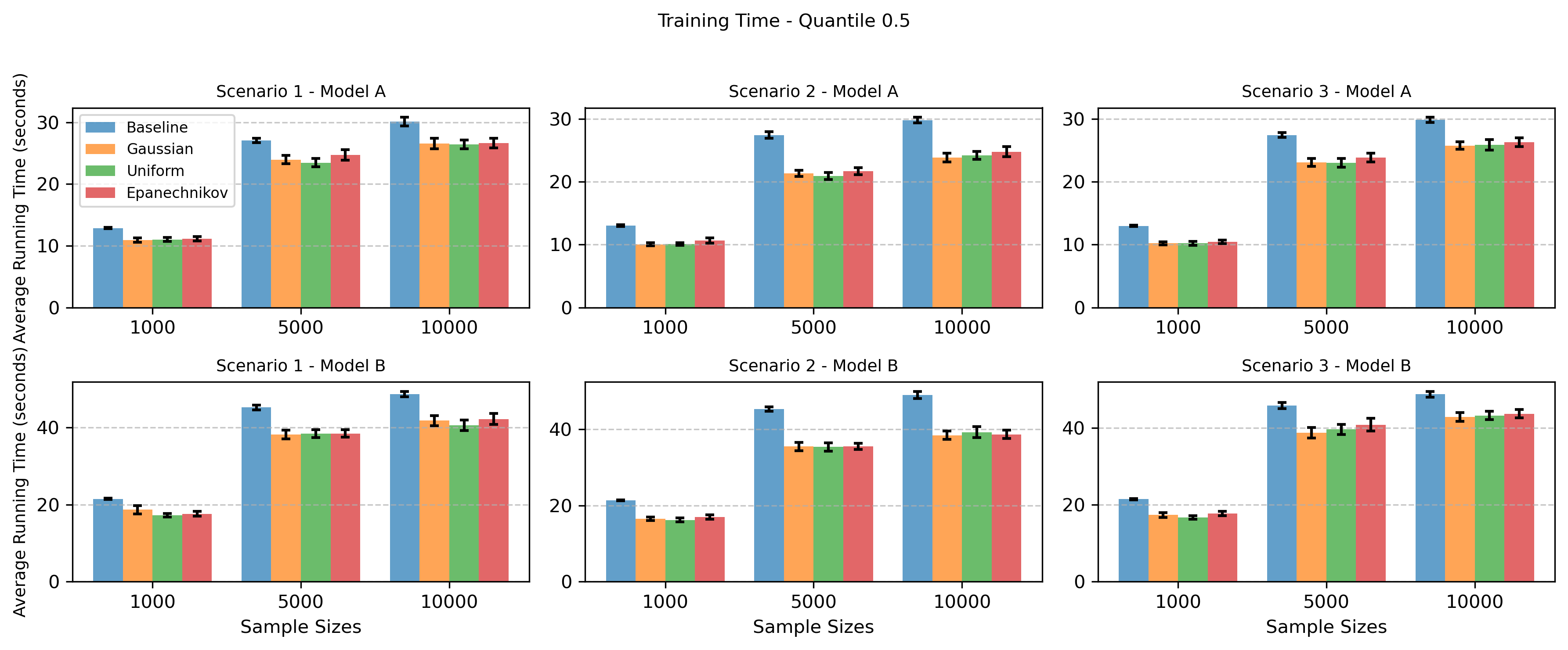}
    \caption{Bar chart with error bars with average training time over 50 trials under quantile level $\tau=0.5$. The error bars represent 95\% confidence intervals of training time for each setting.}
    \label{fig:time_tau=0.5}
\end{figure}
\begin{figure}[htpb]
    \centering
    \includegraphics[width=1\linewidth]{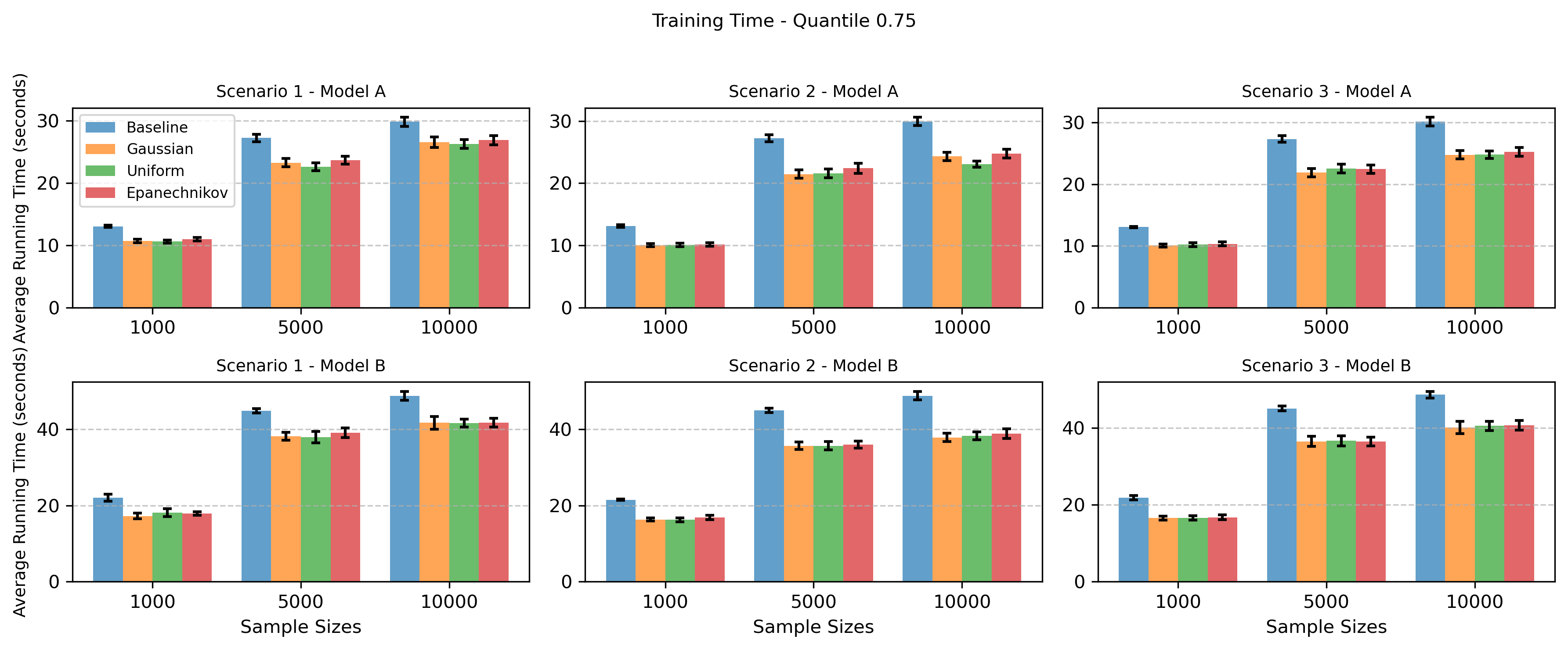}
    \caption{Bar chart with error bars with average training time over 50 trials under quantile level $\tau=0.75$. The error bars represent 95\% confidence intervals of training time for each setting.}
    \label{fig:time_tau=0.75}
\end{figure}
\begin{figure}[htpb]
    \centering
    \includegraphics[width=1\linewidth]{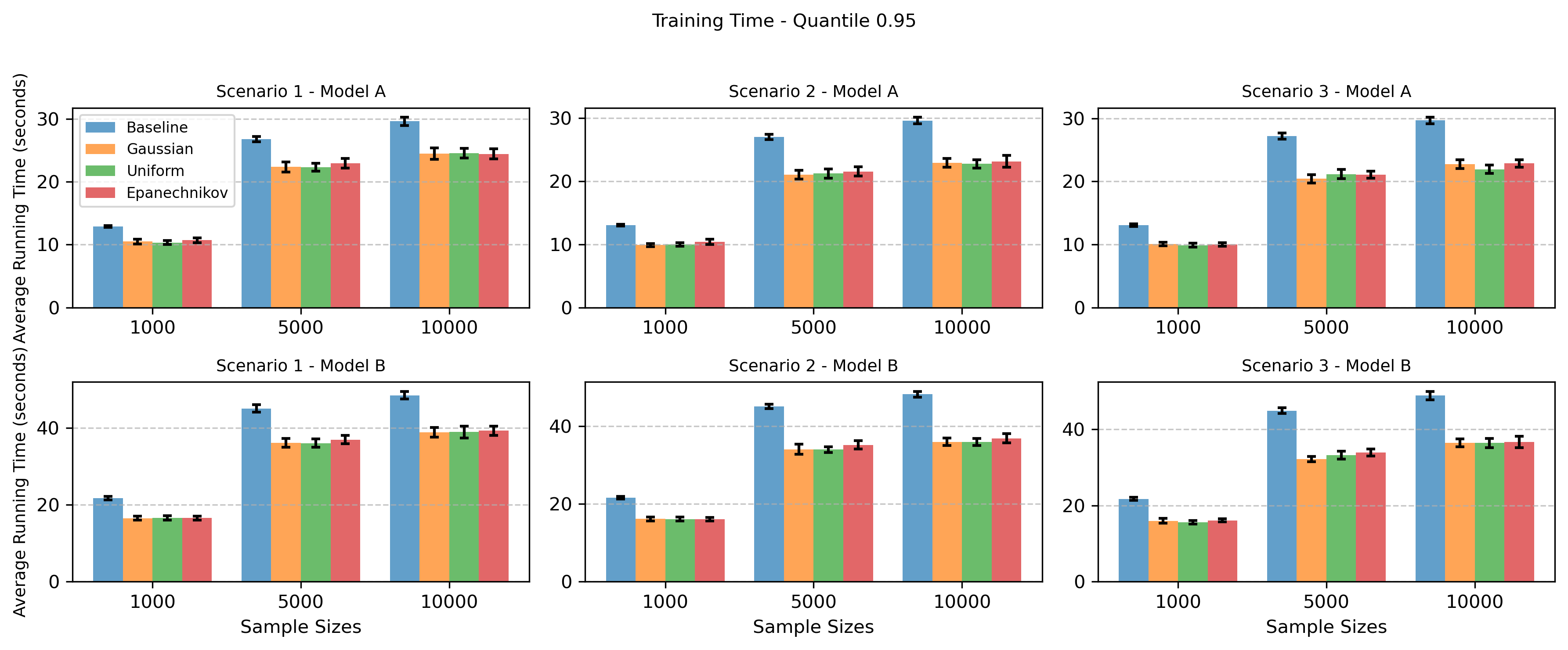}
    \caption{Bar chart with error bars with average training time over 50 trials under quantile level $\tau=0.95$. The error bars represent 95\% confidence intervals of training time for each setting.}
    \label{fig:time_tau=0.95}
\end{figure}

Figures \ref{fig:time_tau=0.05} and \ref{fig:time_tau=0.25}-\ref{fig:time_tau=0.95} show that our ConquerNet require 20\% less training time compared to the baseline method. Within the same scenario, model and sample size, the training times between 3 different kernels and between 5 different quantile levels are close to each other.
We can also conclude that shallower networks have shorter training times when the number of parameters is similar.





\end{document}